\journal{Journal of \LaTeX\ Templates}
\newtheorem{definition}{Definition}
\newtheorem{corollary}{Corollary}
\newtheorem{theorem}{Theorem}
\newtheorem{lemma}{Lemma}
\newtheorem{example}{Example}
\newenvironment{proof}{\noindent \textbf{Proof:}}{\hfill  $\boxempty$}
\newcommand{\powerset}[1]{\mathcal{P}(#1)}
\newcommand{\be}{\begin{itemize}}
\newcommand{\ee}{\end{itemize}}
\newcommand{\inputdomain}{{\rm D}}
\newcommand{\manipulation}{\delta}
\newcommand{\manipulationset}{\Delta}
\newcommand{\inputImage}{\alpha}
\newcommand{\distance}[2]{||#2||_{#1}}
\newcommand{\lipschitzConstant}{\hbar}
\newcommand{\natureNumber}{\mathbb{N}}
\newcommand{\commentout}[1]{}
\newcommand{\response}[1]{{#1}}
\newcommand{\instruction}{\psi}
\newcommand{\instructionset}{\Psi}
\newcommand{\maximumsaferadius}{{\tt MSR}}
\newcommand{\featurerobustness}{{\tt FR}}
\newcommand{\xfeaturerobustness}{{\tt xFR}}
\newcommand{\finitemaximumsaferadius}{{\tt FMSR}}
\newcommand{\finitefeaturerobustness}{{\tt FFR}}
\newcommand{\xfinitefeaturerobustness}{{\tt xFFR}}
\newcommand{\lowerbound}{\textsf{l}}
\newcommand{\upperbound}{\textsf{u}}
\newcommand{\depsilon}{d^{\epsilon}}
\newcommand{\tauimage}{G}
\begin{document}

\begin{frontmatter}

\title{A Game-Based Approximate Verification of Deep Neural Networks with Provable Guarantees}

\author[oxford]{Min Wu}
\ead{min.wu@cs.ox.ac.uk}

\author[georgia]{Matthew Wicker\fnref{fn1}}
\ead{matthew.wicker25@uga.edu}
\fntext[fn1]{Matthew conducted this research when he was a visiting undergraduate student at the University of Oxford.}

\author[oxford]{Wenjie Ruan}
\ead{wenjie.ruan@cs.ox.ac.uk}

\author[liverpool]{Xiaowei Huang}
\ead{xiaowei.huang@liverpool.ac.uk}

\author[oxford]{Marta Kwiatkowska\corref{cor1}}
\cortext[cor1]{Corresponding author}
\ead{marta.kwiatkowska@cs.ox.ac.uk}

\address[oxford]{University of Oxford, UK}
\address[georgia]{University of Georgia, USA}
\address[liverpool]{University of Liverpool, UK}

\begin{abstract}
Despite the improved accuracy of deep neural networks, the discovery of adversarial examples has raised serious safety concerns.
In this paper, we study two variants of pointwise robustness, the 
\emph{maximum safe radius} problem, which for a given input sample computes the minimum distance to an adversarial example, and the \emph{feature robustness} problem, which aims to quantify the robustness of individual features to adversarial perturbations. 
We demonstrate that, under the assumption of Lipschitz continuity, both problems can be approximated using finite optimisation by discretising the input space, and the approximation  has provable guarantees, i.e., the  error  is bounded. We then show that the resulting %finite
optimisation problems can be reduced to the solution of two-player turn-based %stochastic
games, where the first player selects features and the second perturbs the image within the feature. While the second player aims to minimise the distance to an adversarial example, depending on the optimisation objective the first player can be cooperative or competitive.
We employ an anytime approach to solve the games, 
in the sense of approximating the value of a game by monotonically improving its upper and lower bounds. 
The Monte Carlo tree search algorithm is applied to compute upper bounds for both games, and the Admissible A* and the Alpha-Beta Pruning algorithms are, respectively, used to compute lower bounds for the maximum safety radius and feature robustness games. 
When working on the upper bound of the maximum safe radius problem, our tool demonstrates competitive performance against existing adversarial example crafting algorithms. %, on networks trained on the benchmark datasets such as MNIST~\cite{MNIST} and CIFAR10~\cite{CIFAR10}.
Furthermore, we show how our framework can be deployed to evaluate pointwise robustness of neural networks in safety-critical applications such as traffic sign recognition in self-driving cars. 

\end{abstract}

\begin{keyword}
Automated Verification\sep Deep Neural Networks \sep Adversarial Examples \sep Two-Player Game
% \MSC[2010] 00-01\sep  99-00
\end{keyword}

\end{frontmatter}

\section{Introduction}

\newcommand{\DeepGame}{\mathsf{DeepGame}}
\newcommand{\terminate}{t}
\newcommand{\reward}{R}
\newcommand{\playerOne}{{\tt I}}
\newcommand{\playerTwo}{{\tt II}}
\newcommand{\opt}{{\tt opt}}

Deep neural networks (DNNs or networks, for simplicity) have been developed  for 
a variety of tasks, including malware detection~\cite{malware}, abnormal network activity detection~\cite{ryan:nips10}, and self-driving cars~\cite{NVIDIA,road-segmentation,traffic-classification-lecun}. A classification network $N$ can be  used as a decision-making algorithm: given an input $\inputImage$, it suggests a decision $N(\inputImage)$ among a set of possible decisions. While the accuracy of neural networks has greatly improved, matching the cognitive ability of humans~\cite{LBH2015}, they are susceptible to adversarial examples~\cite{Biggio2013,propertiesOfNeuralNetworks}. %,RWSHKK2018}. 
An adversarial example is an input which, though initially classified correctly, is misclassified after a minor, perhaps imperceptible, perturbation. 
Adversarial examples pose challenges for 
self-driving cars, where neural network solutions have been proposed for tasks such as end-to-end steering~\cite{NVIDIA}, road segmentation~\cite{road-segmentation}, and traffic sign classification~\cite{traffic-classification-lecun}. 
In the context of steering and road segmentation, an adversarial example may cause a car to steer off the road or drive into barriers, and misclassifying traffic signs may cause a vehicle to drive into oncoming traffic. 
Figure~\ref{fig:coverimage} shows an image of a traffic light correctly classified by a state-of-the-art network, which is then misclassified after only a few pixels have been changed.
Though somewhat artificial, since in practice the controller would rely on additional sensor input when making a decision, such cases strongly suggest that, before deployment in safety-critical tasks, DNNs' resilience (or robustness) to adversarial examples must be strengthened.

\begin{figure}
    \centering
    \includegraphics[width=0.6\textwidth]{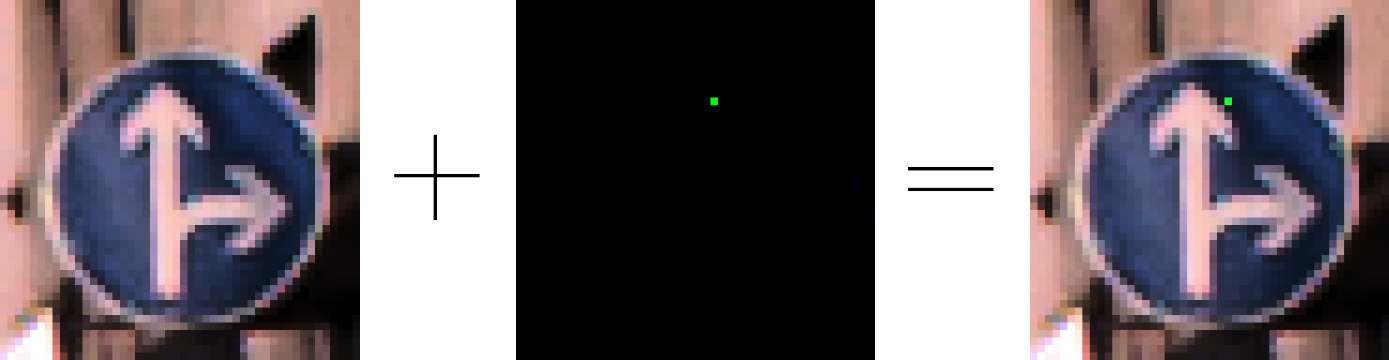}
    \caption{An adversarial example for a neural network trained on the GTSRB dataset. \response{After a slight perturbation of Euclidean distance 0.88}, the image classification changes from ``go \textbf{right} or straight'' to ``go \textbf{left} or straight''.}
    \label{fig:coverimage}
\end{figure}

Robustness of neural networks is an active topic of investigation and 
a number of approaches have been proposed to search for adversarial examples (see Related Work). They are based on computing the
gradients~\cite{FGSM}, along which a heuristic search moves; computing a Jacobian-based saliency map~\cite{JSMA}, based on which pixels are selected to be changed; transforming the existence of adversarial examples into an optimisation problem~\cite{CW-Attacks}, on which an optimisation algorithm can be applied; transforming the existence of adversarial examples into a constraint solving problem~\cite{KBDJK2017}, on which a constraint solver can be applied; or discretising the neighbourhood of a point and searching it exhaustively in a layer-by-layer manner~\cite{DLV}. 

In this paper, we propose a novel game-based approach for safety verification of DNNs. We consider two pointwise robustness  %verification 
problems, referred to as the  \emph{maximum safe radius} problem and  \emph{feature robustness} problem, respectively. The former %maximum safe radius problem 
aims to compute for a given input the minimum distance to an adversarial example, and therefore can be regarded as the computation of an \emph{absolute} safety radius, within which no adversarial example exists. The latter %feature robustness 
problem %is to study 
studies whether the crafting of adversarial examples can be controlled by restricting perturbations to only certain features (disjoint sets of input dimensions), and therefore can be seen as the computation of a \emph{relative} safety radius, within which the existence of adversarial examples is controllable. 

Both pointwise robustness problems are formally expressed in terms of non-linear optimisation, 
\response{which is computationally challenging for realistically-sized networks. We thus utilise Lipschitz continuity of DNN layers, which bounds the maximal rate of change of outputs of a function with respect to the change of inputs, as proposed for neural networks with differentiable layers in \cite{old-lip-1,old-lip-2}.
This enables safety verification by relying on Lipschitz constants to provide guaranteed bounds on DNN output \emph{for all} possible inputs. %without having to test the network at all possible input combinations.
We work with modern DNNs whose layers, e.g., ReLU, may not be differentiable, and %similarly
reduce the verification to 
finite optimisation. %checking finitely many input points, where the number of these points is inversely proportional to a (not necessarily tight) Lipschitz constant. 
%However, we show that under the assumption of Lipschitz continuity~\cite{RHK2018} they can be approximated using finite approximation. 
More precisely,} we prove that \response{under the assumption of Lipschitz continuity~\cite{RHK2018}} it is sufficient to consider a finite number of uniformly sampled inputs %(structured as grid points in the paper), 
when the distances between the inputs are small, and that this reduction has provable guarantees, in the sense of the error being bounded by the distance between sampled inputs.
%\response{The number of these inputs is inversely proportional to a (not necessarily tight) Lipschitz constant.}
%. The sufficiency is obtained by showing that the errors of using the verification results based on this finite number of inputs to approximate the original verification problems are bounded in the same scale as the distance between sampled inputs. That is, . 
%

%Second, we
We then show that the finite optimisation problems can be computed as the solution of 
%We formulate the process of crafting adversarial examples as 
two-player turn-based %stochastic 
games, where Player~$\playerOne$ selects features and Player~$\playerTwo$ then performs %atomic manipulations 
a perturbation within the selected features.  
%pixels within the selected features and a manipulation instruction. 
After both players have made their choices, %the input is modified according to the atomic manipulation, 
%and 
the input is perturbed and the game continues.
While Player~$\playerTwo$ aims to minimise the distance to an adversarial example, Player~$\playerOne$ can be \emph{cooperative} or \emph{competitive}.
%, or nature who samples the pixels according to the Gaussian mixture model. 
When it is cooperative, the optimal reward of  Player~$\playerOne$ is equal to the maximum safe radius. On the other hand, when it is competitive the optimal reward of Player~$\playerOne$ quantifies feature robustness. Finally, because the state space of the game models is intractable, we employ an anytime approach to compute the upper and lower bounds of Player~$\playerOne$ optimal reward. The anytime approach ensures that the bounds can be gradually, but strictly, improved so that they eventually converge. More specifically, we apply Monte Carlo tree search algorithm to compute the upper bounds for both games, and Admissible A* and Alpha-Beta Pruning, respectively, to compute the lower bounds for the games. %maximum safe radius game and the feature robustness game. 

%We show that, theoretically, the two-player game can converge to the optimal strategy, and that the optimal strategy represents a globally minimal adversarial image. We also consider safety guarantees for Lipschitz networks and identify conditions to ensure that no adversarial examples exist.

We implement the method in a software tool $\DeepGame$\footnote{The software package is available from  \url{https://github.com/TrustAI/DeepGame}}, %an extension of SafeCV of \cite{WHK2017}, 
and 
%, in which 
%a Monte Carlo tree search (MCTS) algorithm is employed to find asymptotically optimal strategies for both  players, with Player~$\playerTwo$ being a cooperator. The algorithm is 
%\emph{anytime}, meaning that it can be terminated with time-out bounds provided by the user and, when terminated, it returns the best strategies it has for both players. 
%
%Our tool can deploy various feature extraction or image segmentation techniques, in which regard, a saliency-guided $\mathsf{grey}$-box mechanism and a feature-guided $\mathsf{black}$-box procedure are adopted. 
%
conduct experiments on DNNs to show convergence of lower and upper bounds for the {maximum safe radius} and {feature robustness} problems. 
Our approach can be configured to work with a variety of feature extraction methods that partition the input, for example image segmentation, with simple adaptations. %assumes that the dimensions of the input sample are partitioned into disjoint subsets.
%Our approach requires partitioning an input sample with a number of dimensions into a set of disjoint features, each of which takes a subset of the dimensions. 
%This requirement is generic and is satisfied feature extraction techniques with simple adaptations. For example, 
For the image classification networks we consider in the experiments, we employ both the saliency-guided $\mathsf{grey}$-box approach adapted from \cite{RWSHKK2018} and the feature-guided $\mathsf{black}$-box method based on the SIFT object detection technique~\cite{SIFT}. 
 %Moreover, 
For the maximum safety radius problem, our experiments show that, on networks trained on the benchmark datasets such as MNIST~\cite{MNIST}, CIFAR10~\cite{CIFAR10} and GTSRB~\cite{GTSRB},
%despite the black-box setting, manipulations guided by a perception-based saliency distribution 
%the proposed approach 
the upper bound computation method is competitive 
with state-of-the-art heuristic methods (i.e., without provable guarantees) that rely on white-box saliency matrices or sophisticated optimisation procedures.
%The experiments on networks trained on benchmark datasets such as MNIST~\cite{MNIST} and CIFAR10~\cite{CIFAR10} demonstrate that, 
%even without the knowledge of the network and using relatively little time (1 minute for every image), 
%the algorithm, when working on the maximum safe radius problem, can achieve competitive performance against existing adversarial example crafting algorithms. 
%
Finally, to show that our framework is well suited to safety testing and decision support for deploying DNNs in safety-critical applications, we 
%include 
experiment
%s (first appeared in \cite{WHK2017}) 
on state-of-the-art networks, including the winner of the Nexar traffic light challenge~\cite{NexarData}. %, and a real-time object detection system YOLO.

The paper significantly extends work published in \cite{WHK2017}, where the game-based approach was first introduced for the case of cooperative games and evaluated on the computation of upper bounds for the maximum safety radius problem using the SIFT feature extraction method. 
In contrast, in this paper we additionally study feature robustness, generalise the game to allow for the competitive player, and develop algorithms for the computation of both lower and upper bounds. We also give detailed proofs of the theoretical guarantees and error bounds.

The structure of the paper is as follows. After introducing preliminaries in Section~\ref{sec:preliminaries}, we formalise the {maximum safety radius} and {feature robustness} problems %to be dealt with in this work 
in Section~\ref{sec:SafetyThms}. We present our game-based approximate verification approach and state 
%to the solutions of the two problems with provable 
the guarantees in Section~\ref{sec:approach}.
Algorithms and implementation are described in Section~\ref{sec:algorithms}, while experimental results are given in Section~\ref{sec:results}. We discuss the related work in Section~\ref{sec:related} and conclude the paper in Section~\ref{sec:concl}.

\section{Preliminaries}\label{sec:preliminaries}

\newcommand{\severity}{sev}
\newcommand{\expectation}{{\tt E}}
\newcommand{\GaussianMixture}{{\cal G}}
\newcommand{\feature}{\lambda}
\newcommand{\setoffeatures}{\Lambda}
\newcommand{\strategy}{\sigma}

Let $N$ be a neural network with a set $C$ of classes. Given an input $\inputImage$ and a class $c \in C$, we use $N(\inputImage,c)$ to denote the confidence (expressed as a probability value obtained from normalising the score) of $N$ believing  that $\inputImage$ is in class $c$. Moreover, we write $N(\inputImage) = \arg\max_{c\in C} N(\inputImage,c)$ for the class into which $N$ classifies $\inputImage$. 
We let $P_0$ be the set of input dimensions, $n = |P_0|$ be the number of input dimensions, and remark that without loss of generality the dimensions of an input are normalised as real values in $[0,1]$. The input domain is thus a vector space 
$$
\inputdomain = [0,1]^{n}.
$$
For image classification networks, the input domain $\inputdomain$ 
%is a vector space, which 
%in most cases 
can be represented as
$[0,1]_{[0,255]}^{w\times h\times ch}$, where $w,h,ch$ are the width, height, and number of channels of an image, respectively. That is, we have $P_0 = w\times h\times ch$.
% be the set of input dimensions. 
We may refer to an element in $w \times h$ as a \textit{pixel} and an element in $P_0$ as a \textit{dimension}. 
\response{We use $\inputImage[i]$ for $i\in P_0$ to denote the value of the $i$-th dimension of $\inputImage$.}

\subsection{Distance Metric and Lipschitz Continuity}

%Usually, a distance function is employed to compare inputs. Ideally, such a distance should reflect perceptual similarity between inputs, comparable to e.g., human perception for image classification networks. However, in practice $L_k$ distances are used instead,
%typically
%$L_0$ (Hamming distance), 
As is common in the field, we will work \response{with} $L_k$ distance functions to measure the distance between inputs,
denoted $\distance{k}{\inputImage-\inputImage'}$ with $k\geq 1$, and satisfying the standard axioms of a metric space:
%In the following, we write $\distance{k}{\inputImage_1-\inputImage_2}$ with $k\geq 1$ for the distance between two inputs $\inputImage_1$ and $\inputImage_2$ with respect to the $L_k$ measurement. 
%These distance metrics satisfy a few axioms which are usually needed for defining a metric space. 
\begin{itemize}
    \setlength\itemsep{0em}
    \item $\distance{k}{\inputImage-\inputImage'}\geq 0$ (non-negativity),
    \item $\distance{k}{\inputImage-\inputImage'}=0$ implies that $\inputImage = \inputImage'$ (identity of indiscernibles),
    \item $\distance{k}{\inputImage-\inputImage'}= \distance{k}{\inputImage'-\inputImage}$ (symmetry),
    \item $\distance{k}{\inputImage-\inputImage''} \leq \distance{k}{\inputImage-\inputImage'} + \distance{k}{\inputImage'-\inputImage''}$ (triangle inequality).
\end{itemize}
While we focus on $L_k$ distances, including
$L_1$ (Manhattan distance), $L_2$ (Euclidean distance), and $L_\infty$ (Chebyshev distance), we emphasise that 
the results of this paper hold for any distance metric % satisfying these axioms. 
%We also work with $L_k$ distances but emphasise that our method 
and can be adapted to image similarity distances such as SSIM~\cite{WSB2003}.
\response{Though our results do not generalise to $L_0$ (Hamming distance), we utilise it for the comparison with existing approaches to generate adversarial examples, i.e., without provable guarantees (Section~\ref{subsec:L0Comparison}).}
%Though our results do not generalise to the $L_0$ distance, we will sometime use it for illustrative or comparison purposes.

Since we work with pointwise robustness~\cite{SZSBEGF2014}, we need to consider the \emph{neighbourhood} of a given input.
\begin{definition}\label{def:inputregion}
Given an input $\inputImage$, a distance function $L_k$, and a distance $d$, we define 
the \emph{d-neighbourhood} $\eta(\inputImage,L_k,d)$ of $\inputImage$ wrt $L_k$
$$\eta(\inputImage,L_k,d)=\{\inputImage' ~|~ \distance{k}{\inputImage'-\inputImage} \leq d\}$$
as the set of inputs whose distance to $\inputImage$ is no greater than $d$ with respect to $L_k$.  
\end{definition}

The $d$-neighbourhood of $\inputImage$ is simply the $L_k$ ball with radius $d$. For example, 
%$\eta(\inputImage,0,d)$ includes those inputs which are different from the original input $\inputImage$ in no greater than $d$ dimensions, i.e., $\distance{0}{\inputImage'-\inputImage} = |\{j~|~ \inputImage(j) \neq \inputImage'(j), j \in P_0\}|$, and 
$\eta(\inputImage,L_1,d)$ includes those inputs such that the sum of the differences of individual dimensions from the original input $\inputImage$ is no greater than $d$, i.e., 
%$\distance{1}{\inputImage'-\inputImage} = \sum_{j\in P_0}| \inputImage(j) - \inputImage'(j)|$
\response{$\distance{1}{\inputImage'-\inputImage} = \sum_{i\in P_0}| \inputImage[i] - \inputImage'[i]|$}. Furthermore, we have
\response{$\distance{2}{\inputImage'-\inputImage} = \sqrt{\sum_{i\in P_0} (\inputImage[i] - \inputImage'[i])^2}$}
%$\distance{2}{\inputImage'-\inputImage} = \sqrt{\sum_{j\in P_0} (\inputImage(j) - \inputImage'(j))^2}$
and
\response{$\distance{\infty}{\inputImage'-\inputImage} = \max_{i\in P_0} |\inputImage[i] - \inputImage'[i]|$}.
%$\distance{\infty}{\inputImage'-\inputImage} = \max_{j\in P_0} |\inputImage(j) - \inputImage'(j)|$. 
We will sometimes work with $\depsilon$-neighbourhood, where, given a number $d$, $\depsilon = d + \epsilon$ for any real number $\epsilon>0$ denotes a number greater than $d$.

We will restrict the neural networks we consider to those that satisfy the \emph{Lipschitz continuity} assumption, noting that 
%Let $N$ be a Lipschitz network as in Definition~\ref{def:Lipschitz}, which is without loss of generality.
% As shown in \cite{RHK2018}, many neural network layers represent  Lipschitz continuous function. These include all the layers used in e.g., state-of-the-art image classification networks. 
%Note that 
all networks whose inputs are bounded, including all image classification networks we studied, are Lipschitz continuous. {Specifically, it is shown in~\cite{SZSBEGF2014,RHK2018} that most known types of layers, including fully-connected, convolutional, ReLU, maxpooling, sigmoid, softmax, etc., are Lipschitz continuous.}  

\begin{definition}\label{def:Lipschitz}
Network $N$ is a Lipschitz network with respect to distance function $L_k$ if there exists a constant $\lipschitzConstant_c > 0 $ for every class $c\in C$ such that, for all $\inputImage,\inputImage'\in \inputdomain$, we have 
\begin{equation}
|N(\inputImage', c)-N(\inputImage,c)| \leq \lipschitzConstant_c \cdot \distance{k}{\inputImage' - \inputImage},
\end{equation}
\response{where $\lipschitzConstant_c$ is the \emph{Lipschitz constant} for class $c$.}
\end{definition}

%Let $\lipschitzConstant_i$ for $c\in C$ be a Lipschitz constant for class $c$.

\subsection{Input Manipulations} 
%\emph{Input Perturbations} 
To study the crafting of adversarial examples, we require the following operations for manipulating inputs. 
%Given a set $\setoffeatures'\subseteq \setoffeatures(\inputImage)$ of features, we use $X_{\setoffeatures'}$ to range over the sets of subsets of dimensions in $\setoffeatures'$. 
%We write $\inputImage(x,y,z)$ for the value of the $z$-channel {(typically RGB or grey-scale values)} of the pixel positioned at $(x,y)$ on the image $\inputImage$. 
Let 
%$\instructionset=\{+,-\}$ be a set of manipulation instructions, and 
$\tau>0$ be a positive real number representing the manipulation magnitude, then we can define \emph{input manipulation} operations 
\response{
$\manipulation_{\tau,X,\instruction}: \inputdomain \rightarrow \inputdomain$ for $X\subseteq P_0$, a subset of input dimensions,
and $\instruction:P_0\rightarrow \natureNumber$,  an instruction function by:  %
\begin{equation}
    \manipulation_{\tau,X,\instruction}(\inputImage[i]) =
    \begin{cases}
        \inputImage[i] + \instruction(i) * \tau,    & \quad \text{if } i\in X \\
        \inputImage[i],                             & \quad \text{otherwise}
    \end{cases}
\end{equation}
for all $i\in P_0$.
Note that if the values are bounded, e.g., in the interval $[0,1]$, then  $\manipulation_{\tau,X,\instruction}(\inputImage[i])$ needs to be restricted to be within the bounds.  
Let $\instructionset$ be the set of possible instruction functions.}

\commentout{
$$
\manipulation_{\tau,X,\instruction}(\inputImage)(j) = \left\{
\begin{array}{ll}
\inputImage(j) + \instruction(j) * \tau, & \text{if } j\in X %\text{ and }  = + 
\\
%\inputImage(j) - \tau, & \text{if } j\in X \text{ and } \instruction(j) = - \\
\inputImage(j),  & \text{otherwise}\\
\end{array}
\right.
$$
for all $j\in P_0$.
%for all pixels $(x,y)$ and channels $z\in \{1,2,3\}$.
Note that, if the values are bounded, e.g., $[0,1]$, then  $\manipulation_{\tau,X,\instruction}(\inputImage)(j)$ needs to be restricted to be within the bounds.  
Let $\instructionset$ be the set of possible instruction functions.
}

%For simplicity, in our experiments and comparisons we allow a manipulation to choose either  the upper bound or the lower bound with respect to the instruction $\instruction$. For example, in Figure~\ref{fig:coverimage}, the actual  manipulation considered is to make the manipulated dimensions choose value 1. 

The following lemma shows that input manipulation operations allow one to map one input to another %is a general way of defining how to reach from an input to another input 
by changing the values of input dimensions, regardless of the distance measure $L_k$. 

\begin{lemma}
Given any two inputs $\inputImage_1$ and $\inputImage_2$, and a  distance $\distance{k}{\inputImage_1 - \inputImage_2}$ for any measure $L_k$, there exists a magnitude $\tau>0$, an instruction function $\instruction \in \instructionset$, and a subset $X \subseteq P_0$ of input dimensions, such that 
%and a finite sequence of manipulations $\manipulation_{\tau,X_1,\instruction_1},...,\manipulation_{\tau,X_m,\instruction_m}$ such that 
%\begin{itemize}
%    \item $X_j\subseteq P_0$ and $\instruction_j \in {\cal I}$ for $1\leq j\leq m$, and 
    %\item 
    $$\distance{k}{\inputImage_2 - \manipulation_{\tau,X,\instruction}(\inputImage_1)} \leq \epsilon$$ 
%\end{itemize}
where $\epsilon>0$ is an error bound. 
\end{lemma}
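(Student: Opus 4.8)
The plan is to show that any target input $\inputImage_2$ can be reached from a source input $\inputImage_1$ by a single input manipulation operation, up to an arbitrarily small error $\epsilon$. The key observation is that the manipulation operator $\manipulation_{\tau,X,\instruction}$ adds $\instruction(i)\cdot\tau$ to each coordinate $i\in X$, so the per-dimension displacement it can realise is any integer multiple of the common magnitude $\tau$. Since we are free to choose $\tau$, $X$, and $\instruction$, the strategy is: pick $X = P_0$ (allowing all dimensions to move), and for each dimension $i$ approximate the required real-valued difference $\inputImage_2[i]-\inputImage_1[i]$ by a multiple of $\tau$ via an appropriate choice of $\instruction(i)\in\natureNumber$. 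Because the displacement per dimension is quantised in steps of $\tau$, driving $\tau$ small forces the residual error in each coordinate below any prescribed threshold, and hence the aggregate $L_k$ distance below $\epsilon$.

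\medskip

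\textbf{The main steps, in order.} First I would fix an arbitrary $\epsilon>0$ and choose a per-dimension error budget, e.g. requiring $|\manipulation_{\tau,P_0,\instruction}(\inputImage_1[i]) - \inputImage_2[i]| \leq \epsilon'$ for each $i$, where $\epsilon'$ is chosen small enough (depending on $n$ and $k$) that the $L_k$-norm of the residual vector is at most $\epsilon$; concretely, by the elementary bound $\distance{k}{v}\leq n^{1/k}\max_i|v[i]|$ for $v\in\realNumber^n$, taking $\epsilon' = \epsilon / n^{1/k}$ suffices (and $\epsilon'=\epsilon/n$ handles the $L_1$ and $L_\infty$ cases uniformly). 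Second, I would set $\tau := \epsilon'$ and, for each dimension $i\in P_0$, define the instruction value by rounding: $\instruction(i) := \lceil (\inputImage_2[i]-\inputImage_1[i])/\tau \rceil$, so that $|\inputImage_1[i] + \instruction(i)\tau - \inputImage_2[i]| < \tau = \epsilon'$. Third, I would verify that $\instruction(i)\in\natureNumber$; since the rounded quantity may be negative when $\inputImage_2[i]<\inputImage_1[i]$, the instruction function must be permitted to take values in $\mathbb{Z}$ (or one handles the sign of the perturbation direction separately), which I note below as the one subtlety to address. Finally, I would assemble these per-dimension bounds into the claimed inequality $\distance{k}{\inputImage_2-\manipulation_{\tau,P_0,\instruction}(\inputImage_1)}\leq \epsilon$ and conclude.

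\medskip

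\textbf{The main obstacle} is the sign constraint imposed by the codomain of the instruction function, $\instruction:P_0\rightarrow\natureNumber$. As written, a single manipulation with positive magnitude $\tau$ and natural-number coefficients can only \emph{increase} coordinate values, whereas matching an arbitrary target $\inputImage_2$ generally requires decreasing some coordinates. I would resolve this either by reading $\natureNumber$ as permitting the requisite signed increments (treating $\tau$ together with $\instruction(i)$ as encoding a signed displacement), or by observing that the statement only asserts existence of \emph{some} $\tau$, $\instruction$, and $X$, so one may absorb the direction of change into the chosen instruction semantics. A secondary, minor point is the boundary clipping to $[0,1]$ remarked after the definition of $\manipulation_{\tau,X,\instruction}$: since both $\inputImage_1$ and $\inputImage_2$ lie in $\inputdomain=[0,1]^n$, the rounded target values stay within $\epsilon'$ of a point in $[0,1]$, so clipping changes each coordinate by at most $\epsilon'$ and does not affect the final bound after a harmless adjustment of constants.
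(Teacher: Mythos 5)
Your proposal is correct, but there is nothing in the paper to compare it against: the paper states this lemma \emph{without proof}, offering only the remark that ``any distance can be implemented through an input manipulation with an error bound $\epsilon$'' and that the bound is needed because $\inputdomain=[0,1]^n$ is a bounded real-valued space. Your argument supplies exactly the missing formalisation of that intuition: take $X=P_0$, set $\tau$ equal to a per-dimension budget $\epsilon'$ chosen via the elementary bound $\distance{k}{v}\leq n^{1/k}\max_i |v[i]|$, and round each required displacement $(\inputImage_2[i]-\inputImage_1[i])/\tau$ to an integer instruction value; the residual per coordinate is then below $\tau$, the clipping to $[0,1]$ can only move the manipulated coordinate closer to $\inputImage_2[i]\in[0,1]$, and the aggregate $L_k$ error is at most $\epsilon$. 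The one obstacle you flag is genuine and worth emphasising: as literally written, $\instruction:P_0\rightarrow\natureNumber$ with $\tau>0$ can only increase coordinates, under which the lemma is actually \emph{false} (take $\inputImage_1$ componentwise larger than $\inputImage_2$). Your resolution---reading the instruction values as signed integers---is not merely a convenient interpretation but the only one consistent with the rest of the paper, since Definition~\ref{def:atomicManipulation} requires $\instruction_1(i)\in\{-1,+1\}$ and the game transition in Equation~(\ref{eqn-20}) likewise uses instructions $\instruction:P_0\rightarrow\{-1,+1\}$; the codomain $\natureNumber$ in the definition of input manipulations is evidently a typo for $\mathbb{Z}$. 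In short, your proof is sound and fills a gap the paper leaves open, while also surfacing an inconsistency in the paper's own definitions.
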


Intuitively, any distance can be implemented through an input manipulation with an error bound $\epsilon$.
\response{
The error bound $\epsilon$ is needed because input $\alpha \in \inputdomain = [0,1]^{n}$ is bounded, and thus reaching another precise input point via a manipulation is difficult when each input dimension is a real number.}

%
%First of all, we need to define 
We will also distinguish a subset of \emph{atomic} input manipulations, each of which changes a single dimension for a single magnitude. 
\begin{definition}\label{def:atomicManipulation}
Given a set $X$, we let $\manipulationset(X)$ be the set of \emph{atomic} input manipulations $\manipulation_{\tau,X_1,\instruction_1}$ such that 
\begin{itemize}
    \setlength\itemsep{0em}
    \item $X_1\subseteq X$ and $|X_1| = 1$, and 
    \item $\instruction_1(i)\in \{-1,+1\}$ for all $i\in P_0$. 
            %$\instruction_1(j)\in \{-1,+1\}$ for all $j\in P_0$. 
\end{itemize}
%We call them  input manipulations. 
\end{definition}

%Based on this, we have the following lemma, which states that every input manipulation $\manipulation_{\tau,X,\instruction}$ can be implemented with a set of atomic input manipulations in $\manipulationset(X)$. 
\begin{lemma}\label{lemma:atomicinput}
Any input manipulation $\manipulation_{\tau,X,\instruction}(\inputImage)$ for some $X$ and $\instruction$ can be implemented with a finite sequence of input manipulations  $\manipulation_{\tau,X_1,\instruction_1}(\inputImage), ..., \manipulation_{\tau,X_m,\instruction_m}(\inputImage) \in \manipulationset(X)$.
\end{lemma}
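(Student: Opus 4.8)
The plan is to decompose $\manipulation_{\tau,X,\instruction}$ coordinate by coordinate, exploiting the fact that it acts independently on each dimension. First I would read off from the defining equation of $\manipulation_{\tau,X,\instruction}$ that it alters only the dimensions in $X$, replacing $\inputImage[i]$ by $\inputImage[i]+\instruction(i)\cdot\tau$ (subsequently restricted to $[0,1]$) for $i\in X$, and leaving every dimension outside $X$ fixed. By Definition~\ref{def:atomicManipulation}, an atomic manipulation in $\manipulationset(X)$ selects a single dimension $i\in X$ (since $|X_1|=1$) and shifts it by exactly $+\tau$ or $-\tau$ according to whether the instruction value on $i$ is $+1$ or $-1$. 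Here I interpret the phrase ``implemented with a finite sequence'' as functional composition of the listed manipulations applied in order to $\inputImage$.

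The construction itself is straightforward: for each dimension $i\in X$ in turn, I would emit a block of $|\instruction(i)|$ atomic manipulations, each targeting the singleton $\{i\}\subseteq X$ and carrying the direction $\mathrm{sign}(\instruction(i))\in\{-1,+1\}$ on coordinate $i$. Concatenating these blocks over all $i\in X$ produces a sequence of length $m=\sum_{i\in X}|\instruction(i)|$, which is finite because $X$ is a finite set and each $\instruction(i)$ is a finite value. Since every atomic manipulation used has its singleton contained in $X$, no coordinate outside $X$ is ever touched, matching $\manipulation_{\tau,X,\instruction}$ on those coordinates trivially.

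Correctness then splits into two independent verifications. For \emph{commutativity across dimensions}, I would observe that atomic manipulations acting on distinct coordinates modify disjoint entries of the vector, so the value at each coordinate $i\in X$ after the whole composition is determined solely by the block for $i$ applied to $\inputImage[i]$, regardless of how the blocks are ordered. For \emph{collapse within a dimension}, I would fix $i$ and argue that the composition of $|\instruction(i)|$ single-step clamped shifts, all in the common direction $\mathrm{sign}(\instruction(i))$, equals the single clamped shift by $\instruction(i)\cdot\tau$; this is a short induction on the block length.

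I expect the clamping to $[0,1]$, applied after \emph{every} atomic step, to be the main obstacle, since a priori clamping repeatedly might differ from clamping only once at the end. The observation that resolves it is monotonicity together with the fact that all steps in a block share a single direction: writing the restriction to $[0,1]$ as $x\mapsto\min(\max(x,0),1)$, the inductive step reduces to checking that $\min(\min(\inputImage[i]+(k-1)\tau,1)+\tau,\,1)=\min(\inputImage[i]+k\tau,1)$ (and the symmetric lower-bound identity), which holds because once a step saturates at an endpoint all later same-direction steps leave it there, exactly as the aggregate step would. Hence the two results agree coordinatewise, and therefore as vectors, which establishes the lemma.
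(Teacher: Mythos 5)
Your proof is correct. There is nothing in the paper to compare it against: the authors state Lemma~\ref{lemma:atomicinput} without proof, treating it as immediate from the definitions and remarking only that the implementing sequence need not be unique. Your argument supplies exactly the details they leave implicit, and it is the natural construction: one block of $|\instruction(i)|$ unit steps per dimension $i\in X$, commutativity across blocks because they touch pairwise disjoint coordinates, and an induction collapsing each same-direction block into a single shift by $\instruction(i)\cdot\tau$. You also correctly isolated the one genuinely delicate point — that clamping to $[0,1]$ after \emph{every} atomic step agrees with clamping once at the end — and your resolution is sound: within a block all steps share one direction, so an endpoint, once reached, is absorbing, which is precisely what the identity $\min(\min(\inputImage[i]+(k-1)\tau,1)+\tau,1)=\min(\inputImage[i]+k\tau,1)$ and its lower-bound twin express. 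Two minor remarks. First, the paper types $\instruction$ as $P_0\rightarrow\natureNumber$ while atomic instructions take values in $\{-1,+1\}$; your use of $|\instruction(i)|$ and $\mathrm{sign}(\instruction(i))$ silently generalises to integer-valued instructions, which is evidently what the authors intend, and your empty-block convention disposes of $\instruction(i)=0$ correctly. Second, since Definition~\ref{def:atomicManipulation} requires an atomic instruction to be defined on all of $P_0$, you should say (in one line) that its values off the selected singleton are irrelevant and may be fixed arbitrarily in $\{-1,+1\}$; this costs nothing and closes the last formal gap.
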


While the existence of a sequence of atomic manipulations implementing a given manipulation is determined, there may exist multiple sequences. On the other hand, from a given sequence of atomic manipulations we can construct a single input manipulation %in the straightforward way 
by sequentially applying the atomic manipulations.
%by sequentially applying the atomic input manipulations.

\iffalse
\subsection{Adversarial Examples and Safety}

Next we define adversarial examples, as well as what we mean by targeted and non-targeted safety.

\begin{definition}\label{def:constraints}
Given an input $\inputImage\in\inputdomain$, a distance measure $L_k$ for some $k\geq 0$, and a distance $d$, an \emph{adversarial example} $\inputImage'$ of class $c$ is such that 
$\inputImage'\in \eta(\inputImage,L_k,d)$, 
$N(\inputImage)\neq N(\inputImage')$, and 
$N(\inputImage')=c$. 
Moreover, we write $adv_{k,d}(\inputImage,c)$ for the set of adversarial examples of class $c$ and let $adv_{k,d}(\inputImage)=\bigcup_{c\in C, c\neq N(\inputImage)}adv_{k,d}(\inputImage,c)$. A \emph{targeted safety} of class $c$ is defined as $adv_{k,d}(\inputImage,c)=\emptyset$, and a \emph{non-targeted safety} is defined as $adv_{k,d}(\inputImage)=\emptyset$.  
\end{definition}

In the following formalisation, we consider targeted safety of a fixed input $\inputImage$ and a fixed class $c \neq N(\inputImage)$ for a network $N$, and still refer to the adversarial example of class $c$ as an adversarial example. The case of non-targeted safety largely follows the similar formalisation. 
%Our experimental results will include both cases. 
\fi

\subsection{Feature-Based Partitioning}

%It has been argued, in e.g.~\cite{CISZ2008} for natural images, that 
Natural data, for example natural images and sounds, forms a high-dimensional manifold, which embeds tangled manifolds to represent their features~\cite{Carlsson2008}. 
Feature manifolds usually have lower dimensions than the data manifold.
%, and  a classification algorithm is to separate a set of tangled manifolds.
%
Intuitively, the set of features form a partition of the input dimensions $P_0$.
In this paper, we use a variety of feature extraction methods to partition the set $P_0$ into disjoint subsets.  
 %and assign each subset a value to denote its prior significance. 

\begin{definition}\label{def:features}
%A feature $\feature$ is a subset of the input dimensions $P_0$.  
Let $\feature$ be a feature of an input $\inputImage\in\inputdomain$, then we use $P_{\feature}\subseteq P_0$ to denote the dimensions represented by $\feature$. Given an input $\inputImage$, 
a \emph{feature extraction} function  $\setoffeatures$ maps an input $\inputImage$ into a set of features $\setoffeatures(\inputImage)$ such that (1) $P_0 = \bigcup_{\feature\in \setoffeatures(\inputImage)} P_\feature$, and (2) $P_{\feature_i} \cap P_{\feature_j} = \emptyset $ for any $\feature_i,\feature_j \in \setoffeatures(\inputImage)$ with $i \neq j$. 
\end{definition}

We remark that our technique is not limited to image classification networks and is able to work with general classification tasks, as long as there is a suitable feature extraction method that generates a partition of the input dimensions. 
In our experiments we focus on image classification for illustrative purposes and to enable better comparison, and employ \emph{saliency-guided} $\mathsf{grey}$-box and \emph{feature-guided} $\mathsf{black}$-box approaches to extract features, described in Section~\ref{sec:approach}.

\section{Problem Statement}
\label{sec:SafetyThms}

In this paper we focus on \emph{pointwise robustness}~\cite{SZSBEGF2014}, defined as the invariance of the network's classification over a small neighbourhood of a given input. This is a key concept, which also allows one to define robustness as a network property, by averaging with respect to the distribution of the test data set. Pointwise robustness can be used to define \emph{safety} of a classification decision for a specific input, understood as the non-existence of an adversarial example in a small neighbourhood of the input. We work with this notion and consider two problems for quantifying the robustness of the decision, the computation of the \emph{maximum safe radius} and \emph{feature robustness}, which we introduce next.

%\subsection{Adversarial Examples and Safety}

First we recall the concept of an \emph{adversarial example}, as well as what we mean by \emph{targeted} and \emph{non-targeted safety}. 

\begin{definition}\label{def:constraints}
Given an input $\inputImage\in\inputdomain$, a distance measure $L_k$ for some $k\geq 0$, and a distance $d$, an \emph{adversarial example} $\inputImage'$ of class $c$ is such that 
$\inputImage'\in \eta(\inputImage,L_k,d)$, 
$N(\inputImage)\neq N(\inputImage')$, and 
$N(\inputImage')=c$. 
Moreover, we write $adv_{k,d}(\inputImage,c)$ for the set of adversarial examples of class $c$ and let $$adv_{k,d}(\inputImage)=\bigcup_{c\in C, c\neq N(\inputImage)}adv_{k,d}(\inputImage,c).$$ A \emph{targeted safety} of class $c$ is defined as $adv_{k,d}(\inputImage,c)=\emptyset$, and \emph{non-targeted safety} is $adv_{k,d}(\inputImage)=\emptyset$.  
\end{definition}

%In the following formalisation, we consider targeted safety of a fixed input $\inputImage$ and a fixed class $c \neq N(\inputImage)$ for a network $N$, and still refer to the adversarial example of class $c$ as an adversarial example. The case of non-targeted safety largely follows the similar formalisation. 
The following formalisation focuses on targeted safety of a fixed input $\inputImage$ and a fixed class $c \neq N(\inputImage)$ for a network $N$. The case of non-targeted safety (misclassification into class other than $c$) is similar.

\begin{figure}[t]
	\centering
	\includegraphics[width=0.75\linewidth]{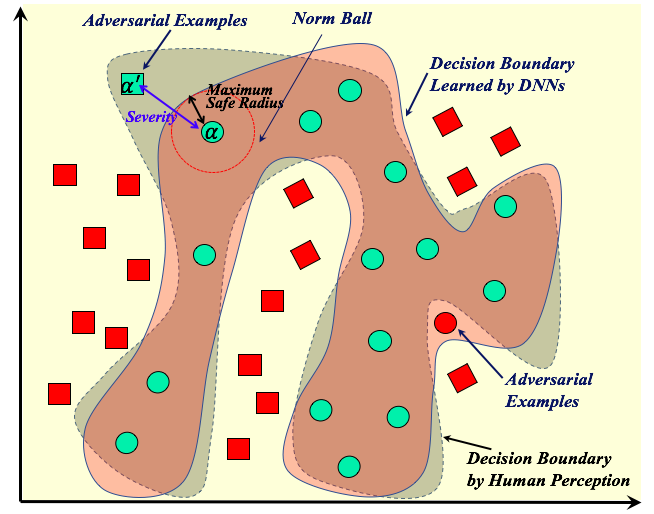}
	\caption{The {\em Maximum Safe Radius} ($\maximumsaferadius$) problem aims to quantify the minimum distance from an original image $\inputImage$ to an adversarial example, equivalent to finding the radius of a maximum safe norm ball. The solid line represents the classification boundary learned by a DNN, while the dashed line is the decision boundary. % that is coincident with human visual perception. 
	Adversarial examples tend to lie where the decision and classification boundaries do not align.
	%The safety issue raised from DNNs lies on the non-overlapping area of decision boundaries between a trained DNN and human visual perception. 
	Intuitively, finding an adversarial example (green square) %in the figure, like many existing works did, 
	can only provide a loose upper bound of $\maximumsaferadius$. %Instead, this paper investigates a more %changeable yet 
	%fundamental problem - how to approximate the {\em true} $\maximumsaferadius$ distance with provable guarantees.
}\label{fig:LeastSeverity}
\end{figure}

\subsection{The Maximum Safe Radius Problem}

%In this section, we formalise two problems related to the safety (or robustness) of DNNs. 
%
%Notationally, 
%We define the following two problems in terms of the $L_k$ distance and 
\commentout{
We regard the distance to an adversarial example as a measure of its severity, i.e., let 
\begin{equation}
 \distance{k}{\inputImage - \inputImage'}
\end{equation} be the \emph{severity} of the adversarial example $\inputImage'$ against the original image $\inputImage$.}
Given a targeted safety problem for $\inputImage$, we aim to compute the distance $\distance{k}{\inputImage - \inputImage'}$ to the nearest adversarial example within the $d$-neighbourhood of $\inputImage$, or in other words the radius of the maximum safe ball, illustrated in Figure~\ref{fig:LeastSeverity}.

%Given a number $d$, we use $\depsilon = d + \epsilon$ for any real number $\epsilon>0$ to denote a number greater than $d$. 

%The first problem is to compute the \emph{maximum safe radius} among all adversarial examples. 
%to this problem. 
%graphical explanation.

\begin{definition}[Maximum Safe Radius] \label{def:objective}
%  Among all adversarial examples in the set $adv_{k,d}(\inputImage,c)$,
 %   (or $adv_{k,d}(\inputImage)$), 
The \emph{maximum safe radius} problem is to compute the minimum distance from the original input $\inputImage$ to an adversarial example, i.e., 
\begin{equation}\label{equ:lsobjective}
\maximumsaferadius(k,d,\inputImage,c) = \min_{\inputImage'\in \inputdomain} \{ \distance{k}{\inputImage - \inputImage'} \mid \inputImage' \in adv_{k,d}(\inputImage,c)\}
% (\text{or } adv_{k,d}(\inputImage))\}
\end{equation}
If $adv_{k,d}(\inputImage,c)=\emptyset$, we let $\maximumsaferadius(k,d,\inputImage,c)  = \depsilon $. 
\end{definition}
Intuitively, $\maximumsaferadius(k,d,\inputImage,c)$ represents an \emph{absolute} safety radius within which all inputs are safe. In other words, within a %severity
distance of less than $\maximumsaferadius(k,d,\inputImage,c)$, no adversarial example is possible. 
When no adversarial example can be found within radius $d$, i.e., $adv_{k,d}(\inputImage,c)=\emptyset$, the maximum safe radius cannot be computed, but is  definitely greater than $d$. Therefore, we let $\maximumsaferadius(k,d,\inputImage,c) = \depsilon $. 

Intuitively, finding an adversarial example %in the figure, like many existing works did, 
	can only provide a loose upper bound of $\maximumsaferadius$. Instead, this paper investigates a more %changeable yet 
	fundamental problem -- how to approximate the {\em true} $\maximumsaferadius$ distance with provable guarantees.

\paragraph{Approximation Based on Finite Optimisation}

Note that the sets $adv_{k,d}(\inputImage,c)$ and $adv_{k,d}(\inputImage)$ of adversarial examples can be infinite. 
We now present a discretisation method that allows us to approximate the maximum safe radius using finite optimisation, and show that such a reduction has provable guarantees, provided that the network is Lipschitz continuous. %~\cite{RHK2018}.
\response {Our approach proceeds by constructing a finite `grid' of points in the input space. Lipschitz continuity enables us to reduce the verification problem to manipulating just the grid points, through which we can bound the output behaviour of a DNN on the whole input space, since Lipschitz continuity ensures that the network behaves well within each cell. The number of grid points is inversely proportional to %dependent on 
the Lipschitz constant. % and the error bound depends on the number of dimensions.
However, estimating a tight Lipschitz constant is difficult, and so, rather than working with the Lipschitz constant directly, we assume the existence of a (not necessarily tight) Lipschitz constant and work instead with a chosen fixed magnitude of an input manipulation, $\tau \in (0,1]$. %, with the intuition that a small $\tau$ corresponds to a large Lipschitz constant. 
We show how to determine the largest $\tau$ for a given Lipschitz network and give error bounds for the computation of $\maximumsaferadius$ that depend on $\tau$. 
We discuss how Lipschitz constants can be estimated in Section \ref{lip-discussion}  and Related Work.} %Our approach proceeds by constructing, for a chosen fixed magnitude $\tau \in (0,1]$, input manipulations to search for adversarial examples. 

\response{We begin by constructing, for a chosen fixed magnitude $\tau \in (0,1]$, input manipulations to search for adversarial examples.} 
% as follows. %According to distance metric $L_k$, problem type (either $\maximumsaferadius(k,d,\inputImage,c)$ or $\featurerobustness_\setoffeatures(k,d,\inputImage,c)$), and bound value (either $\lowerbound$ or $\upperbound$), it constructs different input manipulations in a generic way. 

%\subsubsection{Reduction Based on Input Manipulation}

%Now we formalise the two problems in Section~\ref{sec:SafetyThms} with input manipulation. This formalisation paves our way  towards the game based approach to be introduced in Section~\ref{sec:twoplayergame}.
\begin{definition}\label{def:FMSR}
Let $\tau \in (0,1]$ be a manipulation magnitude. The \emph{finite maximum safe radius} problem $\finitemaximumsaferadius(\tau,k,d,\inputImage,c)$ based on input manipulation is as follows: 
\begin{equation}\label{equ:lsobjective2}
\response{
    \min_{\setoffeatures'\subseteq \setoffeatures(\inputImage)}
    \min_{X\subseteq \bigcup_{\feature\in \setoffeatures'}P_\feature}
    \min_{\instruction \in \instructionset}
    \{ \distance{k}{\inputImage - \manipulation_{\tau,X,\instruction}(\inputImage)} \mid \manipulation_{\tau,X,\instruction}(\inputImage) \in adv_{k,d}(\inputImage,c) \}.
}
\end{equation}
If $adv_{k,d}(\inputImage,c)=\emptyset$, we let $\finitemaximumsaferadius(\tau,k,d,\inputImage,c) = \depsilon$. 
\end{definition}

Intuitively, we aim to find a set $\setoffeatures'$ of features, a set $X$ of dimensions within $\setoffeatures'$, and a manipulation instruction $\instruction$ such that the application of the atomic manipulation $\manipulation_{\tau,X,\instruction}$ on the original input $\inputImage$ leads to an adversarial example $\manipulation_{\tau,X,\instruction}(\inputImage)$ that is nearest to $\inputImage$ among all adversarial examples. Compared to Definition~\ref{def:objective}, the search for another input by $\min_{\inputImage'\in \inputdomain}$ over an infinite set is implemented by minimisation over the finite sets of feature sets and instructions.  %discrete operations  $\min_{\setoffeatures'\subseteq \setoffeatures(\inputImage)}\min_{X\subseteq \bigcup_{\feature\in \setoffeatures'}P_\feature}\min_{i\in {\cal I}}$. 

Since the set of input manipulations %$\{\manipulation_{\tau,X,\instruction}~|~\tau \in [0,1], X\subseteq P_0, \instruction \in {\cal I}\}$ 
is finite for a fixed magnitude, the above optimisation problems need only explore a finite number of `grid' points in the input domain $\inputdomain$. We have the following lemma. 

\begin{lemma}\label{lemma:onedirection}
For any $\tau\in(0,1]$, we have that 
$\maximumsaferadius(k,d,\inputImage,c) \leq \finitemaximumsaferadius(\tau,k,d,\inputImage,c)$. % and $\featurerobustness_{\setoffeatures}(k,d,\inputImage,c) \leq \finitefeaturerobustness_{\setoffeatures}(\tau,k,d,\inputImage,c)$. 
\end{lemma}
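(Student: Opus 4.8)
The plan is to observe that $\finitemaximumsaferadius$ is exactly the same minimisation problem as $\maximumsaferadius$, but carried out over a subset of the feasible set: instead of ranging over \emph{all} adversarial inputs $\inputImage' \in adv_{k,d}(\inputImage,c)$, it ranges only over those adversarial inputs that are reachable from $\inputImage$ by an input manipulation $\manipulation_{\tau,X,\instruction}$ of the fixed magnitude $\tau$. Since minimising a fixed objective over a subset of its feasible region can only increase or leave unchanged the optimal value, the inequality follows immediately. Thus the heart of the argument is a single set-inclusion claim combined with the monotonicity of $\min$ under restriction of its domain; no appeal to Lipschitz continuity is needed, as this lemma is merely the ``one direction'' stating that the finite grid-restricted search cannot under-estimate the true maximum safe radius.

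Concretely, I would first dispose of the degenerate case $adv_{k,d}(\inputImage,c) = \emptyset$, in which both quantities are assigned the value $\depsilon$ by definition, so the inequality holds with equality. For the main case, I would fix any feasible triple $(\setoffeatures', X, \instruction)$ appearing in Definition~\ref{def:FMSR} and set $\inputImage' = \manipulation_{\tau,X,\instruction}(\inputImage)$. Because input manipulations are maps $\inputdomain \to \inputdomain$, we have $\inputImage' \in \inputdomain$, and by feasibility $\inputImage' \in adv_{k,d}(\inputImage,c)$, so $\inputImage'$ is an admissible competitor in the minimisation defining $\maximumsaferadius$, with identical objective value $\distance{k}{\inputImage - \inputImage'} = \distance{k}{\inputImage - \manipulation_{\tau,X,\instruction}(\inputImage)}$. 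Hence every distance value attained by the $\finitemaximumsaferadius$ minimisation is also attained by a competitor in the $\maximumsaferadius$ minimisation; the set of candidate distances for $\finitemaximumsaferadius$ is therefore contained in that for $\maximumsaferadius$, which yields $\maximumsaferadius(k,d,\inputImage,c) \leq \finitemaximumsaferadius(\tau,k,d,\inputImage,c)$.

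The one point that needs care --- and the closest thing to an obstacle --- is the boundary situation where $adv_{k,d}(\inputImage,c) \neq \emptyset$ yet no grid point $\manipulation_{\tau,X,\instruction}(\inputImage)$ happens to be adversarial, so that the minimisation in $\finitemaximumsaferadius$ ranges over the empty set. I would handle this by adopting the standard convention that a minimum over the empty set is $+\infty$ (equivalently, that no reachable adversarial example is found at magnitude $\tau$); since $\maximumsaferadius(k,d,\inputImage,c) \leq d < +\infty$ whenever $adv_{k,d}(\inputImage,c) \neq \emptyset$, the claimed inequality is preserved in this case as well. In summary, the lemma reduces entirely to the elementary observation that restricting an infimum to a subfamily of feasible points never decreases it, with the only genuine content being the verification that manipulated inputs remain within $\inputdomain$ and satisfy the same adversarial constraint.
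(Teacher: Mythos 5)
Your proof is correct and matches the paper's intended reasoning: the paper states Lemma~\ref{lemma:onedirection} without an explicit proof, treating it as immediate from the observation that the finite optimisation in Definition~\ref{def:FMSR} minimises the same objective over a subset (manipulation-reachable adversarial inputs) of the feasible set of Definition~\ref{def:objective}, which is exactly your argument. Your handling of the two degenerate cases (both quantities equal to $\depsilon$ when $adv_{k,d}(\inputImage,c)=\emptyset$, and the empty grid-feasible set when adversarial examples exist but none is $\tau$-reachable) is slightly more careful than the paper's presentation and is consistent with it.
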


%To ensure the other direction, 
\response{To ensure the lower bound of $\maximumsaferadius(k,d,\inputImage,c)$ in Lemma~\ref{lemma:onedirection},}
we utilise the fact that the network is Lipschitz continuous~\cite{RHK2018}. 
First, we need the concepts of a $\tau$-grid input, for a manipulation magnitude $\tau$, and a misclassification aggregator. 
\response{The intuition for the $\tau$-grid is illustrated in Figure~\ref{fig:guarantee}. We construct a finite set of grid points uniformly spaced by $\tau$ in such a way that they can be covered by small subspaces centred on grid points. We select a sufficiently small value for $\tau$ based on a given Lipschiz constant so that all points in these subspaces are are classified the same. We then show that an optimum point on the grid is within an error bound dependent on $\tau$ from the true optimum, i.e., the closest adversarial example.}

%\subsubsection{Provable Guarantee via Lipschitz Networks}

%Recall that $\tau$, a positive real number, is the manipulation magnitude used in pixel manipulations.
\begin{definition}
An image $\inputImage' \in \eta(\inputImage,L_k,d)$ is a \emph{$\tau$-grid input} if for all dimensions $p\in P_0$ we have $|\inputImage'(p)-\inputImage(p)| = n * \tau$ for some $n\geq 0$. Let $\tauimage(\inputImage,k,d)$ be the set of $\tau$-grid inputs in $\eta(\inputImage,L_k,d)$. 
\end{definition}

We note that $\tau$-grid inputs in the set $\tauimage(\inputImage,k,d)$ are reachable from each other by applying an input manipulation. 
The main purpose of defining $\tau$-grid inputs is to ensure that the space $\eta(\inputImage,L_k,d)$ can be covered by small subspaces \response{centred on grid points}. To implement this, we need the following lemma. 
\begin{lemma}\label{lemma:cover}
We have $\eta(\inputImage,L_k,d) \subseteq \bigcup_{\inputImage' \in \tauimage(\inputImage,k,d)}\eta(\inputImage',L_k,\frac{1}{2}d(k,\tau)) $, where $d(k,\tau)=(|P_0|\tau^k)^{\frac{1}{k}}$.
\end{lemma}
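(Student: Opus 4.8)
The plan is to exhibit, for every point $z \in \eta(\inputImage,L_k,d)$, a single grid point $\inputImage' \in \tauimage(\inputImage,k,d)$ whose $L_k$ distance to $z$ is at most $\frac{1}{2}d(k,\tau)$; this immediately places $z$ in $\eta(\inputImage',L_k,\frac{1}{2}d(k,\tau))$ and hence in the right-hand union. The geometric picture driving the argument is that the grid points are the centres of a tiling of the input space by axis-aligned cubes of side $\tau$, and $\frac{1}{2}d(k,\tau)$ is precisely the $L_k$ half-diagonal (circumradius) of such a cube.

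First I would construct $\inputImage'$ by coordinatewise rounding: for each dimension $p\in P_0$, choose $\inputImage'[p]$ to be the value of the form $\inputImage[p] + m\tau$ with $m\in\mathbb{Z}$ nearest to $z[p]$, so that $z$ lies in the cube of side $\tau$ centred at $\inputImage'$. By construction $|\inputImage'[p]-\inputImage[p]|$ is an integer multiple of $\tau$ for every $p$, so $\inputImage'$ satisfies the grid condition of the $\tau$-grid definition, and the rounding guarantees the per-coordinate bound $|z[p]-\inputImage'[p]| \le \tau/2$.

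The core estimate is then a direct $L_k$ computation:
\begin{equation*}
\distance{k}{z - \inputImage'} = \Big(\sum_{p\in P_0} |z[p]-\inputImage'[p]|^k\Big)^{1/k} \le \Big(\sum_{p\in P_0} (\tau/2)^k\Big)^{1/k} = \tfrac{1}{2}\big(|P_0|\,\tau^k\big)^{1/k} = \tfrac{1}{2}d(k,\tau),
\end{equation*}
which is exactly the claimed covering radius. Since $z$ was arbitrary, taking the union over all such $\inputImage'$ yields the inclusion.

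The delicate step, and the one I expect to be the main obstacle, is verifying the membership $\inputImage'\in\tauimage(\inputImage,k,d)$, i.e.\ that the rounded centre still lies inside the ball $\eta(\inputImage,L_k,d)$ (and inside $\inputdomain = [0,1]^n$). For $z$ in the interior this is automatic, but when $z$ is near the boundary of the ball, nearest-coordinate rounding can push $\inputImage'$ to $L_k$ distance slightly above $d$ from $\inputImage$, so the naive argument only covers the ball by cubes whose centres may stray just outside it. I would handle this either by enlarging the admissible grid to radius $d + \frac{1}{2}d(k,\tau)$ (noting that the statement is subsequently used only up to the $\depsilon$ slack introduced earlier), or by additionally clamping each coordinate to $[0,1]$ and to the nearest in-ball grid value, then checking that the per-coordinate error bound $\tau/2$ is preserved. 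This boundary bookkeeping, together with the $[0,1]$ truncation of dimensions, is where the proof needs the most care; the interior estimate itself is just the clean $L_k$ half-diagonal calculation above.
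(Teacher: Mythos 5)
Your proof is correct and rests on the same covering idea as the paper's: every point of $\eta(\inputImage,L_k,d)$ lies in a grid cell of side $\tau$, whose $L_k$ half-diagonal is exactly $\frac{1}{2}d(k,\tau)$. The difference is in execution, and yours is the tighter argument: the paper takes the cell containing the point, notes that any two of its vertices are at distance at most $d(k,\tau)$, and then simply asserts that the point is within $\frac{1}{2}d(k,\tau)$ of some vertex --- an inference that does not follow from the diameter bound alone, and is precisely what your coordinatewise rounding (per-coordinate error at most $\tau/2$, followed by the $L_k$ computation) supplies. Your boundary worry is also genuine rather than an artefact of your method: since $\tauimage(\inputImage,k,d)$ contains only grid points \emph{inside} the ball, the nearest grid point to a point near the sphere may fail to be a $\tau$-grid input when $d$ is not an integer multiple of $\tau$ (for instance $|P_0|=1$, $d=1$, $\tau=0.6$: the point at distance $1$ from $\inputImage$ has nearest in-ball grid point at distance $0.4 > \frac{1}{2}d(k,\tau)=0.3$, since the closer grid point at $1.2$ lies outside the ball). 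The paper's proof buries exactly this case inside its ``without loss of generality'' assumption that the enclosing cell has $\tau$-grid inputs as its vertices, so it is no more complete than yours on this point; your suggested repairs (enlarging the admissible grid by $\frac{1}{2}d(k,\tau)$, or absorbing the discrepancy into the $\depsilon$ slack, together with clamping to $[0,1]$) are the appropriate way to make the statement literally true.
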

\begin{proof}
Let $\inputImage_1$ be any point in $\eta(\inputImage,L_k,d)$. We need to show  $\inputImage_1\in \eta(\inputImage',L_k,\frac{1}{2}d(k,\tau))$ for some $\tau$-grid input $\inputImage'$. Because every point in $\eta(\inputImage,L_k,d)$ belongs to a $\tau$-grid cell, we assume that $\inputImage_1$ is in a $\tau$-grid cell which, without loss of generality, has a set $T$ of $\tau$-grid inputs as its vertices. Now for any two $\tau$-grid inputs $\inputImage_2$ and $\inputImage_3$ in $T$, we have that $\distance{k}{\inputImage_2-\inputImage_3}\leq d(k,\tau)$, by the construction of the grid. Therefore, we have $\inputImage_1 \in \eta(\inputImage',L_k,\frac{1}{2}d(k,\tau))$ for some $\inputImage' \in T$.
\end{proof}

\response{As shown in Figure~\ref{fig:guarantee}, the distance $\frac{1}{2}d(k,\tau)$ is the radius of norm ball subspaces covering the input space.}  
It is easy to see that $d(1,\tau) = |P_0|\tau$, $d(2,\tau) = \sqrt{|P_0|\tau^2}$, and $d(\infty,\tau) = \tau$.

%
%
%Further, we say that 
\begin{definition}
An input $\inputImage_1\in \eta(\inputImage,L_k,d)$ is a \emph{misclassification aggregator} with respect to a number $\beta>0$ if,  for any $\inputImage_2\in \eta(\inputImage_1,L_k,\beta)$, we have that $N(\inputImage_2) \neq N(\inputImage)$ implies $N(\inputImage_1) \neq  N(\inputImage)$.
\end{definition} 

Intuitively, if a misclassification aggregator $\inputImage_1$ with respect to $\beta$ is classified correctly, then \emph{all} inputs in $\eta(\inputImage_1,L_k,\beta)$ are classified correctly. 

%\paragraph{Maximum Safe Radius Problem}

\paragraph{Error Bounds}

\begin{figure}[t]
	\centering
	\includegraphics[width=0.7\linewidth]{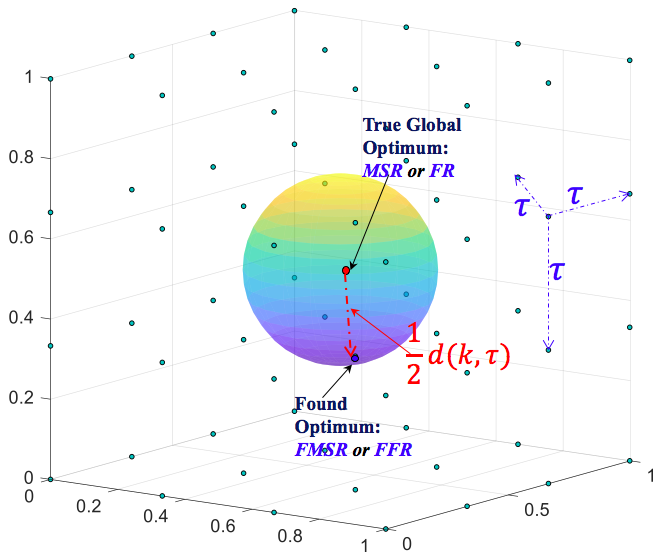}
	\caption{\response{Provable guarantees for the $\maximumsaferadius$ and $\featurerobustness_\setoffeatures$ problems on a dense $\tau$-grid (green dots) that is reached upon convergence.
	In the worst case, the true optimum (the red dot) lies in the middle between two hyper-points of the $\tau$-grid, with the distance of at most $\frac{1}{2} d(k,\tau)$ from the found optimum.} 
	}
	\label{fig:guarantee}
\end{figure}

We now bound the error of using $\finitemaximumsaferadius(\tau,k,d,\inputImage,c)$ to estimate $\maximumsaferadius(k,d,\inputImage,c)$ in $\frac{1}{2}d(k,\tau)$\response{, as illustrated in Figure~\ref{fig:guarantee}}.
First of all, we have the following lemma. Recall from Lemma~\ref{lemma:onedirection} that we already have $\maximumsaferadius(k,d,\inputImage,c) \leq \finitemaximumsaferadius(\tau,k,d,\inputImage,c)$.

\begin{lemma}\label{thm:misclassification}
If all $\tau$-grid inputs are misclassification aggregators with respect to $\frac{1}{2}d(k,\tau)$, then 
$\maximumsaferadius(k,d,\inputImage,c) \geq  \finitemaximumsaferadius(\tau,k,d,\inputImage,c) - \frac{1}{2}d(k,\tau)$.
%$\severity(M(k,d,\inputImage,c), \max_{\strategy_{\playerTwo}}) > d$, then $adv_{k,d}(\inputImage,c)=\emptyset$. 
\end{lemma}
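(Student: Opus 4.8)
The plan is to combine the covering Lemma~\ref{lemma:cover} with the misclassification-aggregator hypothesis to transport the true nearest adversarial example onto the $\tau$-grid, paying at most $\frac{1}{2}d(k,\tau)$ in distance. Since Lemma~\ref{lemma:onedirection} already supplies $\maximumsaferadius(k,d,\inputImage,c) \leq \finitemaximumsaferadius(\tau,k,d,\inputImage,c)$, the only thing left to establish is the reverse estimate $\finitemaximumsaferadius(\tau,k,d,\inputImage,c) \leq \maximumsaferadius(k,d,\inputImage,c) + \frac{1}{2}d(k,\tau)$, which rearranges to the claim. If $adv_{k,d}(\inputImage,c) = \emptyset$ both quantities equal $\depsilon$ and the bound is immediate, so I would assume a nearest adversarial example exists.

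First I would fix a witness $\inputImage^{\ast} \in adv_{k,d}(\inputImage,c)$ attaining the minimum, so that $\distance{k}{\inputImage-\inputImage^{\ast}} = \maximumsaferadius(k,d,\inputImage,c)$. Because $\inputImage^{\ast} \in \eta(\inputImage,L_k,d)$, the covering Lemma~\ref{lemma:cover} yields a $\tau$-grid input $\inputImage' \in \tauimage(\inputImage,k,d)$ with $\inputImage^{\ast} \in \eta(\inputImage',L_k,\frac{1}{2}d(k,\tau))$, i.e. $\distance{k}{\inputImage^{\ast}-\inputImage'} \leq \frac{1}{2}d(k,\tau)$; note that $\inputImage'$ automatically lies in $\eta(\inputImage,L_k,d)$ since $\tauimage(\inputImage,k,d)$ consists only of grid inputs inside the $d$-neighbourhood. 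Next I would invoke the hypothesis that every $\tau$-grid input, in particular $\inputImage'$, is a misclassification aggregator with respect to $\frac{1}{2}d(k,\tau)$: as $\inputImage^{\ast} \in \eta(\inputImage',L_k,\frac{1}{2}d(k,\tau))$ is misclassified, the aggregator property forces $N(\inputImage') \neq N(\inputImage)$, so $\inputImage'$ is itself adversarial. Being a grid point, $\inputImage'$ is realised as $\manipulation_{\tau,X,\instruction}(\inputImage)$ for a suitable $X$ and $\instruction$ (Lemma~\ref{lemma:atomicinput}), hence it is feasible in the minimisation of Definition~\ref{def:FMSR}, giving $\finitemaximumsaferadius(\tau,k,d,\inputImage,c) \leq \distance{k}{\inputImage-\inputImage'}$. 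The triangle inequality then yields $\distance{k}{\inputImage-\inputImage'} \leq \distance{k}{\inputImage-\inputImage^{\ast}} + \distance{k}{\inputImage^{\ast}-\inputImage'} \leq \maximumsaferadius(k,d,\inputImage,c) + \frac{1}{2}d(k,\tau)$, which is exactly the reverse estimate.

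The step I expect to be the main obstacle is the class-matching subtlety. The finite problem ranges over adversarial examples of the specific target class $c$, whereas the aggregator definition as stated only propagates plain misclassification, $N(\inputImage') \neq N(\inputImage)$, and not $N(\inputImage') = c$. To close this gap I would either read the aggregator property in its class-$c$ form, so that $N(\inputImage^{\ast}) = c$ transfers to $N(\inputImage') = c$ and $\inputImage'$ is adversarial \emph{of class $c$}, or else run the entire argument for non-targeted safety, where membership in $adv_{k,d}(\inputImage)$ requires only misclassification. A secondary point to verify is the reachability claim, namely that the grid witness $\inputImage'$ is genuinely expressible as a (sequence of) input manipulation(s) of magnitude $\tau$ from $\inputImage$ so that it is a legal candidate in Definition~\ref{def:FMSR}; this follows from the grid construction together with Lemma~\ref{lemma:atomicinput}.
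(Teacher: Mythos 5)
Your proposal is correct and takes essentially the same approach as the paper: the paper phrases the argument as a proof by contradiction, but its core steps are identical to yours---cover the nearest adversarial example by a $\tau$-grid point (Lemma~\ref{lemma:cover}), use the aggregator hypothesis to conclude that this grid point is itself adversarial, and bound $\finitemaximumsaferadius(\tau,k,d,\inputImage,c)$ via the triangle inequality. The class-matching subtlety you flag is genuine but affects the paper equally: its proof silently asserts the grid point lies in $adv_{k,d}(\inputImage,c)$ although the misclassification aggregator definition only propagates $N(\cdot)\neq N(\inputImage)$, so your explicit discussion of how to close that gap is, if anything, more careful than the published argument.
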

\begin{proof} 
%(Sketch)
%First, we need to show that $\eta(\inputImage,k,d) \subseteq \bigcup_{\inputImage_1\in \tauimage(\inputImage,k,d)}\eta(\inputImage_1,1,\frac{1}{2}d(k,\tau))$. This can be obtained by the definitions of $\eta(\inputImage_1,1,\frac{1}{2}d(k,\tau))$ and $\eta(\inputImage,k,d)$. 
%
We prove by contradiction. Assume that $\finitemaximumsaferadius(\tau,k,d,\inputImage,c) = d'$ for some $d' >0$, and $\maximumsaferadius(k,d,\inputImage,c) <  d' - \frac{1}{2}d(k,\tau)$. Then there must exist an input $\inputImage'$ such that $\inputImage' \in adv_{k,d}(\inputImage,c)$ and  
\begin{equation}\label{equ:lemma41}
\distance{k}{\inputImage'-\inputImage} = \maximumsaferadius(k,d,\inputImage,c)  <  d' - \frac{1}{2}d(k,\tau),
\end{equation} and $\inputImage'$ is not a $\tau$-grid input. 
By Lemma~\ref{lemma:cover}, there must exist a $\tau$-grid input $\inputImage''$ such that $\inputImage' \in \eta(\inputImage'',L_k,\frac{1}{2}d(k,\tau))$. Now 
because all $\tau$-grid inputs are misclassification aggregators with respect to $\frac{1}{2}d(k,\tau)$, we have $\inputImage'' \in adv_{k,d}(\inputImage,c)$.

By $\inputImage'' \in adv_{k,d}(\inputImage,c)$ and the fact that $\inputImage''$ is a $\tau$-grid input, we have that  
\begin{equation}\label{equ:lemma42}
\finitemaximumsaferadius(\tau,k,d,\inputImage,c) \leq \distance{k}{\inputImage - \inputImage''} \leq \distance{k}{\inputImage - \inputImage'} + \frac{1}{2}d(k,\tau) .
\end{equation}

Now, combining Equations (\ref{equ:lemma41}) and (\ref{equ:lemma42}), we have that $\finitemaximumsaferadius(\tau,k,d,\inputImage,c) < d'$, which contradicts the hypothesis that $\finitemaximumsaferadius(\tau,k,d,\inputImage,c) = d'$.
\end{proof}

\noindent

In the following, we discuss how to determine the largest $\tau$ for a Lipschitz network in order to satisfy the condition in Lemma \ref{thm:misclassification} that all $\tau$-grid inputs are misclassification aggregators with respect to $\frac{1}{2}d(k,\tau)$.

\begin{definition}\label{def:minimumconfidencegap}
Given a class label $c$, we let  
\begin{equation}
g(\inputImage',c) = \min_{c'\in C, c'\neq c} \{N(\inputImage', c) - N(\inputImage',c')\}
\end{equation}
be a function maintaining for an input $\inputImage'$ the \emph{minimum confidence margin} between the class $c$ and another class $c' \neq N(\inputImage')$.
\end{definition}
 Note that, given an input $\inputImage'$ and a class $c$, we can compute $g(\inputImage',c)$ in constant time. The following lemma shows that the above-mentioned condition about misclassification aggregators  can be obtained if  $\tau$ is sufficiently small. 

%can be seen as an instantiation of Lemma~\ref{thm:misclassification} by using Lipschitz continuity and a condition $d(k,\tau) \leq  \frac{2 \classChangeDistance}{ \lipschitzConstant}$ to implement the misclassification aggregator. }

\begin{lemma}\label{lemma:twodirection}
Let $N$ be a Lipschitz network with a Lipschitz constant $\lipschitzConstant_c$ for every class $c\in C$. If 
\begin{equation}
d(k,\tau) \leq  \frac{2g(\inputImage',N(\inputImage'))}{\max_{c\in C, c\neq N(\inputImage')} (\lipschitzConstant_{N(\inputImage')}+\lipschitzConstant_{c})}
\end{equation} for all $\tau$-grid input $\inputImage'\in \tauimage(\inputImage,k,d)$, then all $\tau$-grid inputs are misclassification aggregators with respect to $\frac{1}{2}d(k,\tau)$.  
%and $\severity(M(k,d,\inputImage,c), \max_{\strategy_{\playerTwo}}) > d$, we have that $adv_{k,d}(\inputImage,c)=\emptyset$
\end{lemma}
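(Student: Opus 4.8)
The plan is to prove the contrapositive formulation of the misclassification aggregator property directly via Lipschitz continuity: I will show that for every $\tau$-grid input $\inputImage'$, the classification is in fact constant on the whole ball $\eta(\inputImage',L_k,\frac{1}{2}d(k,\tau))$, i.e., $N(\inputImage_2)=N(\inputImage')$ for all $\inputImage_2$ in it. Once this is established, the aggregator condition follows immediately: if some $\inputImage_2$ in the ball satisfies $N(\inputImage_2)\neq N(\inputImage)$, then since $N(\inputImage_2)=N(\inputImage')$ we obtain $N(\inputImage')\neq N(\inputImage)$, which is exactly what the definition of misclassification aggregator with respect to $\frac{1}{2}d(k,\tau)$ demands.

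First I would fix an arbitrary $\tau$-grid input $\inputImage'\in\tauimage(\inputImage,k,d)$, write $c^\ast=N(\inputImage')$ for its predicted class, and take any $\inputImage_2$ with $\distance{k}{\inputImage_2-\inputImage'}\leq\frac{1}{2}d(k,\tau)$. The core device is to control the \emph{margin} $N(\inputImage_2,c^\ast)-N(\inputImage_2,c)$ for each competing class $c\neq c^\ast$. Applying the Lipschitz bound of Definition~\ref{def:Lipschitz} separately to class $c^\ast$ and class $c$ gives $N(\inputImage_2,c^\ast)\geq N(\inputImage',c^\ast)-\lipschitzConstant_{c^\ast}\cdot\frac{1}{2}d(k,\tau)$ and $N(\inputImage_2,c)\leq N(\inputImage',c)+\lipschitzConstant_{c}\cdot\frac{1}{2}d(k,\tau)$. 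Subtracting, the confidence gap at $\inputImage_2$ is at least the gap at $\inputImage'$ minus the penalty $(\lipschitzConstant_{c^\ast}+\lipschitzConstant_{c})\cdot\frac{1}{2}d(k,\tau)$.

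Next I would lower-bound the gap at $\inputImage'$ using the margin function of Definition~\ref{def:minimumconfidencegap}: by construction $N(\inputImage',c^\ast)-N(\inputImage',c)\geq g(\inputImage',c^\ast)=g(\inputImage',N(\inputImage'))$ for every $c\neq c^\ast$. The hypothesis on $\tau$ rearranges to $\frac{1}{2}d(k,\tau)\cdot\max_{c\neq c^\ast}(\lipschitzConstant_{c^\ast}+\lipschitzConstant_{c})\leq g(\inputImage',N(\inputImage'))$, so the penalty term $(\lipschitzConstant_{c^\ast}+\lipschitzConstant_{c})\cdot\frac{1}{2}d(k,\tau)$ never exceeds $g(\inputImage',N(\inputImage'))$. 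Combining the two estimates yields $N(\inputImage_2,c^\ast)-N(\inputImage_2,c)\geq 0$ for every competing class $c$, so $c^\ast$ remains a maximiser of the confidence at $\inputImage_2$, whence $N(\inputImage_2)=c^\ast=N(\inputImage')$; this proves the constancy claim and therefore the lemma.

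The step I expect to be the main obstacle is matching the two one-sided Lipschitz bounds against the $\max$ in the denominator: the margin picks up the \emph{sum} $\lipschitzConstant_{c^\ast}+\lipschitzConstant_{c}$ rather than a single constant, and the bound must hold uniformly over all competing classes, which is precisely why the worst case $\max_{c\neq N(\inputImage')}(\lipschitzConstant_{N(\inputImage')}+\lipschitzConstant_{c})$ appears in the hypothesis. A minor technicality is the boundary case where $d(k,\tau)$ attains the bound exactly, yielding only the non-strict inequality $N(\inputImage_2,c^\ast)\geq N(\inputImage_2,c)$; I would dispose of this by assuming $\inputImage'$ is classified with a strictly positive margin $g(\inputImage',N(\inputImage'))>0$ (the generic no-ties case), so that the argmax at $\inputImage_2$ is unambiguously $c^\ast$.
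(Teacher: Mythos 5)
Your proof is correct and takes essentially the same route as the paper's: both use the Lipschitz bounds on the confidence values of the predicted class and each competing class to show that the confidence margin cannot drop below zero anywhere in the $\frac{1}{2}d(k,\tau)$-ball around a $\tau$-grid point, so the classification is constant on that ball and the aggregator property follows immediately. The only differences are presentational --- you estimate the margin class-by-class and then take the worst case over competing classes, whereas the paper bounds the change of the aggregated margin function $g$ directly via a $\min$-$\min$ comparison; and you explicitly flag the tie case when the bound holds with equality, which the paper glosses over.
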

\begin{proof}
For any input $\inputImage''$ whose closest $\tau$-grid input is $\inputImage'$, we have
\begin{equation}
\response{
    \begin{split}
        & g(\inputImage',N(\inputImage')) - g(\inputImage'',N(\inputImage')) \\
        = & \min_{c\in C, c\neq N(\inputImage')} \{N(\inputImage',N(\inputImage')) - N(\inputImage',c)\} - \min_{c'\in C, c'\neq N(\inputImage')} \{N(\inputImage'',N(\inputImage')) - N(\inputImage'',c')\} \\
        \leq & \max_{c'\in C, c'\neq N(\inputImage')} \{N(\inputImage',N(\inputImage')) -  N(\inputImage',c') - N(\inputImage'',N(\inputImage')) + N(\inputImage'',c')\} \\
        \leq & \max_{c'\in C, c'\neq N(\inputImage')} \{|N(\inputImage',N(\inputImage')) - N(\inputImage'',N(\inputImage'))| + |N(\inputImage'',c') - N(\inputImage',c')|\} \\
        \leq & \max_{c'\in C, c'\neq N(\inputImage')} (\lipschitzConstant_{N(\inputImage')}+\lipschitzConstant_{c'})\distance{k}{\inputImage'-\inputImage''} \\
        \leq & \max_{c'\in C, c'\neq N(\inputImage')} (\lipschitzConstant_{N(\inputImage')}+\lipschitzConstant_{c'})\frac{1}{2}d(k,\tau)
    \end{split}}
\end{equation}
\commentout{
\begin{equation}
\begin{array}{ll}
& g(\inputImage',N(\inputImage')) - g(\inputImage'',N(\inputImage'))   \\
= & \displaystyle \min_{c\in C, c\neq N(\inputImage')} \{N(\inputImage',N(\inputImage')) - N(\inputImage',c)\} - \min_{c'\in C, c'\neq N(\inputImage')} \{N(\inputImage'',N(\inputImage')) - N(\inputImage'',c')\} \\
\leq & \displaystyle \max_{c'\in C, d\neq N(\inputImage')} \{N(\inputImage',N(\inputImage')) -  N(\inputImage',c') - N(\inputImage'',N(\inputImage')) + N(\inputImage'',c')\} \\ 
\leq & \displaystyle \max_{c'\in C, d\neq N(\inputImage')} \{|N(\inputImage',N(\inputImage')) - N(\inputImage'',N(\inputImage'))| + |N(\inputImage'',c') -  N(\inputImage',c')|\} \\
\leq & \max_{c'\in C, c'\neq N(\inputImage')} (\lipschitzConstant_{N(\inputImage')}+\lipschitzConstant_{c'})\distance{k}{\inputImage'-\inputImage''}\\
\leq & \max_{c'\in C, c'\neq N(\inputImage')} (\lipschitzConstant_{N(\inputImage')}+\lipschitzConstant_{d})\frac{1}{2}d(k,\tau)
\end{array}
\end{equation}
}
Now, to ensure that no class change occurs between $\inputImage''$ and $\inputImage'$, we need to have  $g(\inputImage'',N(\inputImage')) \geq 0$, which means that  $g(\inputImage',N(\inputImage')) - g(\inputImage'',N(\inputImage')) \leq g(\inputImage',N(\inputImage'))$. Therefore, we can let 
\begin{equation}
\max_{c'\in C, c'\neq N(\inputImage')} (\lipschitzConstant_{N(\inputImage')}+\lipschitzConstant_{c'})\frac{1}{2}d(k,\tau) \leq g(\inputImage',N(\inputImage')).
\end{equation}
Note that $g(\inputImage',N(\inputImage'))$ is dependent on the $\tau$-grid input $\inputImage'$, and thus can be computed when we construct the grid. Finally, we let 
\begin{equation}
d(k,\tau) \leq \frac{2g(\inputImage',N(\inputImage'))}{\max_{c'\in C, c'\neq N(\inputImage')} (\lipschitzConstant_{N(\inputImage')}+\lipschitzConstant_{c'})}
\end{equation}
Therefore, if we have the above inequality for every $\tau$-grid input, then we can conclude $g(\inputImage'',N(\inputImage')) \geq 0$ for any $\inputImage''  \in \eta(\inputImage',k,d)$, i.e., $N(\inputImage'',N(\inputImage')) \geq  N(\inputImage'',c)$ for all $c\in C$. The latter means that no class change occurs. 
\end{proof}

%We note that, the value $\classChangeDistance$ is  dependent on the network, and can be {estimated by examining all input examples $\inputImage'$ in the training and test data sets, or computed  with provable guarantees by  reachability analysis~\cite{RHK2018}. }

Combining Lemmas~\ref{lemma:onedirection}, \ref{thm:misclassification}, and \ref{lemma:twodirection}, we have the following theorem which shows that the reduction has a provable guarantee, dependent on the choice of the manipulation magnitude. 

\begin{theorem}\label{thm:lipschitzreduction}
Let $N$ be a Lipschitz network with a Lipschitz constant $\lipschitzConstant_c$ for every class $c\in C$. If $d(k,\tau) \leq \frac{2g(\inputImage', N(\inputImage'))}{\max_{c'\in C, c'\neq N(\inputImage')} (\lipschitzConstant_{N(\inputImage')}+\lipschitzConstant_{c'})}$  for all $\tau$-grid inputs $\inputImage'\in \tauimage(\inputImage,k,d)$, then we can use $\finitemaximumsaferadius(\tau,k,d,\inputImage,c)$ to estimate $\maximumsaferadius(k,d,\inputImage,c)$ with an error bound $\frac{1}{2}d(k,\tau)$.
\end{theorem}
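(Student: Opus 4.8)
The plan is to obtain the result as a direct synthesis of the three preceding lemmas, chaining them so as to sandwich $\finitemaximumsaferadius(\tau,k,d,\inputImage,c)$ around $\maximumsaferadius(k,d,\inputImage,c)$. First I would observe that the hypothesis of the theorem is \emph{verbatim} the hypothesis of Lemma~\ref{lemma:twodirection}: the inequality $d(k,\tau) \leq \frac{2g(\inputImage',N(\inputImage'))}{\max_{c'\in C, c'\neq N(\inputImage')}(\lipschitzConstant_{N(\inputImage')}+\lipschitzConstant_{c'})}$ is assumed to hold for every $\tau$-grid input $\inputImage' \in \tauimage(\inputImage,k,d)$. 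Hence Lemma~\ref{lemma:twodirection} applies immediately and yields that all $\tau$-grid inputs are misclassification aggregators with respect to $\frac{1}{2}d(k,\tau)$.

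Next, this conclusion is precisely the premise required by Lemma~\ref{thm:misclassification}, so I would invoke it to obtain the lower bound $\maximumsaferadius(k,d,\inputImage,c) \geq \finitemaximumsaferadius(\tau,k,d,\inputImage,c) - \frac{1}{2}d(k,\tau)$. Combining this with the upper bound $\maximumsaferadius(k,d,\inputImage,c) \leq \finitemaximumsaferadius(\tau,k,d,\inputImage,c)$ established unconditionally in Lemma~\ref{lemma:onedirection}, I would record the two-sided estimate
\begin{equation}
\finitemaximumsaferadius(\tau,k,d,\inputImage,c) - \frac{1}{2}d(k,\tau) \leq \maximumsaferadius(k,d,\inputImage,c) \leq \finitemaximumsaferadius(\tau,k,d,\inputImage,c).
\end{equation}
Rearranging gives $0 \leq \finitemaximumsaferadius(\tau,k,d,\inputImage,c) - \maximumsaferadius(k,d,\inputImage,c) \leq \frac{1}{2}d(k,\tau)$, which is exactly the assertion that $\finitemaximumsaferadius$ estimates $\maximumsaferadius$ with error bounded by $\frac{1}{2}d(k,\tau)$.

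Because all the analytic content already resides in the three lemmas, I do not expect any genuine obstacle in this final step; the argument is purely a matter of assembling the pieces. The only points that warrant care are bookkeeping ones: confirming that the quantifier ``for all $\tau$-grid inputs'' in the theorem's hypothesis is the same universal quantifier demanded by Lemma~\ref{lemma:twodirection}, so that no grid point is overlooked when passing to the misclassification-aggregator property; and making explicit that ``estimate with error bound $\frac{1}{2}d(k,\tau)$'' should be read as the one-sided gap $\finitemaximumsaferadius - \maximumsaferadius$ rather than a symmetric two-sided error, since Lemma~\ref{lemma:onedirection} guarantees that $\finitemaximumsaferadius$ always overestimates. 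I would also note, for completeness, that the degenerate case in which no adversarial example exists within the $d$-neighbourhood is handled uniformly, as both quantities are set to $\depsilon$ by Definitions~\ref{def:objective} and~\ref{def:FMSR} and the bound then holds trivially.
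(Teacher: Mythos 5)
Your proposal is correct and takes essentially the same approach as the paper's own proof: both chain Lemma~\ref{lemma:twodirection} (to establish the misclassification-aggregator property), then Lemma~\ref{thm:misclassification} (for the lower bound), and combine with Lemma~\ref{lemma:onedirection} (for the upper bound) to sandwich $\maximumsaferadius(k,d,\inputImage,c)$ between $\finitemaximumsaferadius(\tau,k,d,\inputImage,c) - \frac{1}{2}d(k,\tau)$ and $\finitemaximumsaferadius(\tau,k,d,\inputImage,c)$. Your additional remarks on the one-sidedness of the error and on the degenerate $\depsilon$ case are compatible refinements, not departures from the paper's argument.
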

\begin{proof}
By Lemma~\ref{lemma:onedirection}, we have $\maximumsaferadius(k,d,\inputImage,c) \leq \finitemaximumsaferadius(\tau,k,d,\inputImage,c)$ for any $\tau>0$. By Lemma \ref{thm:misclassification} and Lemma~\ref{lemma:twodirection}, when $\finitemaximumsaferadius(\tau,k,d,\inputImage,c) = d'$, we have $\maximumsaferadius(k,d,\inputImage,c) \geq d' - \frac{1}{2}d(k,\tau)$, under the condition that $d(k,\tau) \leq \frac{2g(\inputImage',N(\inputImage'))}{\max_{c'\in C, c'\neq N(\inputImage')} (\lipschitzConstant_{N(\inputImage')}+\lipschitzConstant_{c'})}$  for all $\tau$-grid inputs $\inputImage'\in \tauimage(\inputImage,k,d)$. Therefore, when $d(k,\tau) \leq \frac{2g(\inputImage',N(\inputImage'))}{\max_{c'\in C, c'\neq N(\inputImage')} (\lipschitzConstant_{N(\inputImage')}+\lipschitzConstant_{c'})}$  for all $\tau$-grid inputs $\inputImage'\in \tauimage(\inputImage,k,d)$, if we use $d'$ to estimate $\maximumsaferadius(k,d,\inputImage,c)$, we will have $d' - \frac{1}{2}d(k,\tau)\leq \maximumsaferadius(k,d,\inputImage,c)\leq d'$, i.e., the error bound is $\frac{1}{2}d(k,\tau)$. 
\end{proof}

\subsection{The Feature Robustness Problem}

\begin{figure}[t]
	\centering
	\includegraphics[width=1\linewidth]{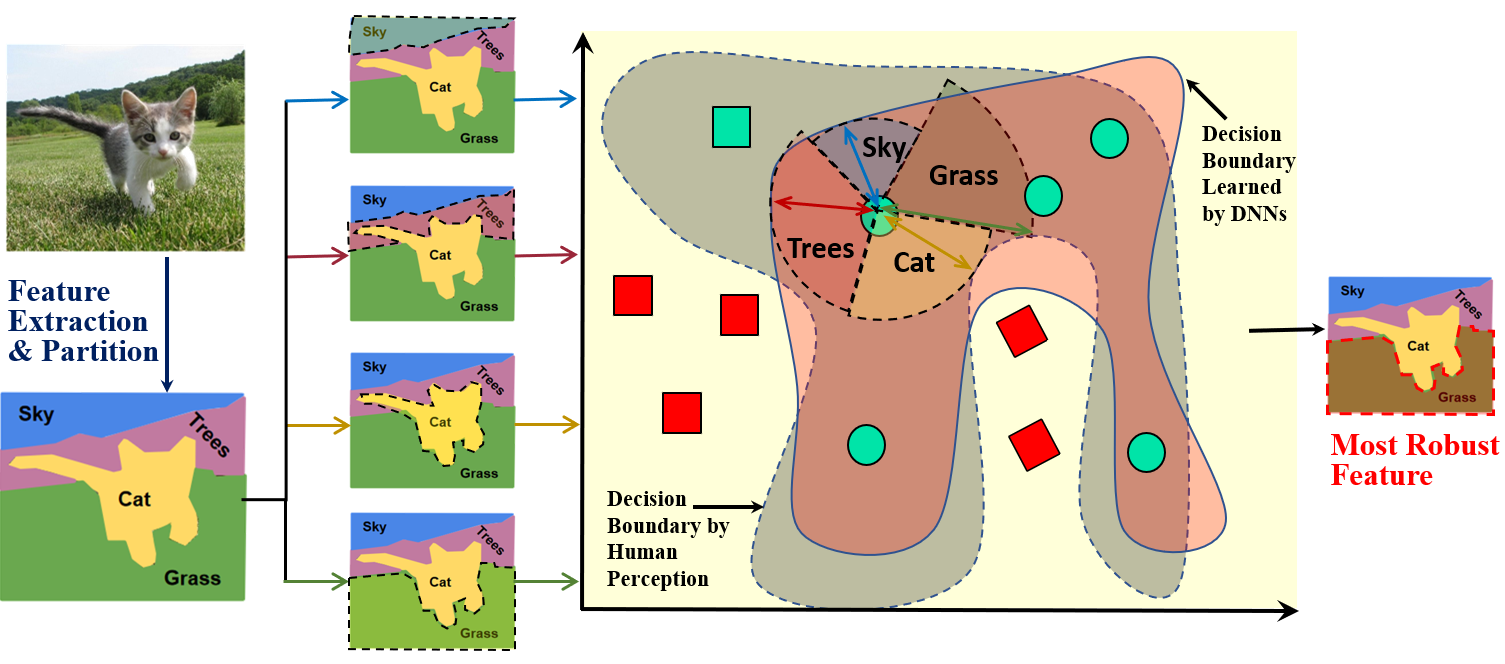}
	\caption{Illustration of the {\em feature robustness} ($\featurerobustness_\setoffeatures$) problem, which aims to find, on an original image $\inputImage$, a feature, or a subset of features, that is the most robust against adversarial perturbations. Given a benign image, we first apply feature extraction or semantic partitioning methods to produce a set of disjoint features %non-overlapping yet complete feature sets such as 
	(`Sky', `Trees', `Cat', etc.), then %by formulating it as a two-player turn-based game, 
	we find a set of robust features that is most resilient to adversarial perturbations (`Grass' in the figure), which quantifies the most robust direction in a safe norm ball. %. Intuitively, the problem of $\featurerobustness_\setoffeatures$ quantifies the most robust directions in a safe norm ball, providing a quantitative way to understand/interpret the classification decisions of DNNs.
	}
	\label{fig:FeatureRobustness}
\end{figure}

The second problem studied in this paper concerns which features are the most robust against perturbations, illustrated in Figure~\ref{fig:FeatureRobustness}. The \emph{feature robustness} problem has been studied in explainable AI. For example, \cite{LIME} explains the decision making of an image classification network through different contributions of the superpixels (i.e., features) and \cite{LL2017} presents a general additive model for explaining the decisions of a network by the Shapley value computed over the set of features.  

Let $P_0(\inputImage_1,\inputImage_2) \subseteq P_0$ be the set of input dimensions on which $\inputImage_1$ and $\inputImage_2$ have different values. 
%channel-level values. 
%
\begin{definition}[Feature Robustness] \label{def:featurerobustness}
The \emph{feature robustness} problem is defined as follows. 
\begin{equation} \label{equ:fsobjective}
    \featurerobustness_{\setoffeatures}(k,d,\inputImage,c) = \max_{\feature \in \setoffeatures(\inputImage)} \{ \xfeaturerobustness_{\setoffeatures}(\feature,k,d,\inputImage,c) \}
\end{equation}
where $\xfeaturerobustness_{\setoffeatures}(\feature_m,k,d,\inputImage_m,c) = $
\begin{equation} \label{equ:fsobjective2}
    \begin{cases}
        \displaystyle \min_{\inputImage_{m+1}\in \eta(\inputImage,L_k,d)} \{\distance{k}{\inputImage_m-\inputImage_{m+1}} + \featurerobustness_{\setoffeatures}(k,d,\inputImage_{m+1},c) \mid \\
        \hspace{4cm} \emptyset \neq P_0(\inputImage_m, \inputImage_{m+1}) \subseteq P_{\feature_m} \},
        & \text{if } \inputImage_m \notin adv_{k,d}(\inputImage,c) \\
        0, 
        & \text{otherwise}
    \end{cases}
\commentout{
\left\{
\begin{array}{l}
\displaystyle\min_{\inputImage_{m+1}\in \eta(\inputImage,L_k,d)} \{\distance{k}{\inputImage_m-\inputImage_{m+1}} + \featurerobustness_{\setoffeatures}(k,d,\inputImage_{m+1},c)~|~\\
\hspace{3cm}\emptyset \neq P_0(\inputImage_m, \inputImage_{m+1}) \subseteq P_{\feature_m}\},  
\text{if } \inputImage_m \notin adv_{k,d}(\inputImage,c) \\
0,   
\hfill\text{otherwise}
\end{array}
\right.}
%
%\max_{\setoffeatures' \subseteq \setoffeatures(\inputImage), \setoffeatures' \neq \emptyset} \{ \leastseverity(k,d,\inputImage,c)~|~P_0(\inputImage,\inputImage') \subseteq \setoffeatures' \}
\commentout{
\begin{array}{l}
\featurerobustness_{\setoffeatures}(k,d,\inputImage,c) = \\ \max_{\setoffeatures'\subseteq \setoffeatures(\inputImage), \setoffeatures'\neq \emptyset} \min_{\inputImage'\in\inputdomain} \{ \distance{k}{\inputImage - \inputImage'}~|~
  P_0(\inputImage,\inputImage') \subseteq  \setoffeatures',\inputImage' \in adv_{k,d}(\inputImage,c)\}
 % (\text{or } adv_{k,d}(\inputImage))\}
\end{array}}
\end{equation}
where $\setoffeatures$ is a feature extraction function, and $\inputImage_{m}, \inputImage_{m+1} (m \in \mathbb{N})$ are the  inputs before and after the application of  some manipulation on a feature $\feature_m$, respectively. If after selecting a feature $\feature_m$ no adversarial example can be reached, i.e., $\forall \inputImage_{m+1}:P_0(\inputImage_m, \inputImage_{m+1}) \subseteq P_{\feature_m} \Rightarrow \inputImage_{m+1} \notin adv_{k,d}(\inputImage,c)$, then we let $\xfeaturerobustness_{\setoffeatures}(\feature_m,k,d,\inputImage_m,c) = \depsilon$.
\end{definition}

Intuitively, the search for the most robust feature alternates between maximising over the features and minimising over the possible input dimensions within the selected feature, with the distance to the adversarial example as the objective. % of the distance in reaching adversarial examples. 
Starting from $\featurerobustness_{\setoffeatures}(k,d,\inputImage_0,c)$ where $\inputImage_0$ is the original image, the process moves to $\featurerobustness_{\setoffeatures}(k,d,\inputImage_1,c)$ by a max-min alternation on selecting feature $\feature_0$ and next input $\inputImage_1$. This continues until either an adversarial example is found, or the next input $\inputImage_i$ for some $i>0$ is outside the $d$-neighbourhood $\eta(\inputImage,L_k,d)$.  The value $\depsilon$ is  used is %in this case (instead of $d$)  
%is that, there needs to be a differentiation with 
to differentiate from the case where the minimal adversarial example has exactly distance $d$ from $\inputImage_0$ and the manipulations are within $\feature_0$. \commentout{ i.e., 
\begin{equation}
\begin{array}{l}
\exists \inputImage_{1}: (P_0(\inputImage_0, \inputImage_{1}) \subseteq P_{\feature_0} \land \inputImage_{1} \in adv_{k,d}(\inputImage,c) \land \distance{k}{\inputImage_{1} - \inputImage_{0}} = d) \land\\
\forall \inputImage_{1}': (P_0(\inputImage_0, \inputImage_{1}') \subseteq P_{\feature_0} \land \inputImage_{1}' \in adv_{k,d}(\inputImage,c) \Rightarrow \distance{k}{\inputImage_{1}' - \inputImage_{0}} = d)
\end{array}
\end{equation} }
In such a case, according to Equation (\ref{equ:fsobjective2}), we have $\xfeaturerobustness_{\setoffeatures}(\feature_0,k,d,\inputImage_0,c) = d$.
%Same as $\maximumsaferadius$, the $\featurerobustness_\setoffeatures$ problem will later be reduced to the finite optimisation problem in Definition~\ref{def:FFR}.

\commentout{We are also interested in a simpler variant of the problem $\featurerobustness_{\setoffeatures}'(k,d,\inputImage,c)$, named one-alternation feature robustness problem, which  is to 
find a subset $\setoffeatures'$ of \emph{robust} features such that it is the most difficult (among all possible subsets) to manipulate the original input $\inputImage$ into an adversarial example by only changing those dimensions in $\setoffeatures'$.

Formally, 
%\begin{equation}\label{equ:simplefeaturerobustness}
%\featurerobustness_{\setoffeatures}'(k,d,\inputImage,c) = \max_{\setoffeatures' \subseteq \setoffeatures(\inputImage), \setoffeatures' \neq \emptyset} \{ \maximumsaferadius(k,d,\inputImage,c)~|~P_0(\inputImage,\inputImage') \subseteq P_{\setoffeatures'} \}
%\end{equation}
\begin{equation}\label{equ:simplefeaturerobustness}
\featurerobustness_{\setoffeatures}'(k,d,\inputImage,c) = \max_{\emptyset \neq \setoffeatures' \subseteq \setoffeatures(\inputImage)} \min_{\inputImage'\in \inputdomain} \{ \distance{k}{\inputImage - \inputImage'}|~\inputImage' \in adv_{k,d}(\inputImage,c), P_0(\inputImage,\inputImage') \subseteq P_{\setoffeatures'} \}
\end{equation}
where $P_{\setoffeatures'}=\bigcup_{\feature\in \setoffeatures'} P_\feature$. 
%In this problem, for 
Moreover, for a given $\setoffeatures'\subseteq \setoffeatures(\inputImage)$, if for all   $\inputImage'\in\inputdomain$  we have  $P_0(\inputImage,\inputImage') \subseteq  P_{\setoffeatures'}\Rightarrow\inputImage' \not \in adv_{k,d}(\inputImage,c)$, then we let 
$\min_{\inputImage'\in\inputdomain} \{ \distance{k}{\inputImage - \inputImage'}~|~P_0(\inputImage,\inputImage') \subseteq  \setoffeatures',\inputImage' \in adv_{k,d}(\inputImage,c)\} = \depsilon$, since the latter cannot be computed but is  definitely greater than $d$. For the simplification of notations, we may write \begin{equation}
\featurerobustness_{\setoffeatures}'(k,d,\inputImage,c) = \max_{\setoffeatures' \subseteq \setoffeatures(\inputImage), \setoffeatures' \neq \emptyset} \{ \maximumsaferadius(k,d,\inputImage,c)~|~P_0(\inputImage,\inputImage') \subseteq P_{\setoffeatures'} \}
\end{equation} 
to illustrate that the problem can be seen as  first identifying a set of features and then finding maximum safe radii for all the features. 
The study of the one-alternation problem can be motivated by e.g., recent research on estimating the information a set of features can be transmitted to the output \cite{GKOV2017,CSWJ2018}.  
%The finite formalisation of this $\featurerobustness_{\setoffeatures}'$ problem is presented in Equation~(\ref{equ:rfobjective2}).
}

Assuming  $\featurerobustness_{\setoffeatures}(k,d,\inputImage,c)$ has been computed and a \emph{distance budget} $d'\leq d$ is given to manipulate the input $\inputImage$, the following cases can be considered.
\begin{itemize}
    \item If $\featurerobustness_{\setoffeatures}(k,d,\inputImage,c) > d$, then there are robust features, and if manipulations are restricted to those features no adversarial example is possible.
    %, even with a larger budget $d \geq d'$.
    \item If $\featurerobustness_{\setoffeatures}(k,d,\inputImage,c)\leq d' \leq d$, then, no matter how one restricts the features to be manipulated, an adversarial example can be found within the budget. 
    \item If $\maximumsaferadius(k,d,\inputImage,c) \leq d' < \featurerobustness_{\setoffeatures}(k,d,\inputImage,c)\leq d$, then the existence of adversarial examples is \response{controllable}, i.e., we can choose a set of features on which the given distance budget $d'$ is insufficient to find an adversarial example. \response{
    This differs from the first case in that an adversarial example can be found if given a larger budget $d$.} 
\end{itemize} 

Therefore, studying the feature robustness problem enables a better understanding of the robustness of individual features and how the features contribute to the robustness of an image. 

It is straightforward to show that 
\begin{equation}
\maximumsaferadius(k,d,\inputImage,c) \leq \featurerobustness_{\setoffeatures}(k,d,\inputImage,c). 
\end{equation}
Compared to the absolute safety radius by  $\maximumsaferadius(k,d,\inputImage,c)$, $\featurerobustness_{\setoffeatures}(k,d,\inputImage,c) $ can be seen as a \emph{relative} safety radius, within which the existence of adversarial examples can be controlled. 
Theoretically, the $\maximumsaferadius(k,d,\inputImage,c)$ problem can be seen as a special case of the $\featurerobustness_{\setoffeatures}(k,d,\inputImage,c) $ problem, when we let $|\setoffeatures(\inputImage)|=1$. We study them separately, because the $\maximumsaferadius(k,d,\inputImage,c)$ problem is interesting on its own, and, more importantly, we show later that they can be solved using different methods. 

One can also consider a simpler variant of this problem, which aims to find a subset of features that are most resilient to perturbations, and which can be solved by only considering singleton sets of features. We omit the formalisation for reasons of space.

We illustrate the two problems, the \emph{maximum safe radius} ($\maximumsaferadius$) and (the simpler variant of)  \emph{feature robustness} ($\featurerobustness_\setoffeatures'$), through Example~\ref{exp:problems}. %Intuitively, $\maximumsaferadius$ is to ensure the absolute safety of a DNN w.r.t. an input, and $\featurerobustness_\setoffeatures$ is to guarantee the relative safety of a DNN w.r.t. the features of an input.

\begin{figure}[t]
	\centering
	\includegraphics[width=1\linewidth]{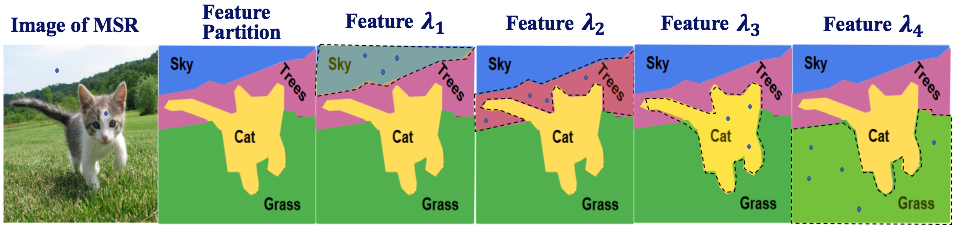}
	\caption{\response{Illustration of the \emph{maximum safe radius} ($\maximumsaferadius$) and \emph{feature robustness} ($\featurerobustness_\setoffeatures$) problems. From left to right: an adversarial example with two pixel changes, feature extraction of the image, adversarial examples with three changed pixels on features `Sky' and  `Cat', four changed pixels on `Trees', and five pixel manipulations on `Grass', respectively.}}
	\label{fig:problem_explain}
\end{figure}

\begin{example}\label{exp:problems}
As shown in \response{Figure~\ref{fig:problem_explain}}, the minimum distance from the original image to an adversary is two pixels, i.e., $\maximumsaferadius=2$ (for simplicity here we take the $L_0$-norm). That is, for a norm ball with radius less than 2, the image is absolutely safe. Note that, for $\maximumsaferadius$, the manipulations can %happen anywhere on the image, i.e., 
span different features. After feature extraction, %the one-alternation \emph{feature robustness} problem is to first 
we find the \emph{maximum safe radius} of each feature, i.e., $\maximumsaferadius_{\lambda_1}=3$, $ \maximumsaferadius_{\lambda_2}=4$, $ \maximumsaferadius_{\lambda_3}=3$, $ \maximumsaferadius_{\lambda_4}=5$.
%, and then retrieve the maximum, i.e., $\featurerobustness_\setoffeatures'=5$.
%From Lemma~\ref{lemma:singleton_feature}, we know that $\maximumsaferadius_{\lambda_1 + \lambda_4} \leq 3$,  $\maximumsaferadius_{\lambda_2 + \lambda_3 + \lambda_4} \leq 3$, etc., that is, individual features are more robust than a set of them.

Assume that we have a norm ball of radius $d$, and a distance budget $d'$, then: %the above-mentioned three cases can be instantiated as follows:
\begin{itemize}
    \setlength\itemsep{0em}
    \item if $d=4$, then by definition we have $\featurerobustness_\setoffeatures'=4^\epsilon$, i.e., %there exists a robust feature. This can be seen from the fact that manipulating up to 4 pixels on the 
    manipulating `Grass'  %(i.e., feature $\lambda_4$)  
    cannot change the classification;
    \item if $d=10$ and $d'=7$ then we have $\featurerobustness_\setoffeatures' = 5 < d' < d$, i.e., all the features are fragile; % This can be seen from the fact that we can find adversarial examples by manipulating any of the features for  
    %7 pixels;
    \item if $d=10$ and $d'=4$ then $d' < \featurerobustness_\setoffeatures' = 5  < d$, i.e., the existence of an adversary is controllable by restricting perturbations to `Grass'. %This can be seen from the fact that the `Grass' is a robust feature whilst the `Sky', `Trees', `Cat' are not. The input is robust if the perturbation can be restricted to the feature `Grass'.
\end{itemize}

\end{example}

%The results in the paper are for the feature robustness problem, and can be adapted to work for the one-alternation  problem.
\paragraph{Approximation Based on Finite Optimisation}
Similarly to the case of the maximum safe radius, we reduce the feature robustness problem to finite optimisation by implementing the search for adversarial examples using input manipulations.

\begin{definition}\label{def:FFR}
Let $\tau \in (0,1]$ be a manipulation magnitude.
The finite feature robustness problem $\finitefeaturerobustness_{\setoffeatures}(\tau,k,d,\inputImage,c)$ based on input manipulation is as follows:
\begin{equation}\label{equ:FFR}
\finitefeaturerobustness_{\setoffeatures}(\tau,k,d,\inputImage,c) = 
\max_{\feature \in \setoffeatures(\inputImage)} \{
\xfinitefeaturerobustness_{\setoffeatures}(\feature,\tau,k,d,\inputImage,c)
\}
\end{equation}
where $\xfinitefeaturerobustness_{\setoffeatures}(\feature_m,\tau,k,d,\inputImage_m,c) = $
\response{
\begin{equation}
    \begin{cases}
        \displaystyle\min_{X\subseteq P_{\feature_m}}\min_{\instruction\in \instructionset} \{\distance{k}{\inputImage_m-\manipulation_{\tau,X,\instruction}(\inputImage_m)} + \finitefeaturerobustness_{\setoffeatures}(k,d,\manipulation_{\tau,X,\instruction}(\inputImage_m),c)\},
        & \text{if } \inputImage_m \notin adv_{k,d}(\inputImage,c) \\
        0,
        & \text{otherwise}
    \end{cases}
\commentout{
\left\{
\begin{array}{ll}
\displaystyle\min_{X\subseteq P_{\feature_m}}\min_{\instruction\in \instructionset} \{\distance{k}{\inputImage_m-\manipulation_{\tau,X,\instruction}(\inputImage_m)} + \finitefeaturerobustness_{\setoffeatures}(k,d,\manipulation_{\tau,X,\instruction}(\inputImage_m),c)\} & \text{if } \inputImage_m \notin adv_{k,d}(\inputImage,c) \\
0   & 
\text{otherwise}
\end{array}
\right.}
\end{equation}
where $\setoffeatures$ is a feature extraction function, and $\inputImage_{m}, \manipulation_{\tau,X,\instruction}(\inputImage_m), m \in \mathbb{N},$ are the perturbed inputs before and after the application of manipulation $\manipulation_{\tau,X,\instruction}$ on a feature $\feature_m$, respectively.
If after selecting a feature $\feature_m$ no adversarial example can be reached, i.e., $\forall X\subseteq P_{\feature_m}\forall \instruction \in \instructionset: \manipulation_{\tau,X,\instruction}(\inputImage_m) \notin adv_{k,d}(\inputImage,c)$, then we let $\xfinitefeaturerobustness_{\setoffeatures}(\feature_m,\tau,k,d,\inputImage_m,c) = \depsilon$.
}
\end{definition}

Compared to Definition~\ref{def:featurerobustness}, the search for another input by $\min_{\inputImage_{m+1}\in \eta(\inputImage,L_k,d)}$ is implemented by combinatorial
search over the finite sets of feature sets and instructions. %operations $\min_{X\subseteq P_{\feature_m}}\min_{\instruction\in {\cal I}}$.
\commentout{
Moreover, the finite one-alternation feature robustness problem $\finitefeaturerobustness_{\setoffeatures}'(\tau,k,d,\inputImage,c)$ is as follows. 
\begin{equation}\label{equ:rfobjective2}
\max_{\setoffeatures'\subseteq \setoffeatures(\inputImage)} \min_{X\subseteq P_{\setoffeatures'}}\min_{\instruction\in {\cal I}} \{ \distance{k}{\inputImage - \manipulation_{\tau,X,i}(\inputImage)}  ~|~ \manipulation_{\tau,X,i}(\inputImage) \in adv_{k,d}(\inputImage,c)\}
\end{equation}
Moreover, for a given $\setoffeatures'\subseteq \setoffeatures(\inputImage)$, if for all $X\subseteq P_{\setoffeatures'}$ and $\instruction\in {\cal I}$ we have $\manipulation_{\tau,X,i}(\inputImage)\notin adv_{k,d}(\inputImage,c)$ then we let  $\min_{X\subseteq P_{\setoffeatures'}}\min_{\instruction\in {\cal I}} \{ \distance{k}{\inputImage - \manipulation_{\tau,X,i}(\inputImage)}  ~|~ \manipulation_{\tau,X,i}(\inputImage) \in adv_{k,d}(\inputImage,c)\} = \depsilon$. Comparing with Equation (\ref{equ:simplefeaturerobustness}), after determining a set $\setoffeatures'$ of features by $\max_{\setoffeatures'\subseteq \setoffeatures(\inputImage)}$, the finding of an input by $\maximumsaferadius$  is implemented by discrete operations
$\min_{X\subseteq P_{\setoffeatures'}}\min_{\instruction\in {\cal I}}$. 
}

%\paragraph{Feature Robustness Problem}
\paragraph{Error Bounds}
The case for the feature robustness problem largely follows that of the maximum safe radius problem. First of all, we have the following lemma which bounds the error of $\finitefeaturerobustness_{\setoffeatures}(\tau,k,d,\inputImage,c)$ to $\frac{1}{2}d(k,\tau)$, which depends on the value of magnitude.

\begin{lemma}\label{thm:misclassification2}
If all $\tau$-grid inputs are misclassification aggregators with respect to $\frac{1}{2}d(k,\tau)$, then 
$\featurerobustness_{\setoffeatures}(k,d,\inputImage,c) \geq  \finitefeaturerobustness_{\setoffeatures}(\tau,k,d,\inputImage,c) - \frac{1}{2}d(k,\tau)$.
%$\severity(M(k,d,\inputImage,c), \max_{\strategy_{\playerTwo}}) > d$, then $adv_{k,d}(\inputImage,c)=\emptyset$. 
\end{lemma}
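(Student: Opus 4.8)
The plan is to mirror the contradiction argument of Lemma~\ref{thm:misclassification}, adapting it to the recursive max--min structure of Definition~\ref{def:featurerobustness}. The first observation to record is that $\featurerobustness_{\setoffeatures}$ and $\finitefeaturerobustness_{\setoffeatures}$ differ only in the move set available to the minimising side: the inner minimisation ranges over arbitrary perturbations confined to the chosen feature in the continuous case, and over $\tau$-grid manipulations $\manipulation_{\tau,X,\instruction}$ confined to that feature in the finite case, while the outer maximisation is over the same set $\setoffeatures(\inputImage)$ in both. Restricting the minimiser to grid moves can only increase the value, which gives the companion bound $\featurerobustness_{\setoffeatures} \leq \finitefeaturerobustness_{\setoffeatures}$; the present lemma supplies the matching lower bound up to the discretisation error $\frac{1}{2}d(k,\tau)$.

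I would argue by contradiction, assuming $\finitefeaturerobustness_{\setoffeatures}(\tau,k,d,\inputImage,c) = d''$ for some $d''>0$ while $\featurerobustness_{\setoffeatures}(k,d,\inputImage,c) < d'' - \frac{1}{2}d(k,\tau)$. Reading both quantities as values of a two-player game in which Player~$\playerOne$ selects features and Player~$\playerTwo$ perturbs, let $\pi^{\ast}$ be the strategy attaining the finite value $d''$ for Player~$\playerOne$. It then suffices to show that, when $\pi^{\ast}$ is played in the continuous game, it still forces cost at least $d'' - \frac{1}{2}d(k,\tau)$ against every continuous response of Player~$\playerTwo$, since this lower-bounds $\featurerobustness_{\setoffeatures}$ and contradicts the assumption. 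Concretely, take a continuous play against $\pi^{\ast}$ reaching an adversarial example $\inputImage_M$ with accumulated cost $V = \featurerobustness_{\setoffeatures}$; by the triangle inequality the whole path collapses to $\distance{k}{\inputImage - \inputImage_M} \leq V$, and by Lemma~\ref{lemma:cover} there is a $\tau$-grid input $\inputImage_M'$ with $\inputImage_M \in \eta(\inputImage_M',L_k,\frac{1}{2}d(k,\tau))$. The misclassification-aggregator hypothesis then promotes $\inputImage_M'$ to an adversarial example, so that $\distance{k}{\inputImage - \inputImage_M'} \leq V + \frac{1}{2}d(k,\tau)$. This localises the whole comparison to Player~$\playerTwo$'s perturbation choices, exactly as in the maximum safe radius case, and the triangle-inequality collapse is the device that lets a single application of the aggregator (at the terminal transition where the class flips) carry the entire error.

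The main obstacle, and the genuine departure from Lemma~\ref{thm:misclassification}, is the recursive game structure: the continuation value $\featurerobustness_{\setoffeatures}(k,d,\inputImage_{m+1},c)$ re-maximises over features at every step, so converting the cheap continuous adversary $\inputImage_M'$ into a legal grid play against $\pi^{\ast}$ requires grid-ifying the intermediate states of the path while respecting the history-dependent feature offers of $\pi^{\ast}$, not merely rounding the endpoint. I expect the delicate point to be that a naive step-by-step rounding would incur $\frac{1}{2}d(k,\tau)$ at each transition and hence accumulate to a weaker constant proportional to the path length; the tight single-error bound is what forces the triangle-inequality collapse above, charging the discretisation error only once.

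I would therefore carry out the argument as an induction on the recursion depth (equivalently, on the remaining distance budget within $\eta(\inputImage,L_k,d)$), establishing the per-input statement $\featurerobustness_{\setoffeatures}(k,d,\inputImage_m,c) \geq \finitefeaturerobustness_{\setoffeatures}(\tau,k,d,\inputImage_m,c) - \frac{1}{2}d(k,\tau)$ for all $\tau$-grid inputs simultaneously, using the outer maximisation to pass from the aggregate value to a fixed feature $\feature$ and the triangle inequality to keep the error from compounding. The base case is the adversarial terminal state, where both values are $0$; the inductive step transfers Player~$\playerOne$'s finite-optimal feature choice into the continuous game and applies the cover/aggregator rounding to Player~$\playerTwo$'s witnessing move. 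The game-theoretic bookkeeping in this step, rather than any new analytic ingredient, is where the real work lies.
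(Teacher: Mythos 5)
Your proposal is correct and follows essentially the same route as the paper's proof: a contradiction argument in which the cheap continuous adversarial example is rounded to a nearby $\tau$-grid input via Lemma~\ref{lemma:cover}, promoted to an adversarial example by the misclassification-aggregator hypothesis, and the $\frac{1}{2}d(k,\tau)$ error is charged exactly once through the triangle inequality. The only difference is bookkeeping: the paper flattens the recursive max--min definition into a choice of feature subset plus manipulations within it (splitting off the case where no finite adversarial example exists, i.e., value $\depsilon$), whereas you retain the strategy-level game structure and handle the recursion by induction --- a more explicit treatment of a point the paper glosses over, but the same underlying argument.
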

\begin{proof} 
%(Sketch)
%First, we need to show that $\eta(\inputImage,k,d) \subseteq \bigcup_{\inputImage_1\in \tauimage(\inputImage,k,d)}\eta(\inputImage_1,1,\frac{1}{2}d(k,\tau))$. This can be obtained by the definitions of $\eta(\inputImage_1,1,\frac{1}{2}d(k,\tau))$ and $\eta(\inputImage,k,d)$. 
%
\response{
We prove by contradiction. Assume that $\finitefeaturerobustness_{\setoffeatures}(\tau,k,d,\inputImage,c) = d'$ for some $d' >0$, and $\featurerobustness_{\setoffeatures}(k,d,\inputImage,c) <  d' - \frac{1}{2}d(k,\tau)$. Then, for all subsets $\setoffeatures\subseteq \setoffeatures(\inputImage)$ of features, either for all $X\subseteq \bigcup_{\feature\in \setoffeatures}P_\feature$ and $\instruction \in \instructionset$ we have $\manipulation_{\tau,X,\instruction}(\inputImage)\notin adv_{k,d}(\inputImage,c)$, or 
there must exist $X\subseteq \bigcup_{\feature\in \setoffeatures}P_\feature$ and $\instruction \in \instructionset$ such that  
\begin{equation}\label{equ:lemma61}
\inputImage' = \manipulation_{\tau,X,\instruction}(\inputImage)  \in adv_{k,d}(\inputImage,c) \text{ and }\distance{k}{\inputImage'-\inputImage} <  d' - \frac{1}{2}d(k,\tau),
\end{equation} and $\inputImage'$ is not a $\tau$-grid input. 
}

For the latter case, by Lemma~\ref{lemma:cover}, there must exist a  $\tau$-grid input $\inputImage''$ such that $\inputImage' \in \eta(\inputImage'',L_k,\frac{1}{2}d(k,\tau))$. Now 
because all $\tau$-grid inputs are misclassification aggregators with respect to $\frac{1}{2}d(k,\tau)$, we have $\inputImage'' \in adv_{k,d}(\inputImage,c)$. 
By $\inputImage'' \in adv_{k,d}(\inputImage,c)$ and the fact that $\inputImage''$ is a $\tau$-grid input, we have that  
\begin{equation}\label{equ:lemma62}
\distance{k}{\inputImage - \inputImage''} \leq \distance{k}{\inputImage - \inputImage'} + \frac{1}{2}d(k,\tau) .
\end{equation}
Therefore, we have $\finitefeaturerobustness_{\setoffeatures}(\tau,k,d,\inputImage,c) < d'$ by the  combining Equations (\ref{equ:lemma61}) and (\ref{equ:lemma62}). However, this contradicts the hypothesis that $\finitefeaturerobustness_{\setoffeatures}(\tau,k,d,\inputImage,c) = d'$.

For the former case, we have $\finitefeaturerobustness_{\setoffeatures}(\tau,k,d,\inputImage,c) = d' > d$. If $\featurerobustness_{\setoffeatures}(k,d,\inputImage,c)< d' - \frac{1}{2}d(k,\tau)$, then there exists an $\inputImage'$ such that $\inputImage' \in \eta(\inputImage'',L_k,\frac{1}{2}d(k,\tau))$ for some  $\tau$-grid input $\inputImage''$. By the definition of misclassification aggregator, we have $\inputImage'' \in adv_{k,d}(\inputImage,c)$. This contradicts the hypothesis that $\finitefeaturerobustness_{\setoffeatures}(\tau,k,d,\inputImage,c) = d' > d$.
\end{proof}

Combining Lemmas~\ref{lemma:onedirection}, \ref{lemma:twodirection}, and \ref{thm:misclassification2}, we have the following theorem which shows that the reduction has a provable guarantee under the assumption of Lipschitz continuity.
The approximation error depends
linearly on the prediction confidence on discretised `grid' inputs and is inversely proportional with respect to the Lipschitz constants of the network.

\response{
\begin{theorem}\label{thm:lipschitzreduction2}
Let $N$ be a Lipschitz network with a Lipschitz constant $\lipschitzConstant_c$ for every class $c\in C$. If $d(k,\tau) \leq \frac{2g(\inputImage',N(\inputImage'))}{\max_{c'\in C, c'\neq N(\inputImage')} (\lipschitzConstant_{N(\inputImage')}+\lipschitzConstant_{c'})}$  for all $\tau$-grid inputs $\inputImage'\in \tauimage(\inputImage,k,d)$, then we can use $\finitefeaturerobustness_{\setoffeatures}(\tau,k,d,\inputImage,c)$ to estimate $\featurerobustness_{\setoffeatures}(k,d,\inputImage,c)$ with an error bound $\frac{1}{2}d(k,\tau)$.
\end{theorem}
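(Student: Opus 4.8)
The plan is to mirror the proof of Theorem~\ref{thm:lipschitzreduction} for the maximum safe radius, assembling the two opposing inequalities that sandwich $\featurerobustness_{\setoffeatures}(k,d,\inputImage,c)$ between $\finitefeaturerobustness_{\setoffeatures}(\tau,k,d,\inputImage,c)$ and $\finitefeaturerobustness_{\setoffeatures}(\tau,k,d,\inputImage,c) - \frac{1}{2}d(k,\tau)$. First I would establish the ``one direction'' inequality $\featurerobustness_{\setoffeatures}(k,d,\inputImage,c) \leq \finitefeaturerobustness_{\setoffeatures}(\tau,k,d,\inputImage,c)$, the analogue of Lemma~\ref{lemma:onedirection}. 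This should hold for any $\tau \in (0,1]$ because the finite problem restricts the inner search over successor inputs $\inputImage_{m+1}$ to those reachable by an input manipulation $\manipulation_{\tau,X,\instruction}$; minimising over this smaller, finite set can only increase the value, and since the outer operation is a maximisation over features, the pointwise inequality is preserved through the max-min recursion, so the finite value dominates the continuous one.

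Next I would invoke Lemma~\ref{lemma:twodirection} to discharge the hypothesis of Lemma~\ref{thm:misclassification2}. The assumed bound $d(k,\tau) \leq \frac{2g(\inputImage',N(\inputImage'))}{\max_{c'\in C, c'\neq N(\inputImage')}(\lipschitzConstant_{N(\inputImage')}+\lipschitzConstant_{c'})}$, holding at every $\tau$-grid input $\inputImage' \in \tauimage(\inputImage,k,d)$, is precisely the premise of Lemma~\ref{lemma:twodirection}, and therefore guarantees that all $\tau$-grid inputs are misclassification aggregators with respect to $\frac{1}{2}d(k,\tau)$. This step is independent of which robustness notion we approximate: it is a purely geometric statement about how Lipschitz continuity controls class changes within each grid cell, so it transfers verbatim from the maximum-safe-radius setting.

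With the aggregator property in hand, Lemma~\ref{thm:misclassification2} supplies the reverse inequality $\featurerobustness_{\setoffeatures}(k,d,\inputImage,c) \geq \finitefeaturerobustness_{\setoffeatures}(\tau,k,d,\inputImage,c) - \frac{1}{2}d(k,\tau)$. Writing $d' = \finitefeaturerobustness_{\setoffeatures}(\tau,k,d,\inputImage,c)$ and combining the two bounds yields $d' - \frac{1}{2}d(k,\tau) \leq \featurerobustness_{\setoffeatures}(k,d,\inputImage,c) \leq d'$, so estimating $\featurerobustness_{\setoffeatures}$ by $\finitefeaturerobustness_{\setoffeatures}$ incurs error at most $\frac{1}{2}d(k,\tau)$, as claimed.

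The theorem itself is thus a routine assembly, with the genuine work residing in the supporting lemmas. I expect the main obstacle to be the ``one direction'' inequality at the level of the full feature-robustness objective rather than its flat $\maximumsaferadius$ counterpart: because $\featurerobustness_{\setoffeatures}$ is defined by a recursive max-min alternation, where each minimisation feeds its perturbed input back into a fresh $\featurerobustness_{\setoffeatures}$ call, one must argue that restricting to manipulation-reachable successors preserves the inequality through every round of the recursion, and in the boundary cases where no adversarial example is reachable and the value defaults to $\depsilon$. Once this monotonicity through the recursion is justified, the remaining steps are a direct transcription of the maximum-safe-radius argument.
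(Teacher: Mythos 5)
Your proposal is correct and follows essentially the same route as the paper's proof: the paper likewise combines the one-direction inequality (Lemma~\ref{lemma:onedirection}, applied to the feature robustness pair), Lemma~\ref{lemma:twodirection} to discharge the misclassification-aggregator hypothesis, and Lemma~\ref{thm:misclassification2} to obtain the sandwich $d' - \frac{1}{2}d(k,\tau) \leq \featurerobustness_{\setoffeatures}(k,d,\inputImage,c) \leq d'$. If anything, you are slightly more careful than the paper, which cites Lemma~\ref{lemma:onedirection} (stated only for $\maximumsaferadius$) without spelling out why the inequality propagates through the max-min recursion of the feature robustness objective, a point you explicitly flag and sketch.
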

\begin{proof}
By Lemma~\ref{lemma:onedirection}, we have  $\featurerobustness_{\setoffeatures}(k,d,\inputImage,c) \leq \finitefeaturerobustness_{\setoffeatures}(\tau,k,d,\inputImage,c)$ for any $\tau>0$. By Lemma~\ref{lemma:twodirection} and Lemma~\ref{thm:misclassification2}, when $\finitefeaturerobustness_{\setoffeatures}(\tau,k,d,\inputImage,c) = d'$, we have $\featurerobustness_{\setoffeatures}(k,d,\inputImage,c) \geq d' - \frac{1}{2}d(k,\tau)$, under the condition that $d(k,\tau) \leq \frac{2g(\inputImage',N(\inputImage'))}{\max_{c'\in C, c'\neq N(\inputImage')} (\lipschitzConstant_{N(\inputImage')}+\lipschitzConstant_{c'})}$  for all $\tau$-grid inputs $\inputImage'\in \tauimage(\inputImage,k,d)$. Therefore, when $d(k,\tau) \leq \frac{2g(\inputImage',N(\inputImage'))}{\max_{c'\in C, c'\neq N(\inputImage')} (\lipschitzConstant_{N(\inputImage')}+\lipschitzConstant_{c'})}$  for all $\tau$-grid inputs $\inputImage'\in \tauimage(\inputImage,k,d)$, if we use $d'$ to estimate $\featurerobustness_{\setoffeatures}(k,d,\inputImage,c)$, we will have $d' - \frac{1}{2}d(k,\tau)\leq \featurerobustness_{\setoffeatures}(k,d,\inputImage,c)\leq d'$, i.e., the error bound is $\frac{1}{2}d(k,\tau)$. 
\end{proof}
}

\section{A Game-Based Approximate Verification Approach}\label{sec:approach}

In this section, we define a two-player game and show that the solutions of the two finite optimisation problems, $\finitemaximumsaferadius(k,d,\inputImage,c)$ and $\finitefeaturerobustness_{\setoffeatures}(k,d,\inputImage,c)$, given in Expressions (\ref{equ:lsobjective2}) and (\ref{equ:FFR}) can be reduced to the computation of the rewards of Player~$\playerOne$ taking an optimal strategy. The two problems differ in that they induce games in which the two players are \emph{cooperative} or \emph{competitive}, respectively.

%
%In the following,  we show how to use a two-player game to construct such a sequence of atomic input manipulations to implement the optimisation objectives in Equations (\ref{equ:lsobjective2}) and (\ref{equ:rfobjective2}). 
The game proceeds by constructing a sequence of atomic input manipulations to implement the optimisation objectives in Equations (\ref{equ:lsobjective2}) and (\ref{equ:FFR}). 

\subsection{Problem Solving as a Two-Player Turn-Based Game}\label{sec:twoplayergame}

%We formulate the search for adversarial examples as a two-player turn-based stochastic game, where

The game has two players, who take turns to act. Player~$\playerOne$ selects features and Player~$\playerTwo$ then selects an atomic input manipulation within the selected features.
%pixels 
%dimensions
%within the selected features and a manipulation instruction. 
While Player~$\playerTwo$ aims to minimise the distance to an adversarial  example, depending on the optimisation objective designed for either $\finitemaximumsaferadius(k,d,\inputImage,c)$ or $\finitefeaturerobustness_{\setoffeatures}(k,d,\inputImage,c)$, Player~$\playerOne$ can be cooperative or competitive. 
We remark that, in contrast to \cite{WHK2017} where the games were originally introduced, we do not consider the nature player.
%
%can be cooperative, adversarial, or nature who samples the pixels according to the Gaussian mixture model. 
%To give more intuition for feature-guided search, in Appendix of~\cite{WHK2017} 
%we demonstrate how the  distribution of the Gaussian mixture model representation evolves for different adversarial examples. 
%
%
%Assume two players $\playerOne$ and $\playerTwo$.  

\begin{figure}[t]
	\centering
	\includegraphics[width=1\linewidth]{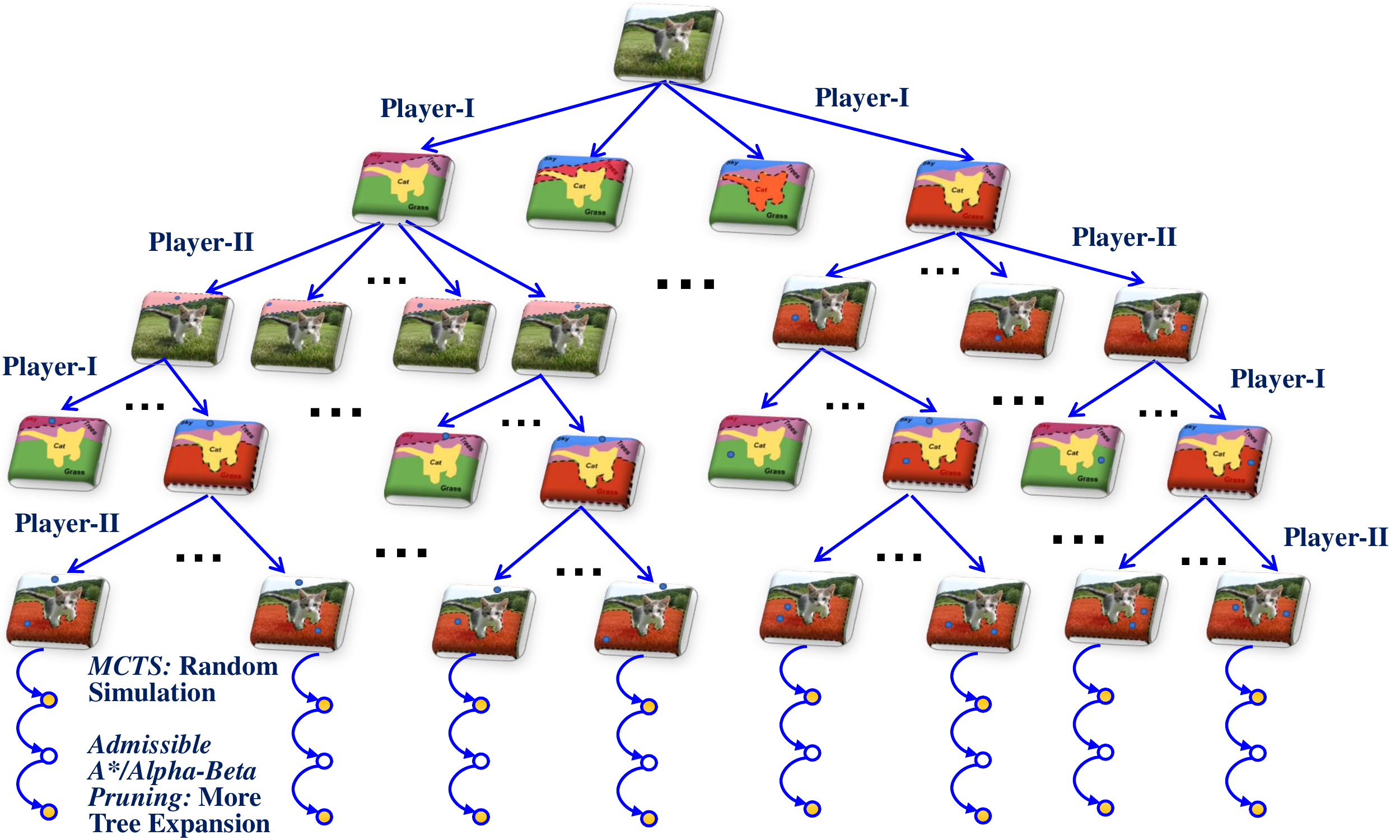}
	\caption{\response{Two-player turn-based solution for finite optimisation. %: two players participate in this game and take turns to act. 
	Player~$\playerOne$ selects features and Player~$\playerTwo$ then performs an atomic input manipulation within the selected features. For the {\em Maximum Safe Radius} problem, both Player~$\playerOne$ and Player~$\playerTwo$ aim to minimise the distance to an adversarial example; for the {\em Feature Robustness} problem, while Player~$\playerTwo$ has the same objective, Player$~\playerOne$ plays against this, i.e., aiming to prevent the reaching of adversarial examples by taking suitable actions.
	%
	%Player$~\playerOne$ aims to pick a robust feature with maximum adversary distance whilst Player~$\playerTwo$ tends to minimise the adversary distance of the chosen feature. 
	The game  terminates when an adversary is found or the distance budget for adversarial perturbation has been reached.}}
	\label{fig:game}
\end{figure}

Formally, we let $M(k,d,\inputImage,c)=(S \cup (S \times \setoffeatures(\inputImage)),s_0,\{T_a\}_{a \in \{\playerOne,\playerTwo\}},L)$ be a game model, where 
\begin{itemize}
    \item $S$ is a set of game states belonging to Player~$\playerOne$ such that each state represents an input in $ \eta(\inputImage,L_k,d)$, and $S\times \setoffeatures(\inputImage)$ is a set of game states belonging to Player~$\playerTwo$ where $\setoffeatures(\inputImage)$ is a set of features of input $\inputImage$. We write $\inputImage(s)$ for the input associated to the state $s\in S$.
    
    \item $s_0\in S$ is the initial game state such that $\inputImage(s_0)$ is the original input $\inputImage$. 
    
    \item The transition relation $T_\playerOne: S \times\setoffeatures(\inputImage)\rightarrow S \times\setoffeatures(\inputImage)$ is defined as  
    \begin{equation}
        T_\playerOne(s,\feature)=(s,\feature),
    \end{equation} 
    and transition relation $T_\playerTwo: (S \times\setoffeatures(\inputImage))\times \powerset{P_0}\times \instructionset \rightarrow S$ is defined as 
    \begin{equation}\label{eqn-20}
        T_\playerTwo((s,\feature),X,\instruction)=\manipulation_{\tau,X,\instruction}(\inputImage(s)),
    \end{equation}
    where $X\subseteq P_\feature$ is a set of input dimensions within feature $\feature$, $\instruction: P_0 \rightarrow  \{-1,+1\}$ is a manipulation instruction, and $\manipulation_{\tau,X,\instruction}$ is an atomic dimension manipulation as defined in Definition~\ref{def:atomicManipulation}. Intuitively, in every game state $s \in S$, Player~$\playerOne$ will choose a feature $\feature$, and, in response to this, Player~$\playerTwo$ will choose an atomic input manipulation $\manipulation_{\tau,X,\instruction}$. % where $X\subseteq P_{\feature}$ is a set of input dimensions within feature $\feature$ and $\instruction: P_0 \rightarrow  \{-1,+1\}$ is a manipulation instruction. 
    
    \item The labelling function $L:S\cup (S\times \setoffeatures(\inputImage))\rightarrow C$ assigns to each state $s$ or $(s,\feature)$ a class $N(\inputImage(s))$. %and a two-dimensional Gaussian mixture model $\GaussianMixture(\setoffeatures(\inputImage(s)))$.
\end{itemize}
\response{Figure \ref{fig:game}} illustrates the game model with a partially-expanded game tree. 
  
\paragraph{Strategy Profile}

A path (or game play) of the game model is a sequence $s_1u_1s_2u_2...$ of game states such that,  for all $k\geq 1$, we have $u_k = T_{\playerOne}(s_k,\feature_k)$ for some feature $\feature_k$ and $s_{k+1}=T_\playerTwo((s_k,\feature_k),X_k,\instruction_k)$ for some $(X_k,\instruction_k)$. Let $last(\rho)$ be the last state of a finite path $\rho$, and $Path_a^F$ be the set of finite paths such that $last(\rho)$ belongs to player $a\in \{\playerOne,\playerTwo\}$. 

\begin{definition}
A stochastic strategy $\strategy_\playerOne: Path_\playerOne^F\rightarrow \mathcal{D}(\setoffeatures(\inputImage))$ of Player~$\playerOne$ maps each finite path to a distribution over the next actions, and similarly for 
$\strategy_\playerTwo:Path_\playerTwo^F\rightarrow \mathcal{D}(\powerset{P_0}\times {\cal I})$ for  Player~$\playerTwo$.
We call $\strategy = (\strategy_\playerOne,\strategy_\playerTwo)$ a strategy profile.
\end{definition}
%In this section, we only discuss targeted safety for a given target class $c$ (see Definition~\ref{def:constraints}). All the notations and results can be easily adapted to work with non-targeted safety. 

A strategy $\sigma$ is deterministic if $\sigma(\rho)$ is a Dirac distribution, and is memoryless if $\sigma(\rho)=\sigma(last(\rho))$ for all finite paths $\rho$. 

%The difference between the  feature robustness problem and the one-alternation feature robustness problem is that, for the latter, player $\playerOne$ does not change its action on different finite paths, i.e., once a feature is chosen it will stick with it for the rest of the game. 

\paragraph{Rewards}

We define a reward $\reward(\strategy,\rho)$ for a given strategy profile $\strategy = (\strategy_\playerOne,\strategy_\playerTwo)$ and a finite path $\rho\in \bigcup_{a\in\{\playerOne,\playerTwo\}}Path_a^F$. 
The idea of the reward is to accumulate the distance to %a measure of severity of 
the adversarial example found over a path. 
Note that, given $\strategy$, the game becomes a 
deterministic 
%fully probabilistic 
system. 
Let $\inputImage_\rho' = \inputImage(last(\rho))$  be  the input associated with the last state of the path $\rho$. We write 
\begin{equation} \label{equ:pathterminalstate}
\terminate(\rho) \equiv (N(\inputImage_\rho')=c) \lor (\distance{k}{ \inputImage_\rho'-\inputImage} > d), 
\end{equation}
representing that the path has reached a state whose associated input either is in the target class $c$ or lies outside the region $ \eta(\inputImage,L_k,d)$. The path $\rho$ can be terminated whenever $\terminate(\rho)$ is satisfied. It is not hard to see that, due to the constraints in Definition~\ref{def:constraints}, every infinite path has a finite prefix which can be terminated
\response{(that is, either when an adversarial example is found or the distance to the original image has exceeded $d$). During each expansion of the game model, an atomic manipulation is employed, which excludes the possibility that an input dimension is perturbed in smaller and smaller steps.}

\begin{definition}\label{def:reward}
    Given a strategy profile $\strategy = (\strategy_\playerOne,\strategy_\playerTwo)$ and a finite path $\rho$, we define a reward function as follows:
    $ \reward(\strategy,\rho) = $
\response{
    \begin{equation} \label{rewardFunction}
        \begin{cases}
            \distance{k}{ \inputImage_\rho'-\inputImage},
            & \text{if } \terminate(\rho) \text{ and }\rho\in Path_\playerOne^F \\
            \displaystyle \sum_{\feature \in \setoffeatures(\inputImage)} \strategy_\playerOne(\rho)(\feature) \cdot \reward(\strategy,\rho T_\playerOne(last(\rho),\feature)), 
            & \text{if } \neg \terminate(\rho) \text{ and }\rho\in Path_\playerOne^F \\
            \displaystyle \sum_{(X,\instruction)\in \powerset{P_0}\times \instructionset} \strategy_\playerTwo(\rho )(X,\instruction) \cdot \reward(\strategy,\rho T_\playerTwo(last(\rho),X,\instruction)),
            & \text{if } \rho\in Path_\playerTwo^F
        \end{cases}
    \end{equation}  
where $\strategy_\playerOne(\rho)(\feature)$ is the probability of selecting feature $\feature$ on finite path $\rho$ by Player~$\playerOne$, and $\strategy_\playerTwo(\rho)(X,\instruction)$ is the probability of selecting atomic input manipulation $\manipulation_{\tau,X,\instruction}$ based on $\rho$ by Player~$\playerTwo$. The expression $\rho T_\playerOne(last(\rho),\feature)$ is the resulting path of Player~$\playerOne$ selecting $\feature$, and $\rho T_\playerTwo(last(\rho),X,\instruction)$ is the resulting path of Player~$\playerTwo$ applying $\manipulation_{\tau,X,\instruction}$ on $\inputImage_\rho'$. We note that a path only terminates on Player~$\playerOne$ states. 
}
\end{definition}

Intuitively, if an adversarial example is found then the reward assigned is the 
%inverse of 
%severity (i.e., 
distance to the original input, otherwise it is the weighted summation of the rewards of its children.

\paragraph{Players' Objectives}

Players' strategies are to maximise their rewards in a game. The following players' objectives are designed to match the finite optimisation problems stated in Equations (\ref{equ:lsobjective2}) and (\ref{equ:FFR}).

%Thus, a strategy $\strategy_\playerOne$ to maximise the reward will need to minimise the severity  $\severity_\inputImage(\inputImage_\rho')$, the objective of the problem defined in Definition~\ref{def:objective}. 

\begin{definition}
In a game, Player~$\playerTwo$ chooses a strategy $\strategy_\playerTwo$ to minimise the reward $\reward((\strategy_\playerOne,\strategy_\playerTwo),s_0)$, whilst Player~$\playerOne$ has different goals depending on the optimisation problem under consideration. 
\begin{itemize}
    \item For the \emph{maximum safe radius} problem, Player~$\playerOne$ chooses a strategy $\strategy_\playerOne$ to minimise the reward \response{$\reward((\strategy_\playerOne,\strategy_\playerTwo),s_0)$}, based on the strategy $\strategy_\playerTwo$ of Player~$\playerTwo$. That is, the two players are \emph{cooperative}. 
    \item For the \emph{feature robustness} problem, Player~$\playerOne$ chooses a strategy $\strategy_\playerOne$ to maximise \response{$\reward((\strategy_\playerOne,\strategy_\playerTwo),s_0)$}, based on the strategy $\strategy_\playerTwo$ of Player~$\playerTwo$. That is, the two players are \emph{competitive}. 
\end{itemize}
\end{definition}

The goal of the game is for Player~$\playerOne$ to choose a strategy $\strategy_{\playerOne}$ to optimise its objective, to be formalised below.

\subsection{Safety Guarantees via Optimal Strategy}

%Recall that $\tau$, a positive real number, is the manipulation magnitude used in pixel manipulations.
%An  image $\inputImage' \in \eta(\inputImage,k,d)$ is a $\tau$-grid input if for all dimensions $p\in P_0$ we have $|\inputImage'(p)-\inputImage(p)| = n * \tau$ for some $n\geq 0$. Let $\tauimage(\inputImage,k,d)$ be the set of $\tau$-grid inputs in $\eta(\inputImage,k,d)$. 
%
%First of all, we 

For different objectives $x\in \{\maximumsaferadius(k,d,\inputImage,c),\featurerobustness_{\setoffeatures}(k,d,\inputImage,c)\}$ of Player~$\playerOne$, we construct different games. Given a game model $M(k,d,\inputImage,c)$ and an objective $x$ of Player~$\playerOne$, there exists an \emph{optimal strategy profile} $\strategy=(\strategy_\playerOne,\strategy_\playerTwo)$, obtained by both players optimising their objectives. We will consider the algorithms to compute the optimal strategy profile in Section~\ref{sec:algorithms}. Here we focus on whether the obtained optimal strategy profile $\strategy$ is able to implement the finite optimisation problems in Equations~(\ref{equ:lsobjective2}) and (\ref{equ:FFR}).

First of all, we formally define the goal of the game. 

\begin{definition}
Given a game model $M(k,d,\inputImage,c)$, an objective $x $ of Player~$\playerOne$, and an optimal strategy profile $\strategy=(\strategy_\playerOne,\strategy_\playerTwo)$, the goal of the game is to compute the value 
\begin{equation}
val(M(k,d,\inputImage,c), x) = R(\strategy, s_0)
\end{equation}
That is, the goal is to compute the reward of the initial state $s_0$ based on $\strategy$. Note that an initial state $s_0$ is also a finite path, and it is a Player~$\playerOne$ state. 
\end{definition}

We have the following Theorems~\ref{thm:lsgame} and \ref{thm:frgame} to confirm that the game can return the optimal values for the two finite optimisation problems.  
%for the case when Player~$\playerTwo$ is cooperative. 
\begin{theorem} \label{thm:lsgame}
    Assume that Player~$\playerOne$ has the objective $\maximumsaferadius(k,d,\inputImage,c)$. Then 
    \begin{equation}\label{equ:valequ}
        val(M(k,d,\inputImage,c), \maximumsaferadius(k,d,\inputImage,c)) = \finitemaximumsaferadius(\tau,k,d,\inputImage,c)
    \end{equation}
\end{theorem}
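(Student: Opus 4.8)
The plan is to exploit the fact that, under the $\maximumsaferadius$ objective, Player~$\playerOne$ minimises and Player~$\playerTwo$ minimises, so the game is fully \emph{cooperative}; consequently the optimal value $val(M(k,d,\inputImage,c),\maximumsaferadius(k,d,\inputImage,c))=\reward(\strategy,s_0)$ collapses to a single global minimisation over all finite plays reaching an adversarial example, which I then identify with the nested minimisation defining $\finitemaximumsaferadius(\tau,k,d,\inputImage,c)$ in Expression~(\ref{equ:lsobjective2}).

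First I would unroll the recursive reward~(\ref{rewardFunction}). Every play has a terminable finite prefix (by the remark following~(\ref{equ:pathterminalstate})), so the game tree has finite depth and an optimal deterministic strategy profile exists by backward induction: at an internal Player~$\playerOne$ state the expected reward $\sum_{\feature}\strategy_\playerOne(\rho)(\feature)\cdot\reward(\cdots)$ is a convex combination, minimised by placing all mass on a best feature, and likewise Player~$\playerTwo$ places all mass on a best atomic manipulation $\manipulation_{\tau,X,\instruction}$. Hence the reward at an internal Player~$\playerOne$ node equals the $\min_{\feature}$ over its children, at a Player~$\playerTwo$ node equals the $\min_{X,\instruction}$ over its children, and at a terminal Player~$\playerOne$ node equals $\distance{k}{\inputImage_\rho'-\inputImage}$. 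Propagating these minima to the root yields $val=\min_{\rho}\distance{k}{\inputImage_\rho'-\inputImage}$, the minimum over all finite plays $\rho$ with $\terminate(\rho)$.

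Next I would split the terminal plays into those reaching an adversarial example ($N(\inputImage_\rho')=c$, whence $\distance{k}{\inputImage_\rho'-\inputImage}\le d$) and those merely exceeding the budget ($\distance{k}{\inputImage_\rho'-\inputImage}>d$); whenever an adversarial example is reachable within the $d$-neighbourhood, the former give strictly smaller reward, so the outer minimum is attained on an adversarial-reaching play. I would then invoke Lemma~\ref{lemma:atomicinput}: a finite play is exactly a sequence of atomic manipulations in $\manipulationset(P_0)$ whose sequential application produces a single composite manipulation with $\manipulation_{\tau,X,\instruction}(\inputImage)=\inputImage_\rho'$, and conversely every composite manipulation occurring in~(\ref{equ:lsobjective2}) decomposes into such a sequence, hence into a play. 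The distinct features selected along the play realise the feature subset $\setoffeatures'$, the perturbed dimensions realise $X\subseteq\bigcup_{\feature\in\setoffeatures'}P_\feature$, and the cumulative moves realise $\instruction$; crucially the terminal reward $\distance{k}{\inputImage_\rho'-\inputImage}$ equals the objective $\distance{k}{\inputImage-\manipulation_{\tau,X,\instruction}(\inputImage)}$. This value-preserving correspondence identifies the two minimisations and delivers~(\ref{equ:valequ}). For the degenerate case $adv_{k,d}(\inputImage,c)=\emptyset$, every play exceeds the budget and the game reports a value greater than $d$, matching the convention $\finitemaximumsaferadius=\depsilon$.

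The main obstacle is this correspondence step: I must check that the \emph{per-turn} structure of the game (one feature, then one atomic $\pm\tau$ move) aggregates to exactly the feature-subset / dimension-subset / instruction family of~(\ref{equ:lsobjective2}), with no play producing a manipulation outside that family and no family member unreachable by a play, and that the terminal reward is genuinely the \emph{total} distance from $\inputImage$ rather than an accumulation of per-step distances (which the informal description of the reward might suggest). Both hinge on Lemma~\ref{lemma:atomicinput} together with the fact that reward is charged only at terminal Player~$\playerOne$ states; verifying that repeated feature selections and the $[0,1]$ clipping of bounded values do not break the correspondence is the fiddly part, though it does not affect the value since clipping only restricts, and never enlarges, the reachable set on both sides.
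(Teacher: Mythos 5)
Your proposal is correct and follows essentially the same route as the paper's proof: both reduce the cooperative game value to the minimum of terminal distances over plays, and both identify plays with composite manipulations reaching $\tau$-grid adversarial inputs (via Lemma~\ref{lemma:atomicinput} / reachability of every $\tau$-grid point by some play) to equate this minimum with $\finitemaximumsaferadius(\tau,k,d,\inputImage,c)$. Your backward-induction unrolling, the split into adversarial versus budget-exceeding plays, and the explicit empty-adversary case simply make rigorous what the paper asserts more tersely in its two steps (the value is a lower bound on, and is attained at, the distances to $\tau$-grid adversarial examples).
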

\begin{proof}
    First, we show  that $ \distance{k}{\inputImage-\inputImage'} \geq val(M(k,d,\inputImage,c), \maximumsaferadius(k,d,\inputImage,c))$ for any input $\inputImage'$ such that $\inputImage' \in \eta(\inputImage,L_k,d)$, $\inputImage' \in adv_{k,d}(\inputImage,c)$, and $\inputImage'$ is a $\tau$-grid input.
    Intuitively, it says that Player~$\playerOne$ reward from the game on the initial state $s_0$ is no greater than the distance to %severity of 
    any $\tau$-grid adversarial example. That is, once computed, the $val(M(k,d,\inputImage,c), \maximumsaferadius(k,d,\inputImage,c))$ is a lower bound of the optimisation problem $\finitemaximumsaferadius(\tau,k,d,\inputImage,c)$. 
    %find the optimal adversarial example from the set of $\tau$-grid inputs.} 
    This can be obtained by the fact that every  $\tau$-grid input can be reached by some game play. 

    Second, from the termination condition of the game plays, we can see that if $ val(M(k,d,\inputImage,c), \maximumsaferadius(k,d,\inputImage,c)) \leq \distance{k}{\inputImage-\inputImage'}$ for some $\inputImage'$ then there must exist some $\inputImage''$ such that $ val(M(k,d,\inputImage,c), \maximumsaferadius(k,d,\inputImage,c)) = \distance{k}{\inputImage-\inputImage''}$. Therefore, we have that $val(M(k,d,\inputImage,c), \maximumsaferadius(k,d,\inputImage,c))$ is the minimum value of  $\distance{k}{\inputImage-\inputImage'}$ among all $\inputImage'$ with $\inputImage' \in \eta(\inputImage,L_k,d)$, $\inputImage' \in adv_{k,d}(\inputImage,c)$, and $\inputImage'$ is a $\tau$-grid input.
    
    Finally, we notice that the above minimum value of $\distance{k}{\inputImage-\inputImage'}$ is equivalent to the optimal value required by Equation~(\ref{equ:lsobjective2}). 
\end{proof}

\begin{theorem} \label{thm:frgame}
    Assume that Player~$\playerOne$ has the objective $\featurerobustness_{\setoffeatures}(k,d,\inputImage,c)$. Then 
    \begin{equation} \label{equ:valequ}
        val(M(k,d,\inputImage,c), \featurerobustness_{\setoffeatures}(k,d,\inputImage,c)) = \finitefeaturerobustness_{\setoffeatures}(\tau,k,d,\inputImage,c)
    \end{equation}
\end{theorem}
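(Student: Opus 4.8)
The plan is to follow the two-part structure of the proof of Theorem~\ref{thm:lsgame}, the essential difference being that Player~$\playerOne$ now \emph{maximises} instead of cooperating, so the optimal value is a genuine minimax that alternates a maximisation over features at Player~$\playerOne$ states with a minimisation over atomic manipulations at Player~$\playerTwo$ states. The core of the argument is a backward induction over the (finite) game tree that identifies the game value with the recursively-defined $\finitefeaturerobustness_{\setoffeatures}$; evaluating at the initial state $s_0$, where $\inputImage(s_0)=\inputImage$, then yields the claimed identity.

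First I would record that the game tree is finite: each atomic manipulation moves the current input to an adjacent $\tau$-grid point, the predicate $\terminate(\rho)$ of Equation~(\ref{equ:pathterminalstate}) halts every play once the distance to $\inputImage$ exceeds $d$ or an adversarial example of class $c$ appears, and $\eta(\inputImage,L_k,d)$ contains only finitely many $\tau$-grid inputs. Hence an optimal deterministic, memoryless strategy profile exists and the value is computable by backward induction. I would then prove, by induction on the height of a state, that the value at a Player~$\playerOne$ state $s$ equals $\distance{k}{\inputImage(s)-\inputImage}+\finitefeaturerobustness_{\setoffeatures}(\tau,k,d,\inputImage(s),c)$ and that the value at a Player~$\playerTwo$ state $(s,\feature)$ equals $\distance{k}{\inputImage(s)-\inputImage}+\xfinitefeaturerobustness_{\setoffeatures}(\feature,\tau,k,d,\inputImage(s),c)$. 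The base case is a terminated adversarial state, where the reward $\distance{k}{\inputImage(s)-\inputImage}$ matches the point at which the $\xfinitefeaturerobustness_{\setoffeatures}$ recursion closes with its $0$ entry; in the inductive step the reward recurrence of Definition~\ref{def:reward}, under the optimal deterministic strategies, reduces the weighted sums to $\max_{\feature\in\setoffeatures(\inputImage)}$ at Player~$\playerOne$ states and to $\min_{X\subseteq P_{\feature}}\min_{\instruction\in\instructionset}$ at Player~$\playerTwo$ states, exactly reproducing the outer maximum of Equation~(\ref{equ:FFR}) and the inner minimum of the $\xfinitefeaturerobustness_{\setoffeatures}$ clause of Definition~\ref{def:FFR}. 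Taking $s=s_0$ and using $\distance{k}{\inputImage-\inputImage}=0$ then gives the theorem.

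The step I expect to be the main obstacle is reconciling the two distance conventions, which is precisely what makes the additive ``$\distance{k}{\inputImage(s)-\inputImage}+\cdots$'' form above correct. The reward of Definition~\ref{def:reward} charges only the end-to-end distance to the global original $\inputImage$ at termination, whereas the $\finitefeaturerobustness_{\setoffeatures}$ recursion accumulates the per-step increments $\distance{k}{\inputImage_m-\manipulation_{\tau,X,\instruction}(\inputImage_m)}$ measured from each intermediate input; these agree only when the atomic step lengths along a play telescope to the end-to-end $L_k$ distance, i.e.\ when the triangle inequality is tight at each step, and one must additionally decompose each (possibly multi-dimensional) manipulation of the recursion into the atomic moves the game performs. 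I would discharge this by arguing that an optimal play never perturbs a coordinate in two opposing directions, since the minimising Player~$\playerTwo$ would then waste budget without moving the input closer to the target, so such reversals can be pruned from both formulations without changing the value, after which monotone plays telescope as required (cleanly for $L_1$; for a general $L_k$ one must fix a consistent reading of the reward so that the accumulated recursion and the terminal distance coincide). This is exactly the bookkeeping that the cooperative proof of Theorem~\ref{thm:lsgame} sidestepped by rewarding the end-to-end distance directly; once it is resolved, the remainder is the routine minimax backward-induction argument outlined above.
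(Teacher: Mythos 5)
Your route is genuinely different from the paper's. The paper never writes the game value in a state-wise additive form: it proves two inequalities directly, first constructing from an optimal solution of $\finitefeaturerobustness_{\setoffeatures}(\tau,k,d,\inputImage,c)$ a game play from the initial to a terminal state and appealing to optimality of the strategy profile to get one direction, and then, for the other direction, observing that the terminal reward depends only on the terminal input (so the \emph{order} of atomic manipulations is immaterial) and invoking Lemma~\ref{lemma:atomicinput} to translate the general manipulations of Definition~\ref{def:FFR} into sequences of atomic game moves. You, by contrast, run a backward induction on the invariant that the value at a Player~$\playerOne$ state $s$ equals $\distance{k}{\inputImage(s)-\inputImage}+\finitefeaturerobustness_{\setoffeatures}(\tau,k,d,\inputImage(s),c)$, and you have correctly identified exactly what this form requires: the per-step distances accumulated by the $\finitefeaturerobustness$ recursion must telescope to the end-to-end distance charged by the reward of Definition~\ref{def:reward}. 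That tension between the two accounting conventions is real, and the paper's proof silently identifies the two readings rather than reconciling them.

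The gap is that your repair of this step fails for every $k>1$, so the induction does not close. Pruning reversals is not enough: telescoping fails even for perfectly monotone plays, because of \emph{cross-dimension} non-additivity of $L_k$ norms. Two atomic steps of magnitude $\tau$ on two \emph{different} dimensions contribute $\tau+\tau=2\tau$ to the accumulated recursion, but the end-to-end distance, and hence the game reward, is $\sqrt{2}\,\tau$ under $L_2$ and $\tau$ under $L_\infty$. Since the game changes only one dimension per round, such orthogonal consecutive steps are unavoidable, and no pruning restores additivity. Concretely: take two dimensions in a single feature, with the only adversarial grid point at $(\tau,\tau)$; at the intermediate Player~$\playerOne$ state with input $(\tau,0)$ the game value is $\sqrt{2}\,\tau$ (the end-to-end $L_2$ distance of the terminal input), whereas your invariant asserts $\tau+\tau=2\tau$. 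So the invariant is false at the first inductive step even though the root values happen to agree, and your own hedge (``fix a consistent reading of the reward'') concedes that the argument as written is complete only for $L_1$. The paper's proof, informal as it is, does not depend on this false additive identity --- it uses only the order-independence of the terminal reward, which holds for every $L_k$ --- so your proposal cannot be accepted as a proof of the theorem for general $L_k$ without a genuinely new idea for bridging the two distance conventions.
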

\begin{proof}
    First of all, let $\setoffeatures_1$ be the set of features and $\manipulationset_1$ be the set of atomic input manipulations in achieving the optimal value of $\finitefeaturerobustness_{\setoffeatures}(\tau,k,d,\inputImage,c)$. We can construct a game play for $(\setoffeatures_1,\manipulationset_1)$. More specifically, the game play leads from the initial state to a terminal state, by \response{recursively} selecting an unused input manipulation and its associated feature and defining the corresponding moves for Player~$\playerOne$ and Player~$\playerTwo$, respectively. Therefore, because the strategy profile $\strategy$ is optimal, we have $val(M(k,d,\inputImage,c), \featurerobustness_{\setoffeatures}(k,d,\inputImage,c)) \geq \finitefeaturerobustness_{\setoffeatures}(\tau,k,d,\inputImage,c)$. 
    
    On the other hand, we notice that the ordering of the applications of atomic input manipulations does not matter, because the reward of the terminal state is the distance from its associated input to the original input. Therefore, because the game explores exactly all the possible applications of atomic input manipulations and $\finitefeaturerobustness_{\setoffeatures}(\tau,k,d,\inputImage,c)$ is the optimal value by its definition, by Lemma \ref{lemma:atomicinput} we have that $val(M(k,d,\inputImage,c), \featurerobustness_{\setoffeatures}(k,d,\inputImage,c)) \leq \finitefeaturerobustness_{\setoffeatures}(\tau,k,d,\inputImage,c)$. 
\end{proof}

%{In the following, we show that, if the network is Lipschitz continuous, we need only consider $\tau$-grid inputs when $\tau$ is small enough. Then, together with the above theorem, we can conclude that our algorithm is both sound and complete. }

Combining Theorems~\ref{thm:lsgame}, \ref{thm:frgame} with Theorems~\ref{thm:lipschitzreduction}, \ref{thm:lipschitzreduction2}, we have the following corollary, which states that the optimal game strategy is able to achieve the optimal value for the maximum safe radius problem $\maximumsaferadius(k,d,\inputImage,c)$ and the feature robustness problem $\featurerobustness_{\setoffeatures}(k,d,\inputImage,c)$ with an error bound $\frac{1}{2}d(k,\tau)$. 

\begin{corollary}
    The two-player turn-based game is able to solve the \emph{maximum safe radius} problem of Equation~(\ref{equ:lsobjective}) and the \emph{feature robustness} problem of Equation~(\ref{equ:fsobjective}) with an error bound $\frac{1}{2}d(k,\tau)$, when the magnitude $\tau$ is such that \response{$d(k,\tau) \leq \frac{2g(\inputImage',N(\inputImage'))}{\max_{c'\in C, c'\neq N(\inputImage')} (\lipschitzConstant_{N(\inputImage')}+\lipschitzConstant_{c'})}$} for all $\tau$-grid inputs $\inputImage'\in \tauimage(\inputImage,k,d)$. 
\end{corollary}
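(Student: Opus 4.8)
The plan is to prove the corollary purely by composing the four cited results, handling the maximum safe radius and feature robustness cases in parallel. The essential observation is that each case factors into two independent steps whose errors do \emph{not} compound: first, a game-theoretic step identifies the optimal game value \emph{exactly} with the relevant finite-optimisation quantity; second, a Lipschitz discretisation step shows that this finite quantity approximates the true continuous objective with error bounded by $\frac{1}{2}d(k,\tau)$. Because the first step contributes no approximation error, the overall bound is inherited unchanged from the second.

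For the maximum safe radius case, I would first invoke Theorem~\ref{thm:lsgame}, which gives, when Player~$\playerOne$ adopts the objective $\maximumsaferadius(k,d,\inputImage,c)$, the exact identity $val(M(k,d,\inputImage,c),\maximumsaferadius(k,d,\inputImage,c)) = \finitemaximumsaferadius(\tau,k,d,\inputImage,c)$. I would then appeal to Theorem~\ref{thm:lipschitzreduction}: under the hypothesis that $d(k,\tau)$ is bounded by $\frac{2g(\inputImage',N(\inputImage'))}{\max_{c'\in C, c'\neq N(\inputImage')} (\lipschitzConstant_{N(\inputImage')}+\lipschitzConstant_{c'})}$ for every $\tau$-grid input $\inputImage'\in\tauimage(\inputImage,k,d)$, the quantity $\finitemaximumsaferadius$ estimates $\maximumsaferadius$ within $\frac{1}{2}d(k,\tau)$. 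Substituting the game value for $\finitemaximumsaferadius$ in the two-sided bound $\finitemaximumsaferadius-\tfrac{1}{2}d(k,\tau)\le\maximumsaferadius\le\finitemaximumsaferadius$ then yields the claim for the maximum safe radius problem.

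The feature robustness case runs identically, substituting Theorem~\ref{thm:frgame} for Theorem~\ref{thm:lsgame} and Theorem~\ref{thm:lipschitzreduction2} for Theorem~\ref{thm:lipschitzreduction}. That is, the value of the competitive game with Player~$\playerOne$ objective $\featurerobustness_{\setoffeatures}$ equals $\finitefeaturerobustness_{\setoffeatures}(\tau,k,d,\inputImage,c)$ exactly, and under the same Lipschitz condition this finite quantity lies within $\frac{1}{2}d(k,\tau)$ of $\featurerobustness_{\setoffeatures}(k,d,\inputImage,c)$; chaining the identity with the error bound closes this case.

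The only point requiring genuine care — the closest thing to an obstacle — is to confirm that the single hypothesis on $\tau$ stated in the corollary is exactly the hypothesis required by \emph{both} Lipschitz-reduction theorems, so that one condition validates both error bounds at once. This holds because Theorem~\ref{thm:lipschitzreduction} and Theorem~\ref{thm:lipschitzreduction2} both rest on the same intermediate fact, supplied by Lemma~\ref{lemma:twodirection}, namely that the given bound on $d(k,\tau)$ forces all $\tau$-grid inputs to be misclassification aggregators with respect to $\frac{1}{2}d(k,\tau)$. Since the game value contributes no error of its own, no accumulation occurs and the overall error bound remains $\frac{1}{2}d(k,\tau)$ in both the cooperative and the competitive settings.
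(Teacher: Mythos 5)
Your proposal is correct and matches the paper's own route exactly: the paper states the corollary as an immediate consequence of combining Theorems~\ref{thm:lsgame} and \ref{thm:frgame} (exact identification of the game values with $\finitemaximumsaferadius$ and $\finitefeaturerobustness_{\setoffeatures}$) with Theorems~\ref{thm:lipschitzreduction} and \ref{thm:lipschitzreduction2} (the $\frac{1}{2}d(k,\tau)$ error bounds under the Lipschitz condition), which is precisely your two-step composition. Your added observation that the single hypothesis on $\tau$ serves both reduction theorems via Lemma~\ref{lemma:twodirection} is a correct and worthwhile explicit check that the paper leaves implicit.
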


%A strategy $\sigma$ is deterministic if $\sigma(\rho)$ is a Dirac distribution, and is memoryless if $\sigma(\rho)=\sigma(last(\rho))$ for all finite paths $\rho$. 
%
Furthermore, we have the following lemma. 
\begin{lemma} \label{lemma:det}
    For game $M(k,d,\inputImage,c)$ with goal $val(M(k,d,\inputImage,c),\maximumsaferadius(k,d,\inputImage,c))$, deterministic and memoryless strategies suffice for Player~$\playerOne$, and similarly for $M(k,d,\inputImage,c)$  with goal $val(M(k,d,\inputImage,c), \featurerobustness_{\setoffeatures}(k,d,\inputImage,c))$.
% We will show that this convergence 
%, when $\opt_{\strategy_{\playerTwo}} \in \{\max_{\strategy_{\playerTwo}}, \min_{\strategy_{\playerTwo}}, \nature_{\strategy_{\playerTwo}}\}$. 
\end{lemma}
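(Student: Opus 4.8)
The plan is to prove the claim by backward induction over a finite game tree, establishing a Bellman-style optimality recursion and then showing that at each Player~$\playerOne$ node neither randomisation nor memory can improve on a single best action. First I would argue finiteness: although $M(k,d,\inputImage,c)$ a priori admits infinitely many plays, the termination predicate $\terminate(\rho)$ of Equation~(\ref{equ:pathterminalstate}), together with the fact that each atomic manipulation in $\manipulationset(P_\feature)$ shifts exactly one dimension by the fixed magnitude $\tau$, confines every non-terminated play to the finitely many $\tau$-grid inputs lying inside $\eta(\inputImage,L_k,d)$. Hence every play has a bounded-length terminated prefix, the relevant fragment of the model unfolds to a finite tree, and the recursion defining $\reward(\strategy,\cdot)$ in Definition~\ref{def:reward} is well-founded.

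On this finite tree I would define a value $V$ by backward induction: at a terminated Player~$\playerOne$ state $s$ set $V(s)=\distance{k}{\inputImage(s)-\inputImage}$; at a Player~$\playerTwo$ state put $V(s,\feature)=\min_{X,\instruction}V(T_\playerTwo((s,\feature),X,\instruction))$, since Player~$\playerTwo$ always minimises; and at an internal Player~$\playerOne$ state put $V(s)=\min_{\feature}V(s,\feature)$ for the cooperative $\maximumsaferadius$ objective and $V(s)=\max_{\feature}V(s,\feature)$ for the competitive $\featurerobustness_{\setoffeatures}$ objective. The crucial observation is that, for any stochastic choice, the second and third clauses of Equation~(\ref{rewardFunction}) evaluate to convex combinations $\sum_{\feature}\strategy_\playerOne(\rho)(\feature)\cdot V(s,\feature)$ and $\sum_{(X,\instruction)}\strategy_\playerTwo(\rho)(X,\instruction)\cdot V(\cdots)$ of the children's values; a weighted average of reals lies between their minimum and maximum, so a Dirac distribution concentrated on an arg-optimal child attains the optimum. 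Consequently the deterministic strategy selecting an arg-$\min$ (respectively arg-$\max$) feature at each Player~$\playerOne$ node realises $V(s_0)=val(M(k,d,\inputImage,c),x)$, and the same holds symmetrically for Player~$\playerTwo$; the only difference between the two objectives is whether Player~$\playerOne$ takes a $\min$ or a $\max$ over features.

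Memorylessness then follows from the fact that the only dependence of $\reward(\strategy,\rho)$ on the history is through the associated input $\inputImage(last(\rho))$: the terminal reward is $\distance{k}{\inputImage(last(\rho))-\inputImage}$, and the set $\setoffeatures(\inputImage)$ of features and the admissible atomic manipulations $\manipulation_{\tau,X,\instruction}$ available at a node depend only on the current input, not on the path taken to reach it. Hence $V$ is genuinely a function of the state, an arg-optimising action can be fixed once per state, and the induced deterministic strategy is memoryless. This argument applies verbatim to both goals.

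The step I expect to be the main obstacle is the finiteness and well-foundedness justification rather than the optimisation argument, which is routine once the tree is finite. In particular one must check that the $\depsilon$ convention used when no adversarial example is reachable, and the competitive $\max$ behaviour of Player~$\playerOne$, do not produce plays whose reward is defined by a circular or non-terminating recursion. I would discharge this by noting that the competitive player still operates on the same bounded $\tau$-grid, and that any state from which no adversarial example is reachable is capped at the terminal value $\depsilon$, so the backward induction remains well-founded and the deterministic memoryless optimum is attained in both the cooperative and competitive settings.
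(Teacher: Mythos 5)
The paper states Lemma~\ref{lemma:det} \emph{without any proof} (the text moves directly from the lemma to Section~\ref{sec:complexity}), so there is no official argument to compare yours against; what you have written is the natural candidate proof, and its core is sound. The backward-induction value, the observation that the second and third clauses of the reward in Definition~\ref{def:reward} are convex combinations of the children's values (so a Dirac distribution concentrated on an arg-optimal child does at least as well as any randomisation), and the observation that both the terminal reward and the available actions depend on the history only through $\inputImage(last(\rho))$ --- hence the value is genuinely a function of the state and one optimal action can be fixed per state --- together yield determinism and memorylessness for both the cooperative $\maximumsaferadius$ objective and the competitive $\featurerobustness_{\setoffeatures}$ objective. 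This is the standard argument for finite turn-based perfect-information games, and it is exactly what the paper implicitly relies on when it derives PTIME complexity in the size of the game model from this lemma.

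The one step that does not hold as you state it is the finiteness claim. Confinement to the finitely many $\tau$-grid inputs in $\eta(\inputImage,L_k,d)$ does \emph{not} give every play a bounded-length terminated prefix: the termination predicate $\terminate(\rho)$ of Equation~(\ref{equ:pathterminalstate}) does not exclude plays that cycle among grid points, for instance Player~$\playerTwo$ alternately applying $+\tau$ and $-\tau$ to the same dimension forever, never leaving the ball and never reaching class $c$. (The paper makes the same unproven assertion that every infinite path has a finite prefix that can be terminated; its remark about ruling out smaller and smaller perturbations addresses Zeno behaviour, not cycling.) The gap is easy to close precisely because, as you note, the reward of a play depends only on its terminal input and not on the route taken: any play that revisits a game state can have the intervening cycle excised without changing the terminal input or the reward, so optimal play may be restricted to plays visiting each of the finitely many states at most once. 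Backward induction is then performed on this finite acyclic unfolding, and your convexity and memorylessness arguments go through verbatim for both goals.
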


\subsection{Complexity of the Problem} \label{sec:complexity}
As a by-product of Lemma~\ref{lemma:det}, the theoretical complexity of the problems is in PTIME, with respect to the size of the game model $M(k,d,\inputImage,c)$. However, 
the size of the game is exponential with respect to the number of input dimensions. 
More specifically, we have the following complexity result with respect to the manipulation magnitude $\tau$, the pre-specified range size $d$, and the number of input dimensions $n$. 

\begin{theorem} \label{thm:complexity}
    Given a game $M(k,d,\inputImage,c)$, the computational time needed for the value $val(M(k,d,\inputImage,c), x)$, where $x\in \{\maximumsaferadius(k,d,\inputImage,c),\featurerobustness_{\setoffeatures}(k,d,\inputImage,c)\}$, is polynomial with respect to $d/\tau$ and exponential with respect to $n$.
\end{theorem}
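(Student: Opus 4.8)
The plan is to bound the total running time as a product of two factors: the size of the game model $M(k,d,\inputImage,c)$ and the per-state cost of computing the value. The discussion preceding the theorem, together with Lemma~\ref{lemma:det}, already establishes that since deterministic and memoryless strategies suffice for both objectives, the value $val(M(k,d,\inputImage,c),x)$ can be obtained by backward induction (value iteration) over the finite game graph in time polynomial in $|M(k,d,\inputImage,c)|$. Hence the theorem reduces to estimating $|M(k,d,\inputImage,c)|$, i.e. the number of reachable game states and transitions, as a function of $d/\tau$ and $n$.

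First I would count the Player~$\playerOne$ states. Each such state is associated with a reachable input, and since every transition applies an atomic manipulation that shifts a single dimension by $\tau$, every reachable input is a $\tau$-grid input in $\eta(\inputImage,L_k,d)$, i.e. an element of $\tauimage(\inputImage,k,d)$. For such a point every coordinate differs from $\inputImage$ by an integer multiple of $\tau$, and the constraint $\distance{k}{\inputImage'-\inputImage}\leq d$ forces each coordinate deviation $m\tau$ to satisfy $|m\tau|\leq d$. Thus each of the $n$ dimensions admits at most $2\lfloor d/\tau\rfloor + 1$ grid values, giving $|S| \leq (2\lfloor d/\tau\rfloor + 1)^{n}$. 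The Player~$\playerTwo$ states $S\times\setoffeatures(\inputImage)$ add only a factor of $|\setoffeatures(\inputImage)|\leq n$, since the feature extraction function partitions the $n$ dimensions.

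Next I would bound the branching, and hence the number of transitions. At a Player~$\playerOne$ state there are at most $|\setoffeatures(\inputImage)|\leq n$ feature choices, while at a Player~$\playerTwo$ state $(s,\feature)$ the atomic manipulations of Definition~\ref{def:atomicManipulation} restrict $X$ to a single dimension of $P_\feature$ and $\instruction$ to a sign in $\{-1,+1\}$, so there are at most $2|P_\feature|\leq 2n$ choices. Determining termination and the class label at each state costs a single forward pass of $N$, which I treat as constant (or polynomial) in the input size. Consequently the number of transitions is $O(|S|\cdot n^{2})$ and the total work is $O\big((2\lfloor d/\tau\rfloor + 1)^{n}\cdot \mathrm{poly}(n)\big)$. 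Read as a function of $d/\tau$ for fixed $n$ this is a polynomial of degree $n$, and read as a function of $n$ it is exponential, which is exactly the claimed bound.

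The step I expect to be the main obstacle is the reduction to a clean value iteration over \emph{states} rather than paths, particularly for the competitive feature-robustness game. The recursion in Definition~\ref{def:reward} is phrased over paths, so I must argue that the terminal reward depends only on the current state --- which it does, being the distance $\distance{k}{\inputImage_\rho'-\inputImage}$ to the fixed original input --- and that optimal play is captured by the standard max-min backward induction on the finite turn-based graph. This is precisely what Lemma~\ref{lemma:det} secures, since the existence of memoryless deterministic optimal strategies means no path information beyond the current $\tau$-grid input is needed and the value function is well-defined on $\tauimage(\inputImage,k,d)$. The only remaining care is that the reachability graph may reach the same state along several orderings of manipulations; but Theorem~\ref{thm:frgame} shows the value is independent of this ordering, so value iteration still converges to the correct value within the stated polynomial-in-$|M(k,d,\inputImage,c)|$ bound, and multiplying by the state count completes the estimate.
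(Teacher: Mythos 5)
Your proposal is correct and follows essentially the same route as the paper's own proof: bound the number of reachable states by the size of the $\tau$-grid $\tauimage(\inputImage,k,d)$ (polynomial in $d/\tau$, exponential in $n$), observe that the game states exceed the grid size only by a polynomial factor, and invoke PTIME solvability of the finite turn-based game (via Lemma~\ref{lemma:det}) to conclude. Your version is simply more explicit — the $(2\lfloor d/\tau\rfloor+1)^{n}$ count, the branching factors, and the justification that the path-based reward of Definition~\ref{def:reward} reduces to a state-based value — where the paper asserts these steps without elaboration.
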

\begin{proof}
    We can see that the size of the grid, measured as the number $|\tauimage(\inputImage,k,d)|$  of $\tau$-grid inputs in $\eta(\inputImage,L_k,d)$, is polynomial with respect to $d/\tau$ and exponential with respect to $n$. From a $\tau$-grid to any of its neighbouring $\tau$-grids, each player needs to take a move. Therefore, the number of game states is doubled (i.e., polynomial) over $|\tauimage(\inputImage,k,d)|$. This yields PTIME  complexity of solving the game. 
\end{proof}

Considering that the problem instances we work with usually have a large input dimensionality, this complexity suggests that directly working with the explicit game models is impractical. If we consider an alternative representation of a game tree (i.e., an unfolded game model) of finite depth to express the complexity, %which all our algorithms are based on, 
the number of nodes on the tree is $O(n^h)$ for $h$ the length of the longest finite path without a terminating state.  
While the precise size of $O(n^{h})$ is dependent on the problem (including the image $\inputImage$ and the difficulty of crafting an adversarial example), it is roughly $O(50000^{100})$ for the images used in the ImageNet competition and $O(1000^{20})$ for smaller images such as GTSRB, CIFAR10, and MNIST. This is beyond the capability of existing approaches for exact or $\epsilon$-approximate computation of probability (e.g., reduction to linear programming~\cite{SWRHKK2018}, value iteration, and policy iteration, etc.) that are used in probabilistic verification.

%In the following, 
%as stated in Section~\ref{sec:anytimedescription}, 
%we will apply anytime approach to approximate the values by computing their upper and lower bounds. 

\section{Algorithms and Implementation} \label{sec:algorithms}
In this section we describe the implementation of the game-based approach introduced in this paper.
Figure~\ref{fig:flow} presents an overview of the reductions from the original problems to the solution of a two-player game for the case of Lipschitz networks, described in Section~\ref{sec:SafetyThms}.
Because exact computation of optimal rewards is computationally hard, %in Section~\ref{sec:algorithms} 
we approximate the rewards by means of algorithms that unfold the game tree based on Monte Carlo tree search (MCTS), 
Admissible A$^*$, and Alpha-Beta Pruning.
%to compute their upper bounds, and applying algorithms based on either Admissible A$^*$ or Alpha-Beta Pruning to compute the lower bounds. We also show that the bounds are monotonically improved.

%The algorithms employed to compute the solutions are presented , together with their convergence results.

\begin{figure}[t]
    \centering
    \includegraphics[width=\textwidth]{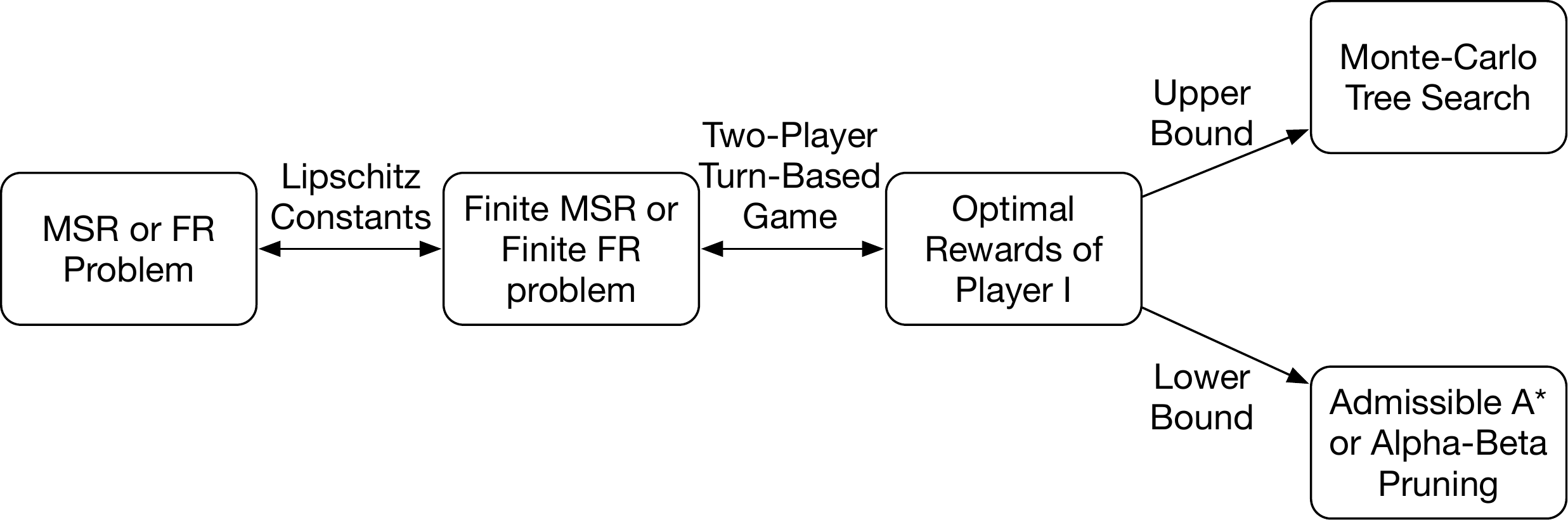}
    \caption{\response{A game-based approximate verification approach for the \emph{Maximum Safe Radius} and \emph{Feature Robustness} problems.}}
    \label{fig:flow}
\end{figure}

%\subsubsection{Anytime Method}\label{sec:anytimedescription}

%In this paper, we 
We take a principled approach to compute for each of the two game values,  $\maximumsaferadius(k,d,\inputImage,c)$ and $\featurerobustness_{\setoffeatures}(k,d,\inputImage,c)$, an upper bound and a lower bound. Our algorithms can gradually, but strictly, improve the bounds, so that 
they gradually converge as the computation proceeds.
%their gaps 
%between the upper and lower bounds 
%are gradually closed with the computation proceeds. 
For $x \in \{\maximumsaferadius(k,d,\inputImage,c),\featurerobustness_{\setoffeatures}(k,d,\inputImage,c)\}$, we write $\lowerbound_{x}$ and $\upperbound_x$  for their lower and upper bound, respectively. %
The bounds  can be interesting in their own. For example, a lower bound $\lowerbound_{\maximumsaferadius(k,d,\inputImage,c)}$ suggests absolute safety of an $L_k$ norm ball with radius $\lowerbound_{\maximumsaferadius(k,d,\inputImage,c)}$ from the original input $\inputImage$, and an upper bound $\upperbound_{\maximumsaferadius(k,d,\inputImage,c)}$ suggests  the existence of an adversarial example $\inputImage'$ such that $\distance{k}{\inputImage-\inputImage'} = \upperbound_{\maximumsaferadius(k,d,\inputImage,c)}$. \response{On the other hand, given distance budget $d'$, $\upperbound_{\featurerobustness_{\setoffeatures}(k,d,\inputImage,c)} \leq d'$ indicates an unsafe distance from which the existence of adversarial examples is not controllable.}

We consider two feature extraction approaches to generate feature partitioning: a saliency-guided $\mathsf{grey}$-box approach  and a feature-guided $\mathsf{black}$-box approach. For the $\mathsf{grey}$-box approach, adapted from \cite{RWSHKK2018}, each dimension of an input is evaluated on its sensitivity to the classification outcome of a DNN. 
%
%partition all the input dimensions into mutually exclusive features according to the sensitivity values of all pixels. More precisely, we rank all the pixels 
For the $\mathsf{black}$-box procedure, Scale Invariant Feature Transform (SIFT)~\cite{SIFT} is used to extract image features, based on which a partition is computed. SIFT is a reasonable proxy for human perception, irrelevant to any image classifier, and its extracted features have a set of invariant characteristics, such as  scaling, rotation, translation, and local geometric distortion. 
%whilst irrelevant to any image classifier, it abstracts a set of invariant characteristics of an image, such as image scaling, rotation, translation, and local geometric distortion. For this regard, this work extracts image features using Scale Invariant Feature Transform (SIFT)~\cite{SIFT}, which is a reasonable proxy for human perception and enables dimensionality reduction through the Gaussian mixture representation~\cite{Reynolds2009GaussianMM}. 
Readers are referred to Section~\ref{exp:feature} for an experimental illustrations of these two approaches.

Next we present the algorithms we employ to compute the upper and lower bounds of the values of the games, as well as their convergence analysis. 

\subsection{Upper Bounds: Monte Carlo Tree Search (MCTS)}
% for Asymptotically Optimal Strategy}
 \label{sec:mcts}

\newcommand{\update}{U\!pdate}
\newcommand{\pathsample}{Simulation}
\newcommand{\confidence}{con\!f}
\newcommand{\normalisation}{N\!orm}
\newcommand{\selection}{selection}
\newcommand{\expansion}{expansion}
\newcommand{\backPropogation}{backPropogation}
\newcommand{\partialTree}{{\cal T}}
\newcommand{\parent}{p}
\newcommand{\children}{{\cal C}}

We present an approach based on Monte Carlo tree search (MCTS)~\cite{CWU2008} to find an optimal strategy asymptotically. 
%
%We also we show that the optimal strategy, if achieved, represents the best adversarial example with respect to the objective  in Definition~\ref{def:objective}, under some conditions. 
%
%We first consider the case for maximum safe radius problem.
\response{As a heuristic search algorithm for decision processes notably employed in game play, MCTS focuses on analysing the most promising moves via expanding the search tree based on random sampling of the search space.}
The %MCTS 
algorithm, whose pseudo-code is presented in Algorithm~\ref{MCTS}, gradually expands a \emph{partial game tree} by sampling the strategy space of the model $M(k,d,\inputImage,c)$. With the upper confidence bound (UCB)~\cite{KS2006} as the exploration-exploitation trade-off, MCTS has a theoretical guarantee that it converges to the optimal solution when the game tree is fully explored. In the following, we explain the components of the algorithm. 

\begin{algorithm}[t]
\small
\caption{Monte Carlo Tree Search for DNN Verification}\label{MCTS}
\begin{algorithmic}[1]
\State \textbf{Input:} A game model $M(k,d,\inputImage,c)$, a termination condition $tc$
\State \textbf{Output:} $val(M(k,d,\inputImage,c), \featurerobustness_{\setoffeatures}(k,d,\inputImage,c))$ or $val(M(k,d,\inputImage,c), \maximumsaferadius(k,d,\inputImage,c))$
\Procedure{MCTS($M(k,d,\inputImage,c), tc$)}{}
\State $root \gets s_0$
\State \textbf{While}$(\neg tc)$: 
\State \quad \quad $leaf \gets \selection(root) $ 
\State \quad \quad $newnodes  \gets \expansion(M(k,d,\inputImage,c), leaf)$ 
\State \quad \quad \textbf{for} $node$ in $newnodes$: 
\State \quad \quad \quad \quad $v \gets \pathsample(M(k,d,\inputImage,c),node) $
\State \quad \quad \quad \quad $\backPropogation(node, v) $
\State \textbf{return} optimal value of the $root$ node
\EndProcedure
\end{algorithmic}
\end{algorithm}

%\subsubsection{Data Structure} 

Concerning the data structure, we maintain the set of nodes on the partial tree $\partialTree(k,d,\inputImage,c)$. 
For every node $o$ on the partial tree, we maintain three variables, $r_o$, $n_o$, $e_o$, which represent the accumulated reward, the number of visits, and the current best input with respect to the objective of the player, respectively. We remark that $e_o$ is usually different from $\inputImage(s_o)$, which is the input associated with the game state $s_o$.
Moreover, for every node $o$, we record its parent node $\parent_o$ and a set $\children_o$ of its children nodes. 
The value $val(M(k,d,\inputImage,c), x)$ of the game is approximated by 
%\begin{equation}
$\distance{k}{e_{root}-\inputImage}$,
%\end{equation}
which represents the distance between the original input and the current best input maintained by the root node of the tree. 

%\subsubsection{Selection} 

The $\selection$ procedure starts from the $root$ node, which contains the original image, and conducts a tree traversal until reaching a $leaf$ node (Line~6). From a node, the next child node to be selected is dependent on an exploration-exploitation balance, i.e., UCB~\cite{KS2006}. More specifically, on a node $o$, for every child node $o'\in\children_o$, we let 
\begin{equation}
v(o,o') = \frac{d*n_{o'}}{r_{o'}} +  \sqrt{\frac{2\ln{n_{o}}}{n_{o'}}}
\end{equation}
be the weight of choosing $o'$ as the next node from $o$. Then the actual choice of a next node is conducted by sampling over a probabilistic distribution  $Prob_o: \children_o \rightarrow [0,1]$ such that 
\begin{equation} 
Prob_o(o') = \frac{v_{o,o'}}{\sum_{o'\in \children_o} v_{o,o'}} 
\end{equation}
which is a normalisation over the weights of all children.
%
%\subsubsection{Expansion}
%
On a $leaf$ node $o$, the $\expansion$ procedure returns a set of children nodes $\children_o$ by applying the transition relation in the game model $M(k,d,\inputImage,c)$ (Line~7). These new nodes are added into the partial tree $\partialTree(k,d,\inputImage,c)$. This is the only way for the partial tree to grow. 
%
%\subsubsection{Simulation} 
%
After expanding the leaf node to have its children added to the partial tree, we call the $\pathsample$ procedure on every child node (Line~9). A simulation on a new node $o$ is a  play of the game from $o$ until it terminates. Players act randomly during the simulation. Every simulation terminates when reaching a terminal node $\inputImage'$.
%
%\subsubsection{backPropogation} 
%
Once a terminal node $\inputImage'$ is reached, a reward $\distance{k}{\inputImage-\inputImage'}$ can be computed. This reward, together with the input $\inputImage'$, is then $backpropagated$ from the new child node through its ancestors until reaching the root (Line~10). Every time a new reward $v$ is backpropagated through a node $o$, we update its associated reward $r_o$ into $r_o+v$ and increase its number of visits into $n_o+1$. The update of current best input $e_o$ depends on the player who owns the node. 
For the $\maximumsaferadius(k,d,\inputImage,c)$ game, $e_o$ is made equivalent to $e_{o'}$ such that 
\begin{equation} \label{equ:lsbestcase}
    o' = \arg\min_{o_1 \in \children_o} \distance{k}{\inputImage - e_{o_1}}
\end{equation}
For the $\featurerobustness_{\setoffeatures}(k,d,\inputImage,c)$ game, Player~$\playerTwo$ also takes the above approach, i.e., Equation~(\ref{equ:lsbestcase}), to update $e_{o}$, but for Player~$\playerOne$ we let $e_o$ be $e_{o''}$
such that 
\begin{equation} \label{equ:frbestcase}
    o'' = \arg\max_{o_1 \in \children_o} \distance{k}{\inputImage - e_{o_1}}
\end{equation}
We remark the game is not zero-sum for the maximum safe radius problem.

%\subsection{Admissible A* and Alpha-Beta Pruning}

\subsection{Lower Bounds: Admissible A* in a Cooperative Game}

To enable the computation of lower bounds with a guarantee, we consider algorithms which can compute optimal strategy deterministically, without relying on the asymptotic convergence as MCTS does. In this section, we exploit Admissible A* to achieve the lower bound of Player~$\playerOne$  reward when it is \emph{cooperative}, i.e., $\maximumsaferadius(k,d,\inputImage,c)$, and in Section~\ref{sec:alphabetadescription} we use Alpha-Beta Pruning to obtain the lower bound of Player~$\playerOne$ reward when it is \emph{competitive}, i.e., $\featurerobustness_{\setoffeatures}(k,d,\inputImage,c)$.

The A* algorithm gradually unfolds the game model into a tree. It maintains a set of leaf nodes of the unfolded partial tree, computes an estimate for every node in the set, and selects the node with the least estimated value to expand. 
The estimation consists of two components, one for the exact cost up to now and the other for the estimated cost of reaching the goal node.  
In our case, for each game state $s$, we assign an estimated distance value 
\begin{equation}\label{equ:dist_estimation}
    distances(s) = \distance{k}{\inputImage(s)-\inputImage(s_0)} + heuristic(\inputImage(s))
\end{equation}
where the first component $\distance{k}{\inputImage(s)-\inputImage(s_0)}$ represents the distance from the initial state $s_0$ to the current state $s$, and the second component $heuristic(\inputImage(s))$ denotes the estimated distance from the current state $s$ to a terminal state.

An \emph{admissible} heuristic function is to, given a current input, never overestimate the cost of reaching the terminal game state. 
%That is, it always guarantees the   lower bound we need. 
Therefore, to achieve the lower bound, we need to take an admissible heuristic function. 
We remark that, if the heuristic function is inadmissible (i.e., does not guarantee the underestimation of the cost), then the A* algorithm cannot be used to compute the lower bound, but instead can be used to compute the upper bound. 

We utilise the minimum confidence margin $g(\inputImage',N(\inputImage'))$ defined in Definition~\ref{def:minimumconfidencegap} to obtain an admissible heuristic function.

\begin{lemma}\label{lemma:heuristics}
For any game state $s$ such that $\inputImage(s)=\inputImage'$, the following heuristic function is admissible: 
\begin{equation}
    heuristic(\inputImage') =  \frac{g(\inputImage',N(\inputImage'))}{\max_{c'\in C, c'\neq N(\inputImage')} (\lipschitzConstant_{N(\inputImage')}+\lipschitzConstant_{c'})} 
\end{equation}

\end{lemma}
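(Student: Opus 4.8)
The plan is to show that the proposed $heuristic(\inputImage')$ never overestimates the true cost-to-go from the state $s$ with $\inputImage(s) = \inputImage'$. In the cooperative game the cost-to-go is the distance from $\inputImage'$ to the \emph{nearest} adversarial example reachable by further manipulations, i.e.\ $\min_{\inputImage''} \{ \distance{k}{\inputImage'-\inputImage''} \mid N(\inputImage'') \neq N(\inputImage') \}$. Admissibility therefore reduces to proving that for \emph{every} input $\inputImage''$ with $N(\inputImage'') \neq N(\inputImage')$ we have $\distance{k}{\inputImage'-\inputImage''} \geq heuristic(\inputImage')$; taking the minimum over all such $\inputImage''$ then yields the claim. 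Note that $s$ is a non-terminal state, so $N(\inputImage') = N(\inputImage)$ and the margin $g(\inputImage',N(\inputImage'))$ of Definition~\ref{def:minimumconfidencegap} is well-defined.

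First I would fix an arbitrary $\inputImage''$ with $N(\inputImage'') \neq N(\inputImage')$ and pick a class $c' \neq N(\inputImage')$ with $N(\inputImage'',c') \geq N(\inputImage'',N(\inputImage'))$, which exists because $N(\inputImage')$ is no longer the top class at $\inputImage''$; hence $N(\inputImage'',N(\inputImage')) - N(\inputImage'',c') \leq 0$. The core step is to track the margin between class $N(\inputImage')$ and class $c'$ as the input moves from $\inputImage'$ to $\inputImage''$. By the definition of $g$ we have $N(\inputImage',N(\inputImage')) - N(\inputImage',c') \geq g(\inputImage',N(\inputImage'))$, and applying Lipschitz continuity (Definition~\ref{def:Lipschitz}) to the two classes $N(\inputImage')$ and $c'$ gives
\begin{equation}
N(\inputImage'',N(\inputImage')) - N(\inputImage'',c') \geq g(\inputImage',N(\inputImage')) - (\lipschitzConstant_{N(\inputImage')}+\lipschitzConstant_{c'}) \distance{k}{\inputImage'-\inputImage''}.
\end{equation}
Combining this with the inequality $N(\inputImage'',N(\inputImage')) - N(\inputImage'',c') \leq 0$ forces
\begin{equation}
\distance{k}{\inputImage'-\inputImage''} \geq \frac{g(\inputImage',N(\inputImage'))}{\lipschitzConstant_{N(\inputImage')}+\lipschitzConstant_{c'}} \geq \frac{g(\inputImage',N(\inputImage'))}{\max_{c''\in C, c''\neq N(\inputImage')} (\lipschitzConstant_{N(\inputImage')}+\lipschitzConstant_{c''})} = heuristic(\inputImage'),
\end{equation}
where the second inequality enlarges the denominator by taking the maximum over all competing classes. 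Since $\inputImage''$ was an arbitrary adversarial example, the bound holds for the nearest one as well, establishing admissibility.

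I expect the main subtlety to be conceptual rather than computational: correctly identifying that the relevant cost-to-go is the distance to the nearest class-changing input, and that the argument must hold uniformly over \emph{all} candidate adversarial examples $\inputImage''$ (which is why the max in the denominator is needed to pass from the class-specific constant $\lipschitzConstant_{N(\inputImage')}+\lipschitzConstant_{c'}$ to the global one). One must also be careful with the direction of the Lipschitz bound on each confidence value and verify that the sign of the margin flip is handled consistently; the algebra itself mirrors the reasoning already used in Lemma~\ref{lemma:twodirection}, so no new machinery beyond Lipschitz continuity and the definition of $g$ is required.
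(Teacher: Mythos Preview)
Your proof is correct and follows essentially the same approach as the paper: both use Lipschitz continuity of the per-class confidences together with the margin $g(\inputImage',N(\inputImage'))$ to show that any class-changing input must lie at distance at least $heuristic(\inputImage')$. The only cosmetic difference is that the paper phrases the argument as bounding $g(\inputImage',N(\inputImage')) - g(\inputImage'',N(\inputImage'))$ and concluding that $g(\inputImage'',N(\inputImage'))\geq 0$ within the heuristic radius (the contrapositive), whereas you work directly with a specific overtaking class $c'$ at the adversarial point; your organisation is arguably cleaner.
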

\begin{proof}
Consider the expression $g(\inputImage',N(\inputImage')) - g(\inputImage'',N(\inputImage'))$, where $\inputImage'$ is the current state and $\inputImage''$ is the last state before a terminal state. Then we have that 
\begin{equation}
    g(\inputImage',N(\inputImage')) - g(\inputImage'',N(\inputImage')) \leq g(\inputImage',N(\inputImage')) 
\end{equation} 
Now because 
\begin{equation}
    \begin{split}
        & g(\inputImage',N(\inputImage')) - g(\inputImage'',N(\inputImage')) \\
        = & \displaystyle \min_{c\in C, c\neq N(\inputImage')} \{N(\inputImage',N(\inputImage')) - N(\inputImage',c)\} - \min_{c'\in C, c'\neq N(\inputImage')} \{N(\inputImage'',N(\inputImage')) - N(\inputImage'',c')\} \\
        \leq & \displaystyle \max_{c'\in C, c'\neq N(\inputImage')} \{|N(\inputImage',N(\inputImage')) - N(\inputImage'',N(\inputImage'))| + |N(\inputImage'',c') -  N(\inputImage',c')|\} \\
        \leq & \displaystyle \max_{c'\in C, c'\neq N(\inputImage')} (\lipschitzConstant_{N(\inputImage')}+\lipschitzConstant_{c'})\distance{k}{\inputImage'-\inputImage''}
    \end{split}
\end{equation}
we can let 
\begin{equation}
\max_{c'\in C, c'\neq N(\inputImage')} (\lipschitzConstant_{N(\inputImage')}+\lipschitzConstant_{j})\distance{k}{\inputImage'-\inputImage''} \leq g(\inputImage',N(\inputImage')) 
\end{equation}
Thus, we define 
\begin{equation}
heuristic(\inputImage') =  \frac{g(\inputImage',N(\inputImage'))}{\max_{c'\in C, c'\neq N(\inputImage')} (\lipschitzConstant_{N(\inputImage')}+\lipschitzConstant_{c'})} 
\end{equation}
which is sufficient to ensure that $g(\inputImage'',N(\inputImage'))\geq 0$ for any $\inputImage''$. That is, the distance $heuristic(\inputImage')$ is a lower bound of reaching a misclassification.  
\end{proof}

\newcommand{\PlayerOne}{Player~$\playerOne$ }
\newcommand{\PlayerTwo}{Player~$\playerTwo$ }
\newcommand{\AtomicManipulation}{AtomicManipulation}
\newcommand{\DistanceEstimation}{DistanceEstimation}
\newcommand{\MaximumSafeRadius}{MaximumSafeRadius}

\begin{algorithm}[t]
\small
\caption{Admissible A* for DNN Verification}\label{AdmissibleA*}
\begin{algorithmic}[1]
\State \textbf{Input:} A game model $M(k,d,\inputImage,c)$, a termination condition $tc$
\State \textbf{Output:} $val(M(k,d,\inputImage,c), \maximumsaferadius(k,d,\inputImage,c))$
\Procedure{AdmissibleA*($M(k,d,\inputImage,c), tc$)}{}
\State $root \gets s_0$
\State \textbf{While}$(\neg tc)$: 
\State \quad \quad $features \gets$ \PlayerOne $(root,\text{feature extraction} =\mathsf{grey/black})$
\State \quad \quad \textbf{for} $feature$ in $features$: 
\State \quad \quad \quad \quad $dimensions \gets$ \PlayerTwo$(feature)$
\State \quad \quad \quad \quad $newnodes \gets \AtomicManipulation(dimensions)$ 
\State \quad \quad \quad \quad \textbf{for} $node$ in $newnodes$: 
\State \quad \quad \quad \quad \quad \quad $distances \gets \DistanceEstimation(node) $
\State \quad \quad $root \gets \MaximumSafeRadius(distances) $
%\State \quad \quad $root \gets node$
\State \textbf{return} $\distance{k}{\inputImage(root)-\inputImage(s_0)}$
\EndProcedure
\end{algorithmic}
\end{algorithm}

The Admissible A* algorithm is presented in Algorithm~\ref{AdmissibleA*}. In the following, we explain the main components of the algorithm.
%\subsubsection{A Two-Player Turn-Based Game}
For each $root$ node (initialised as the original input), \PlayerOne chooses between mutually exclusive $features$ partitioned based on either the $\mathsf{grey}$-box or $\mathsf{black}$-box approach. Subsequently, in each $feature$, \PlayerTwo chooses among all the $dimensions$ within each feature (Line~4-8).
%
%\subsubsection{\AtomicManipulation}
On each of the $dimensions$, an $\AtomicManipulation$ is constructed and applied.
We add $+\tau$ and $-\tau$ to each dimension, and make sure that it does not exceed the upper and lower bounds of the input dimension, e.g., 1 and 0 if the input is pre-processed (normalised). If exceeded, the bound value is used instead. This procedure essentially places adversarial perturbations on the image, and all manipulated images become the $newnodes$ (Line~9).
%
%\subsubsection{\DistanceEstimation}
For each $node$ in the $newnodes$, the $\DistanceEstimation$ function in Equation~(\ref{equ:dist_estimation}) is used to compute a value, which is then added into the set $distances$. The set $distances$ maintains the estimated values for all leaf nodes (Line~10-11).  
%
%\subsubsection{\MaximumSafeRadius}
Among all the leaf nodes whose values are maintained in $distances$, we select the one with the minimum $\MaximumSafeRadius$ as the new $root$ (Line~12). 
%The termination condition $tc$ is checked against Equation~(\ref{equ:pathterminalstate}). 

%\subsubsection{Termination Condition} 
As for the termination condition $\neg tc$, the algorithm gradually unfolds the game tree with increasing tree depth $td=1,2,...$. Because all nodes on the same level of the tree have the same distance to the original input $\inputImage$,  every tree depth $td > 0$ is associated with a distance $d(td)$, 
%from the original input $\inputImage$ 
such that 
$d(td)$ is the distance of the nodes at level $td$. 
For a given tree depth $td$, we have a termination condition $tc(td)$ requiring that either
\begin{itemize}
    \setlength\itemsep{0em}
    \item  all the tree nodes up to depth $td$ have been explored, or
    \item  the current $root$ is an adversarial example.
\end{itemize}
For the latter,
%or the distance to the original image exceeds bound $d$, 
$\distance{k}{\inputImage(root)-\inputImage(s_0)}$ is returned and the algorithm converges. 
%
%If the algorithm, when running on a pre-specified tree depth $td$, returns a value less than $d(td)$ then our algorithm has converged by finding the maximum safe radius, i.e., $val(M(k,d,\inputImage,c), \maximumsaferadius(k,d,\inputImage,c))$. 
For the former, 
%if the algorithm returns $d(td)$ then 
we update $d(td)$ as the current lower bound of the game value $val(M(k,d,\inputImage,c), \maximumsaferadius(k,d,\inputImage,c))$. 
\response{Note that 
%the $\maximumsaferadius$ problem has already been reduced to the $\finitemaximumsaferadius$ problem by utilising Lipschitz constants. Therefore, 
the termination condition guarantees the closest adversarial example that corresponds to $\finitemaximumsaferadius$, which is within distance $\frac{1}{2}d(k, \tau)$ from the actual closest adversarial example corresponding to $\maximumsaferadius$.}

\newcommand{\Alpha}{\mathtt{alpha} }
\newcommand{\Beta}{\mathtt{beta} }

\subsection{Lower Bounds: Alpha-Beta Pruning in a Competitive Game}\label{sec:alphabetadescription}

Alpha-Beta Pruning is an adversarial search algorithm, applied commonly in two-player games, to minimise the possible cost in a maximum cost scenario. In this paper, we apply Alpha-Beta Pruning to compute the lower bounds of Player~$\playerOne$ reward in a \emph{competitive} game, i.e., $\featurerobustness_{\setoffeatures}(k,d,\inputImage,c)$.

\begin{lemma}
For any game state $s \in S \cup (S \times \setoffeatures(\inputImage))$, we let $\strategy_\playerOne(s) \in S \times \setoffeatures(\inputImage)$ be the next state of $s \in S$ after Player~$\playerOne$ taking an action $\strategy_\playerOne$, and $\strategy_\playerTwo(s) \in S$ be the next state of $s \in S \times \setoffeatures(\inputImage)$ after Player~$\playerTwo$ taking an action $\strategy_\playerTwo$. If using $\mathtt{alpha}(s)$ (initialised as $-\infty$) to denote Player~$\playerOne$ current maximum reward on state $s$ and $\mathtt{beta}(s)$ (initialised as $+\infty$) to denote Player~$\playerTwo$ current minimum reward on state $s$, and let
\begin{align}
    \mathtt{alpha}(s) & = \max_{\strategy_\playerOne} \mathtt{beta}(\strategy_\playerOne(s)) & \text{if } & s \in S\\
    \mathtt{beta}(s) & = \min_{\strategy_\playerTwo} \mathtt{alpha}(\strategy_\playerTwo(s)) & \text{if } & s \in S \times \setoffeatures(\inputImage)
\end{align}
then $\mathtt{alpha}(s_0)$ is a lower bound of the value $val(M(k,d,\inputImage,c), \featurerobustness_{\setoffeatures}(k,d,\inputImage,c))$.
%where $\lowerbound_{\featurerobustness_{\setoffeatures}(k,d,\inputImage,c)}$ denotes the lower bound of $\featurerobustness_{\setoffeatures}(k,d,\inputImage,c)$. 
%For Player~$\playerOne$'s state $s_\playerOne \in S$, and Player~$\playerTwo$'s state $s_\playerTwo \in S \times \setoffeatures(\inputImage)$,
\end{lemma}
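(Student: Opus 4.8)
The plan is to identify the recursion for $\mathtt{alpha}$ and $\mathtt{beta}$ as the standard minimax backward-induction equations for the competitive game, and then establish the inequality $\mathtt{alpha}(s_0) \le val(M(k,d,\inputImage,c), \featurerobustness_{\setoffeatures}(k,d,\inputImage,c))$ by combining soundness of pruning with an underestimation property at the frontier of the (anytime, partially unfolded) search tree. First I would define, for every state $s$, the true game value $V(s)$ by backward induction: $V(s) = \distance{k}{\inputImage(s)-\inputImage}$ at terminal states, $V(s) = \max_{\strategy_\playerOne} V(\strategy_\playerOne(s))$ at Player~$\playerOne$ states, and $V(s) = \min_{\strategy_\playerTwo} V(\strategy_\playerTwo(s))$ at Player~$\playerTwo$ states. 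Since the feature robustness game is competitive and, by Lemma~\ref{lemma:det}, deterministic memoryless strategies suffice, $V(s_0)$ is exactly the minimax value, and by Theorem~\ref{thm:frgame} we have $V(s_0) = \finitefeaturerobustness_{\setoffeatures}(\tau,k,d,\inputImage,c) = val(M(k,d,\inputImage,c), \featurerobustness_{\setoffeatures}(k,d,\inputImage,c))$. The defining equations for $\mathtt{alpha}$ (at max states) and $\mathtt{beta}$ (at min states) are precisely the two backward-induction clauses for $V$, so on a fully unfolded tree with exact terminal rewards they would yield $\mathtt{alpha}(s_0) = V(s_0)$.

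The two remaining steps produce the lower bound rather than equality. I would invoke soundness of Alpha-Beta Pruning: a branch is cut only when the backed-up $\mathtt{beta}$ of a child at a Player~$\playerOne$ state has already dropped to or below the inherited $\mathtt{alpha}$ (and symmetrically at Player~$\playerTwo$ states), so a pruned branch can neither raise the running maximum nor lower the running minimum that determines $\mathtt{alpha}(s_0)$; hence pruning introduces no error relative to $V$. Next, in the anytime setting the tree is only partially unfolded, and each frontier node $s$ that is not yet terminal is assigned an \emph{underestimate} $\hat V(s) \le V(s)$, namely the accumulated distance plus the admissible heuristic of Lemma~\ref{lemma:heuristics}, which never overestimates the distance needed to reach an adversarial example. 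Because both $\max$ and $\min$ are monotone nondecreasing in each argument, replacing frontier values by underestimates and propagating them upward through the alternating max/min recursion can only decrease the backed-up value; by induction on the tree from the frontier up, the computed quantity at every node is at most $V$ at that node, and in particular $\mathtt{alpha}(s_0) \le V(s_0) = val(M(k,d,\inputImage,c), \featurerobustness_{\setoffeatures}(k,d,\inputImage,c))$.

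The main obstacle I anticipate is combining the two mechanisms cleanly, since pruning and frontier underestimation interact. I would phrase the induction as a single two-sided invariant asserting that, at each node visited by the search, the maintained bounds satisfy $\mathtt{alpha}(s) \le V(s) \le \mathtt{beta}(s)$, and verify that both the max/min backup steps and each pruning cut-off preserve this invariant. Care is needed to track the direction of the inequality correctly as it passes through the alternation of minimiser and maximiser nodes, and to confirm the admissibility (genuine underestimation) of the heuristic used at non-terminal frontier nodes, so that the monotone propagation indeed yields a guaranteed lower bound on $V(s_0)$.
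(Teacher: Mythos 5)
Your overall strategy is the right one, and it is worth noting that the paper itself never proves this lemma: it is stated bare, with only the pruning remark immediately after it and the one-line justification in the Anytime Convergence subsection (results returned after \emph{fully} exploring the partial game tree up to depth $td$ are either lower bounds or converged values). Your skeleton --- the backward-induction value $V$, the identification of the $\mathtt{alpha}$/$\mathtt{beta}$ recursion with the minimax clauses, the equality $V(s_0)=val(M(k,d,\inputImage,c),\featurerobustness_{\setoffeatures}(k,d,\inputImage,c))$ via Lemma~\ref{lemma:det} and Theorem~\ref{thm:frgame}, soundness of pruning, and monotone propagation of frontier underestimates --- is precisely the argument the paper leaves implicit, and is the correct way to fill that gap.

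However, one step fails concretely. You value non-terminal frontier nodes by the accumulated distance \emph{plus} the admissible heuristic of Lemma~\ref{lemma:heuristics} and claim this underestimates $V$. That heuristic underestimates the distance from the \emph{current} input $\inputImage(s)$ to the nearest misclassified input, whereas $V(s)$ is a distance measured from the \emph{original} input $\inputImage$, and these quantities do not compose additively under $L_k$ norms. Counterexample in $L_2$: take $\inputImage$ at the origin, $\inputImage(s)=(1,0)$, and the nearest adversarial example at $(1,1)$; then $V(s)=\sqrt{2}$, while accumulated distance plus a tight admissible heuristic gives $1+1=2>\sqrt{2}$. So your frontier values can overestimate $V(s)$, and the monotone $\max/\min$ propagation then yields no lower bound at $s_0$. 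The heuristic belongs to the Admissible A* computation for the cooperative game; Algorithm~\ref{Alpha-Beta} instead values a cut-off node by the plain accumulated distance $\distance{k}{\inputImage(node)-\inputImage(s_0)}$, and that is what your proof should use: it genuinely underestimates every terminal reward reachable below the node, because atomic manipulations never undo earlier perturbations, so the distance to $\inputImage$ is non-decreasing along any game path. A secondary repair: the two-sided invariant $\mathtt{alpha}(s)\le V(s)\le \mathtt{beta}(s)$ cannot be maintained --- under frontier underestimation the backed-up $\mathtt{beta}$ values themselves become underestimates (so $V\le\mathtt{beta}$ fails), and at a node where a cut fires the inherited $\mathtt{alpha}$ typically exceeds $V$ (that is exactly why the branch may be discarded, so $\mathtt{alpha}\le V$ fails). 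The argument should instead be one-sided: every value backed up from a fully expanded, non-pruned node is at most $V$ of that node, and values returned from cut nodes are dominated by the running bound at their parent and therefore never influence $\mathtt{alpha}(s_0)$.
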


Note that, for a game state $s$, whenever $\mathtt{alpha}(s) \geq \mathtt{beta}(s')$ for some $s' = \strategy_\playerOne(s)$ is satisfied, Player~$\playerOne$ does not need to consider the remaining strategies of Player~$\playerTwo$ on state $s'$, as such will not affect the final result. This is the pruning of the game tree. % Besides, the values of the terminal nodes can be retrieved from the reward function $\reward(\strategy,\rho)$ defined in Definition~\ref{def:reward}, when a path $\rho$ has reached a state whose associated input $\inputImage_\rho'$ satisfies Equation~(\ref{equ:pathterminalstate}),  the value of the corresponding terminal node is $\distance{k}{\inputImage_\rho'-\inputImage(s_0)}$.
The Alpha-Beta Pruning algorithm is presented in Algorithm~\ref{Alpha-Beta}. 
%In the following, we explain a few key components of the algorithm. 
Many components of the algorithm are similar to those of Admissible A*, except that each node maintains two values: $\Alpha$ value and $\Beta$ value. 
For every node, its $\Alpha$ value is initialised as $-\infty$ and its $\Beta$ value is initialised as $+\infty$. For each $feature$, its $\Beta$ value is the minimum of all the $\Alpha$ values of the perturbed inputs whose manipulated dimensions are within this feature (Line~14); for $root$ in each recursion, the $\Alpha$ value is the maximum  of all the $\Beta$ values of the features (Line~15). Intuitively, $\Beta$ maintains the $\maximumsaferadius$ of each feature, while $\Alpha$ maintains the $\featurerobustness_\setoffeatures$ of an input.

\begin{algorithm}[t]
\small
\caption{Alpha-Beta Pruning for DNN Verification}\label{Alpha-Beta}
\begin{algorithmic}[1]
\State \textbf{Input:} A game model $M(k,d,\inputImage,c)$, a termination condition $tc$
\State \textbf{Output:} $val(M(k,d,\inputImage,c), \featurerobustness_{\setoffeatures}(k,d,\inputImage,c))$
\Procedure{AlphaBeta($M(k,d,\inputImage,c), tc$)}{}
\State $root \gets s_0$
\State $root.\Alpha \gets -\infty$
\State $features \gets$ \PlayerOne $(root,\text{feature extraction} =\mathsf{grey/black})$
\State \textbf{for} $feature$ in $features$: 
\State \quad \quad $feature.\Beta \gets +\infty$
\State \quad \quad $dimensions \gets$ \PlayerTwo$(feature)$
\State \quad \quad $newnodes \gets \AtomicManipulation(dimensions)$ 
\State \quad \quad \textbf{for} $node$ in $newnodes$: 
\State \quad \quad \quad \quad \textbf{if} $tc$: \textbf{return} $\distance{k}{\inputImage(node)-\inputImage(s_0)}$
\State \quad \quad \quad \quad \textbf{else}: $node.\Alpha \gets $\textsc{AlphaBeta}$(node, tc)$
\State \quad \quad $feature.\Beta \gets \min (newnodes.\Alpha)$
\State $root.\Alpha \gets \max (features.\Beta)$
\State \textbf{return} $root.\Alpha$
\EndProcedure
\end{algorithmic}
\end{algorithm}

\subsection{Anytime Convergence}

In this section, we show the convergence of our approach, i.e., that both bounds are monotonically improved with respect to the optimal values. 

\paragraph{Upper Bounds: $1/\epsilon$-Convergence and Practical Termination Condition ($tc$)} 
Because we are working with a finite game, MCTS is guaranteed to converge when the game tree is fully expanded, but the worst case convergence time may be prohibitive. %. In the worst case, it may take a very long time to converge. 
In practice, we can work with $1/\epsilon$-convergence by letting the program terminate when the current best bound has not been improved 
%by finding a less severe one 
for e.g., $\lceil 1/\epsilon \rceil$  iterations, where $\epsilon>0$ is a small real number. We can also impose time constraint $tc$, and ask the program to return once the elapsed time of the computation has exceeded $tc$. 

In the following, we show that the intermediate results from Algorithm~\ref{MCTS} can be the upper bounds of the optimal values, and the algorithm is continuously improving the upper bounds, until the optimal values are reached. 

% We have the following lemmas. 
\begin{lemma}\label{lemma:lsconvergence}
Let $\distance{k}{\inputImage'- e_{root}}$ be the returned result from Algorithm~\ref{MCTS}. For an $\finitemaximumsaferadius(k,d,\inputImage,c)$ game, we have that 
\begin{equation}\label{equ:lsconvergence}
\distance{k}{\inputImage'- e_{root}}\geq val(M(k,d,\inputImage,c), \maximumsaferadius(k,d,\inputImage,c)).
\end{equation}
Moreover, the discrepancy between $\distance{k}{\inputImage'- e_{root}}$ and $val(M(k,d,\inputImage,c), \maximumsaferadius(k,d,\inputImage,c))$ improves monotonically as the computation proceeds. 
\end{lemma}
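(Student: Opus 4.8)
The plan is to combine the exact characterisation of the game value from Theorem~\ref{thm:lsgame} with the structure of the backpropagation update for $e_{root}$. By Theorem~\ref{thm:lsgame} we have $val(M(k,d,\inputImage,c), \maximumsaferadius(k,d,\inputImage,c)) = \finitemaximumsaferadius(\tau,k,d,\inputImage,c)$, so it suffices to show two things: that the returned quantity $\distance{k}{\inputImage - e_{root}}$ is always an upper bound of $\finitemaximumsaferadius(\tau,k,d,\inputImage,c)$ (giving Inequality~(\ref{equ:lsconvergence})), and that it decreases monotonically as iterations proceed.

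First I would establish the inequality. The key observation is that at every stage of the computation $e_{root}$ is either a placeholder (when no adversarial example has yet been reached, in which case the bound holds trivially since $\finitemaximumsaferadius(\tau,k,d,\inputImage,c) \leq \depsilon$) or a $\tau$-grid adversarial example obtained as the terminal input of some completed simulation. Because every simulation proceeds by applying atomic input manipulations of fixed magnitude $\tau$, its terminal input lies in $\tauimage(\inputImage,k,d)$, and, upon termination with classification $c$, belongs to $adv_{k,d}(\inputImage,c)$. Since $\finitemaximumsaferadius(\tau,k,d,\inputImage,c)$ is by Definition~\ref{def:FMSR} the minimum distance to $\inputImage$ over exactly such $\tau$-grid adversarial examples, any particular one found must satisfy $\distance{k}{\inputImage - e_{root}} \geq \finitemaximumsaferadius(\tau,k,d,\inputImage,c)$, which yields Inequality~(\ref{equ:lsconvergence}).

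Next I would prove monotonicity by induction on the partial tree. The inductive invariant is that, at each node $o$, the stored input $e_o$ is the closest (to $\inputImage$) adversarial example among all simulations that have passed through $o$. Since for the maximum safe radius game both players are cooperative, the update rule of Equation~(\ref{equ:lsbestcase}) applies the same $\arg\min$ at every node, namely resetting $e_o$ to $e_{o'}$ with $o'=\arg\min_{o_1\in\children_o}\distance{k}{\inputImage - e_{o_1}}$; when a new simulation backpropagates its terminal input, each ancestor therefore either retains its previous $e_o$ or replaces it by an at-least-as-close adversarial example. Consequently $\distance{k}{\inputImage - e_{root}}$ is non-increasing across iterations, so the discrepancy $\distance{k}{\inputImage - e_{root}} - val(M(k,d,\inputImage,c), \maximumsaferadius(k,d,\inputImage,c))$, which is non-negative by the first part, improves monotonically; finiteness of the game (Theorem~\ref{thm:complexity}) ensures that once the tree is fully expanded the globally closest $\tau$-grid adversarial example is found and the bound coincides with the game value.

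The main obstacle will be making the backpropagation invariant precise, in particular reconciling the \emph{random} simulation phase with the \emph{deterministic} $\min$-update. I must argue that the value ultimately stored at the root is genuinely the minimum over all adversarial examples routed through the tree so far, and handle uniformly those interior nodes through which no adversarial example has yet been found, so that the $\arg\min$ in Equation~(\ref{equ:lsbestcase}) is well defined (e.g.\ by treating such nodes as carrying distance $\depsilon$) and the placeholder values neither violate the upper-bound inequality nor interrupt the monotone-decrease argument.
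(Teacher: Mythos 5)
Your proposal is correct and follows essentially the same route as the paper's proof: both rest on the observation that any completed simulation terminates at a $\tau$-grid adversarial example whose distance to $\inputImage$ upper-bounds the optimal game value (the paper phrases this as the base case of a structural induction on the partial tree), and both derive monotonicity from the $\arg\min$ backpropagation rule of Equation~(\ref{equ:lsbestcase}), under which each ancestor either retains or improves its stored distance. Your explicit detour through Theorem~\ref{thm:lsgame} and your handling of placeholder nodes via the value $\depsilon$ are refinements of, not departures from, the paper's argument.
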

\begin{proof}
Assume that we have a partial tree $\partialTree(k,d,\inputImage,c)$. We prove by induction on the structure of the tree. As the base case, for each leaf node $o$ we have that its best input $e_o$ is such that 
\begin{equation}\label{equ:lsproof1}
\distance{k}{\inputImage - e_o} \geq val(M(k,d,\inputImage,c), \maximumsaferadius(k,d,\inputImage,c))
\end{equation}
because a random simulation can always return a current best, which is an upper bound to the global optimal value. 
The equivalence holds when the simulation found an adversarial example with minimum distance.
%$o$ 
%is a terminal node with misclassification. 

Now, for every internal node $o$, by Equation~(\ref{equ:lsbestcase}) we have that 
\begin{equation}
\exists o_1\in\children_o: \distance{k}{\inputImage-e_o} \geq  \distance{k}{\inputImage - e_{o_1}} 
\end{equation}
which, together with Equation~(\ref{equ:lsproof1}) and induction hypothesis, implies that $\distance{k}{\inputImage - e_o} \geq val(M(k,d,\inputImage,c), \maximumsaferadius(k,d,\inputImage,c))$. Equation~(\ref{equ:lsconvergence}) holds since the root node is also an internal node. 

The monotonic improvement %of the gap 
can be seen from Equation~(\ref{equ:lsbestcase}), namely that, when, and only when, the discrepancy for the leaf node is improved after a new round of random simulation, can the discrepancy for the root node be improved. Otherwise, it remains the same. 
\end{proof}

Similarly, we have the following lemma for the feature robustness game.
\begin{lemma}
Let $\distance{k}{\inputImage'-e_{root}}$ be the returned result from Algorithm~\ref{MCTS}. For an $\finitefeaturerobustness_{\setoffeatures}(k,d,\inputImage,c)$ game, we have that 
\begin{equation}
\distance{k}{\inputImage'-e_{root}}\geq val(M(k,d,\inputImage,c), \featurerobustness_{\setoffeatures}(k,d,\inputImage,c))
\end{equation}
\end{lemma}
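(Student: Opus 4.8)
The plan is to prove the inequality by structural induction on the partial game tree $\partialTree(k,d,\inputImage,c)$ maintained by Algorithm~\ref{MCTS}, mirroring the argument for Lemma~\ref{lemma:lsconvergence} but now accounting for the fact that the feature robustness game is \emph{competitive} rather than cooperative. Writing $V(o)$ for the true value of the subgame rooted at a node $o$, so that Theorem~\ref{thm:frgame} gives $V(root) = val(M(k,d,\inputImage,c),\featurerobustness_{\setoffeatures}(k,d,\inputImage,c))$, the statement reduces to establishing $\distance{k}{\inputImage-e_o}\geq V(o)$ for every node $o$ and then specialising to $o=root$, which is a Player~$\playerOne$ state. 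For the base case I would use, as in Lemma~\ref{lemma:lsconvergence}, that each frontier leaf obtains its estimate from a random $\pathsample$ that plays the game to termination and returns a concrete adversarial example, whose distance is a valid current estimate consistent with the objective of the player owning the leaf.

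The inductive step splits according to the backup rules. At a Player~$\playerTwo$ node the estimate is aggregated by $\min$ over the explored children via Equation~(\ref{equ:lsbestcase}), and this case is immediate and parallels the cooperative proof: minimising over a subset of children can only over-estimate, so the induction hypothesis yields $\distance{k}{\inputImage-e_o}=\min_{o_1\in\children_o}\distance{k}{\inputImage-e_{o_1}}\geq \min_{o_1\in\children_o}V(o_1)\geq V(o)$, the last inequality holding because the explored children are a subset of all children of $o$.

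The main obstacle is the Player~$\playerOne$ (maximising) nodes, which is exactly where the competitive structure departs from Lemma~\ref{lemma:lsconvergence}: for a $\max$, taking the best over a \emph{strict} subset of children \emph{under}-estimates the true maximum, so the subset-monotonicity that made the cooperative case free now works against us. The key step I would argue is that Player~$\playerOne$'s branching is the finite feature partition $\setoffeatures(\inputImage)$ and that the $\expansion$ procedure enumerates all of these features simultaneously, so at any expanded Player~$\playerOne$ node the maximisation in Equation~(\ref{equ:frbestcase}) ranges over the \emph{complete} set of children; combined with the induction hypothesis this gives $\distance{k}{\inputImage-e_o}=\max_{o_1\in\children_o}\distance{k}{\inputImage-e_{o_1}}\geq \max_{o_1\in\children_o}V(o_1)=V(o)$. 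The delicate point to discharge carefully here is that the only genuine approximation then lives at Player~$\playerTwo$ frontier leaves reached by simulation, whose simulated distances must be shown to over-estimate the minimal reachable distance from that state, after which the backed-up value propagates the bound to the root.

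Finally, although the statement asks only for the inequality, I would note for completeness that the monotone-improvement behaviour follows exactly as in Lemma~\ref{lemma:lsconvergence}: a single $\backPropogation$ alters $e_o$ only when a freshly simulated terminal strictly improves the relevant $\arg\max$/$\arg\min$, so the root estimate moves monotonically towards $V(root)$ and otherwise stays fixed, with equality reached once the tree is fully expanded, at which point the cited convergence guarantee of MCTS applies and the backed-up value coincides with $\finitefeaturerobustness_{\setoffeatures}(\tau,k,d,\inputImage,c)=val(M(k,d,\inputImage,c),\featurerobustness_{\setoffeatures}(k,d,\inputImage,c))$ by Theorem~\ref{thm:frgame}.
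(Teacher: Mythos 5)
Your inductive skeleton — structural induction over the partial tree, min-backup at Player~$\playerTwo$ nodes via Equation~(\ref{equ:lsbestcase}), max-backup at Player~$\playerOne$ nodes via Equation~(\ref{equ:frbestcase}) handled by observing that $\expansion$ adds \emph{all} feature-children at once — is recognisably the same route as the paper's, which proves this lemma by declaring it ``similar to'' Lemma~\ref{lemma:lsconvergence} with a single remark about the max-update. The substantive difference is the invariant you propagate: you compare each node's estimate with the \emph{local} subgame value $V(o)$, whereas the paper compares every node's estimate against the single \emph{global} value $val(M(k,d,\inputImage,c), \featurerobustness_{\setoffeatures}(k,d,\inputImage,c))$. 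Under the global invariant the max-nodes are free (a maximum of quantities each $\geq$ the global value is $\geq$ it, even over a strict subset of children), and the whole burden of the proof falls on the leaves; under your local invariant both the max-nodes and the leaves carry weight, and you pay for the max-nodes with the completeness-of-expansion argument.

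The genuine gap is your base case, and the way you phrase it hides the problem. You say it remains to show that simulated distances ``over-estimate the minimal reachable distance from that state.'' That statement is trivially true — any terminal play's distance is at least the minimum over all plays — but it is not what your induction needs. At a Player~$\playerOne$ node the step $\max_{o_1\in\children_o}\distance{k}{\inputImage-e_{o_1}}\geq \max_{o_1\in\children_o}V(o_1)=V(o)$ requires $\distance{k}{\inputImage-e_{o_1}}\geq V(o_1)$ for the child attaining $\max V$, i.e.\ leaf estimates must dominate the \emph{competitive} value of the subgame, which is in general strictly larger than the minimal reachable distance. A single random rollout does not guarantee this: in $\pathsample$ both players move randomly, so the rollout's Player~$\playerOne$ moves are suboptimal for the maximiser, and the rollout can terminate at an adversarial example strictly closer than $V(o_1)$ — for instance by wandering into a fragile feature that optimal Player~$\playerOne$ play would never select. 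This is precisely the asymmetry that makes the competitive lemma harder than the cooperative one: in the cooperative game \emph{every} terminal distance upper-bounds the global minimum, which is why the base case of Lemma~\ref{lemma:lsconvergence} (``a random simulation can always return a current best, which is an upper bound to the global optimal value'') is sound there; no analogous one-line fact holds for $\finitefeaturerobustness_{\setoffeatures}(\tau, k,d,\inputImage,c)$, and your proposal does not supply a substitute (to be fair, the paper's own one-sentence adaptation does not explicitly discharge this either — its only added remark, that a Player~$\playerOne$ node's discrepancy shrinks only when \emph{all} children's discrepancies shrink, concerns monotone improvement rather than the bound itself). To close the gap you would need to restrict the claim to estimates backed up from fully expanded subtrees, or state the inequality asymptotically, at the point where the guarantee of Theorem~\ref{thm:frgame} applies.
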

\begin{proof}
The proof is similar to that of Lemma~\ref{lemma:lsconvergence}, except that, according to Equation~(\ref{equ:frbestcase}), for the nodes of Player~$\playerOne$ (including the root node) to reduce the discrepancy, i.e., $\distance{k}{\inputImage - e_o} - val(M(k,d,\inputImage,c), \featurerobustness_{\setoffeatures}(k,d,\inputImage,c))$, it requires that all its children nodes  reduce their discrepancy. 
\end{proof}

\paragraph{Lower Bounds: Gradual Expansion of the Game Tree}

The monotonicity of the lower bounds is achieved by gradually increasing the tree depth $td$. Because, in both algorithms, the termination conditions are the full exploration of the partial trees up to the depth $td$, it is straightforward that the results returned by the algorithms are either the lower bounds or the converged results.

\section{Experimental Results}\label{sec:results}

This section presents experimental results for the proposed game-based approach for safety verification of deep neural networks, focused on demonstrating convergence and comparison with state-of-the art techniques. 
% from several aspects, including feature extraction, convergence analysis of the upper and lower bounds, comparison with the state-of-the-art, the impact of manipulation magnitude $\tau$, and a few practical evaluations on large-scale safety-critical neural networks.

\subsection{Feature-Based Partitioning}\label{exp:feature}

\begin{figure}[t]
	\centering
	\includegraphics[width=1\linewidth]{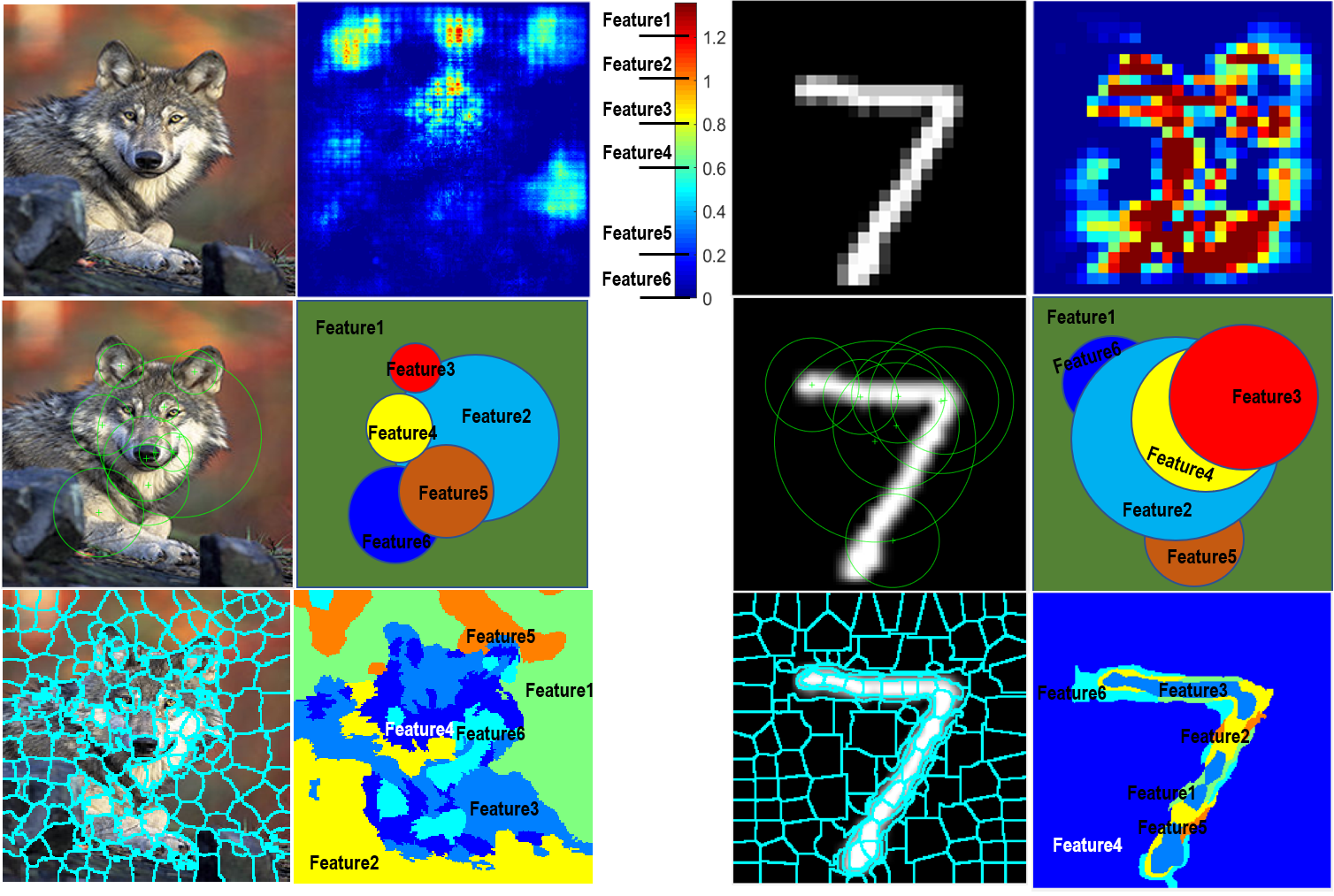}
	\caption{Examples of three feature extraction methods applied on the ImageNet (left) and MNIST (right) datasets, respectively. The top row: image segmentation via the saliency map generation method introduced in~\cite{RWSHKK2018}. Middle row: feature extraction using the SIFT approach~\cite{SIFT}. Bottom row: image partition using K-means clustering and superpixels~\cite{wei2015superpixels}.} 
	\label{fig:feature}
\end{figure}

Our game-based approach, 
where \PlayerOne determines features and \PlayerTwo selects pixels or dimensions within the selected feature, requires an appropriate feature partitioning method into disjoint sets of dimensions. 
% This section illustrates some image feature extraction methods in Figure~\ref{fig:feature}. 
% We remark that, any feature extraction or image segmentation approach can fit into the verification framework proposed in this paper. 
In Figure~\ref{fig:feature} we illustrate three distinct feature extraction procedures on a colour image from the ImageNet dataset and a grey-scale  image from the MNIST dataset.
%- as examples, in Figure~\ref{fig:feature} we compare three distinct feature extraction procedures, which represent three existing technical trends for image feature abstraction. 
Though we work with image classifier networks, our approach is flexible and can be adapted to a range of feature partitioning methods.
%We remark that the proposed game based approach in this paper is flexible, and can work with any feature extraction methods with simple adaptations.

The first technique for image segmentation is based on the saliency map generated from an image classifier such as a DNN. As shown Figure~\ref{fig:feature} (top row), the heat-map is produced by quantifying how sensitive each pixel is to the classification outcome of the DNN. By ranking these sensitivities, we separate the pixels into a few disjoint sets. %Intuitively, $\mathsf{Feature1}$ of Figure~\ref{fig:feature} has the highest impact on the network whilst $\mathsf{Feature6}$ is the least influential. 
The second feature extraction approach, shown in Figure~\ref{fig:feature} (middle row), is independent of any image classifier,
%(such as the DNNs for verification), 
but instead focuses on abstracting the invariant properties directly from  the image. Here we show 
%The middle row in Figure~\ref{fig:feature} exhibits the 
segmentation results from the SIFT method~\cite{SIFT}, which is invariant to image translation, scaling, rotation, and local geometric distortion. More details on how to adapt SIFT for safety verification on DNNs can be found in \cite{WHK2017}. The third feature extraction method is based on superpixel representation, a dimensionality reduction technique widely applied in various computer vision applications. Figure~\ref{fig:feature} (bottom row) demonstrates an example of how to generate superpixels (i.e., the pixel clusters marked by the green grids) using colour features and K-means clustering~\cite{wei2015superpixels}. 

%We remark that the proposed game based approach in this paper is flexible, and can work with any feature extraction methods with simple adaptations.

\response{
\subsection{Lipschitz Constant Estimation}\label{lip-discussion}

Our approach assumes knowledge of a (not necessarily tight) Lipschitz constant.  Several techniques can be used to estimate such a constant, including %FastLip/FastLin/Crown 
FastLin/FastLip \cite{Weng2018TowardsFC}, Crown \cite{Zhang2018EfficientNN} and DeepGO \cite{RHK2018}. For more information see the Related Work section.

The size of the Lipschitz constant is inversely proportional to the number of grid points and error bound, and therefore affects computational performance. We remark that, due to the high non-linearity and high-dimensionality of  modern DNNs, it is non-trivial to conduct verification even if the Lipschitz constant is known. 
}

\subsection{Convergence Analysis of the Upper and Lower Bounds}

\begin{figure}[t]
	\centering
	\includegraphics[width=1\linewidth]{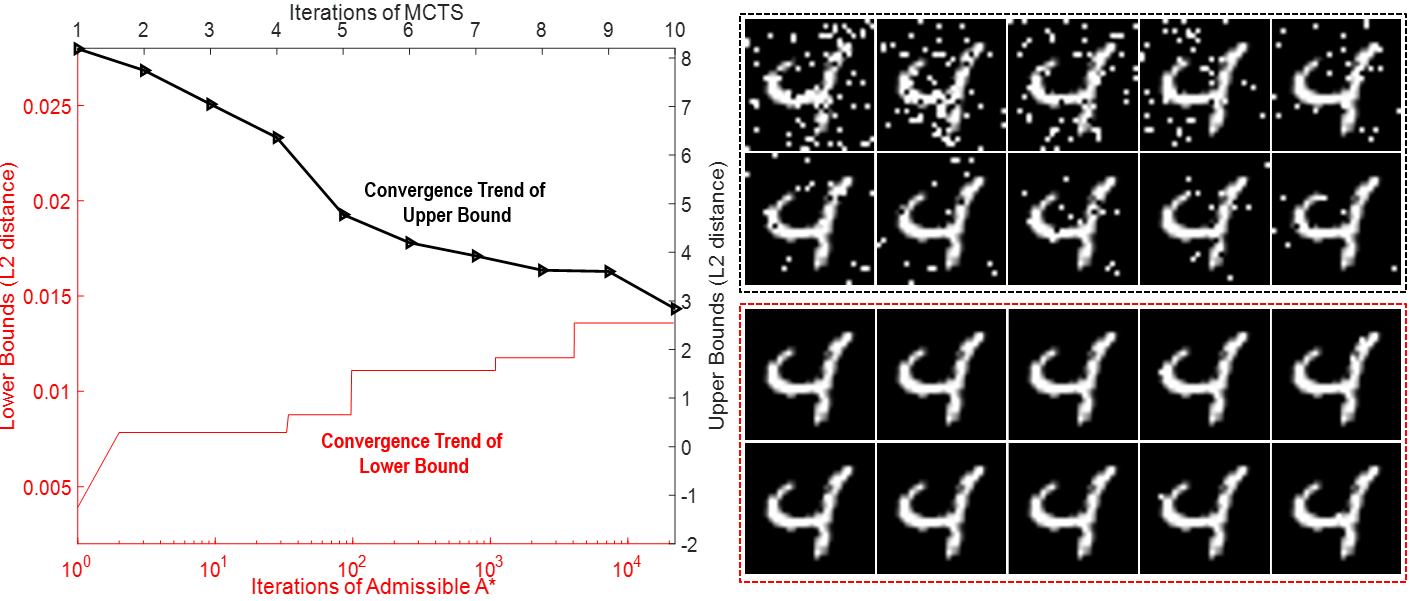}
	\caption{Convergence of \emph{maximum safe radius} in a \emph{cooperative} game with $\mathsf{grey}$-box feature extraction of a MNIST image originally classified as ``4''. Left: The convergence trends of the upper bound from MCTS and the lower bound from Admissible A* for the $\maximumsaferadius$ problem. Right: The generated \emph{adversarial} images while searching for the upper bound via MCTS, and lower boundary \emph{safe} images while searching for the lower bound via Admissible A*.} 
	\label{fig:converge_coop}
\end{figure}

\begin{figure}[t]
	\centering
	\includegraphics[width=1\linewidth]{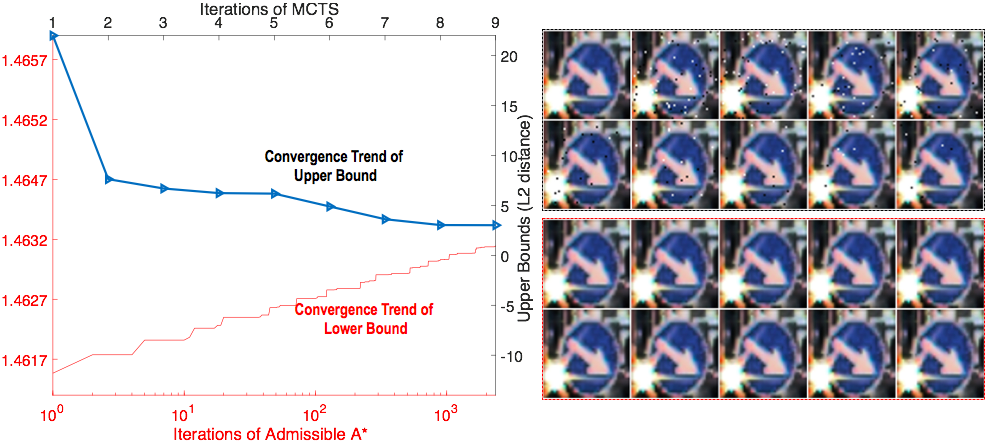}
	\caption{Convergence of \emph{maximum safe radius} in a \emph{cooperative} game with $\mathsf{grey}$-box feature extraction of a GTSRB image originally classified as ``keep right''. Left: The convergence trends of the upper bound from MCTS and the lower bound from Admissible A* for the $\maximumsaferadius$ problem. Right: The generated \emph{adversarial} images while searching for the upper bound via MCTS, and lower boundary \emph{safe} images while searching for the lower bound via Admissible A*.} 
	\label{fig:converge_coop_gtsrb}
\end{figure}

We demonstrate convergence of the bound computation for the maximum safe radius and feature robustness problems, evaluated on standard benchmark datasets \response{MNIST, CIFAR10, and GTSRB. The architectures of the corresponding trained neural networks as well as their accuracy rates can be found in \ref{app:architectures}.}

\paragraph{Convergence of $\maximumsaferadius$ in a Cooperative Game}

First, we illustrate convergence of $\maximumsaferadius$ in a \emph{cooperative} game on the MNIST and GTSRB datasets. For the MNIST image (index 67) in Figure~\ref{fig:converge_coop}, the black line denotes the descending trend of the upper bound $\upperbound_\maximumsaferadius$, whereas the red line indicates the ascending trend of the lower bound $\lowerbound_\maximumsaferadius$. Intuitively, after a few iterations, the upper bound (i.e., minimum distance to an adversarial example) is 2.84 wrt the $L_2$ metric, and the absolute safety (i.e., lower bound) is within radius 0.012 from the original image. The right-hand side of Figure~\ref{fig:converge_coop} includes images produced by intermediate iterations, with \emph{adversarial} images generated by MCTS shown in the two top rows, and \emph{safe} images computed by Admissible A* in the bottom rows.
\response{
Similarly, Figure~\ref{fig:converge_coop_gtsrb} displays the converging upper and lower bounds of $\maximumsaferadius$ in a \emph{cooperative} game on a GTSRB image (index 19).
}

As for the computation time, each MCTS iteration updates the upper bound $\upperbound_\maximumsaferadius$ and typically takes minutes; each Admissible A* iteration further expands the game tree and updates the lower bound $\lowerbound_\maximumsaferadius$ whenever applicable. The running times for the iterations of the Admissible A* vary: initially it takes minutes but this can increase to hours when the tree is %getting higher and 
larger.

\begin{figure*}[t]
\centering
    \begin{subfigure}{0.5\linewidth}
        \centering
	    \includegraphics[width=1\linewidth]{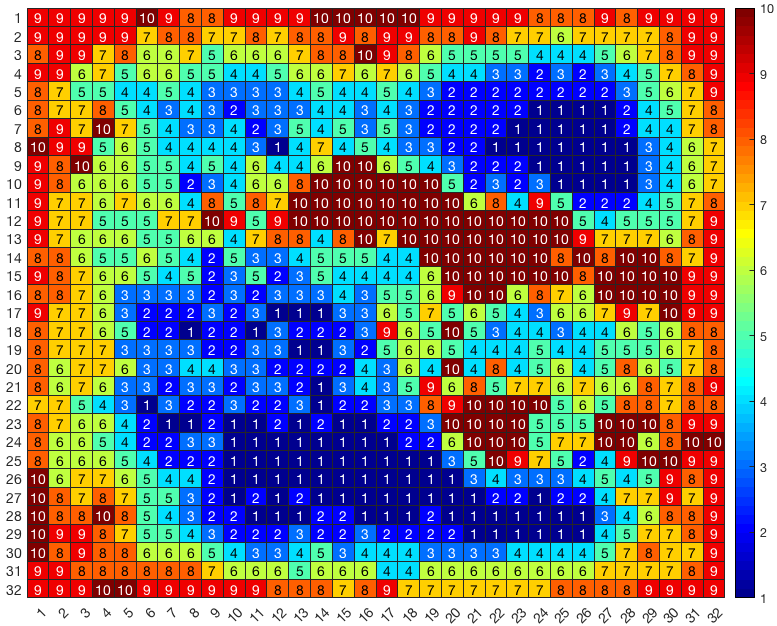}
	    \caption{CIFAR10 ``ship'' image.}
	    \label{fig:converge_comp_feature_cifar10}
 	\end{subfigure}%
 	\begin{subfigure}{0.5\linewidth}
 	    \centering
        \includegraphics[width=1\linewidth]{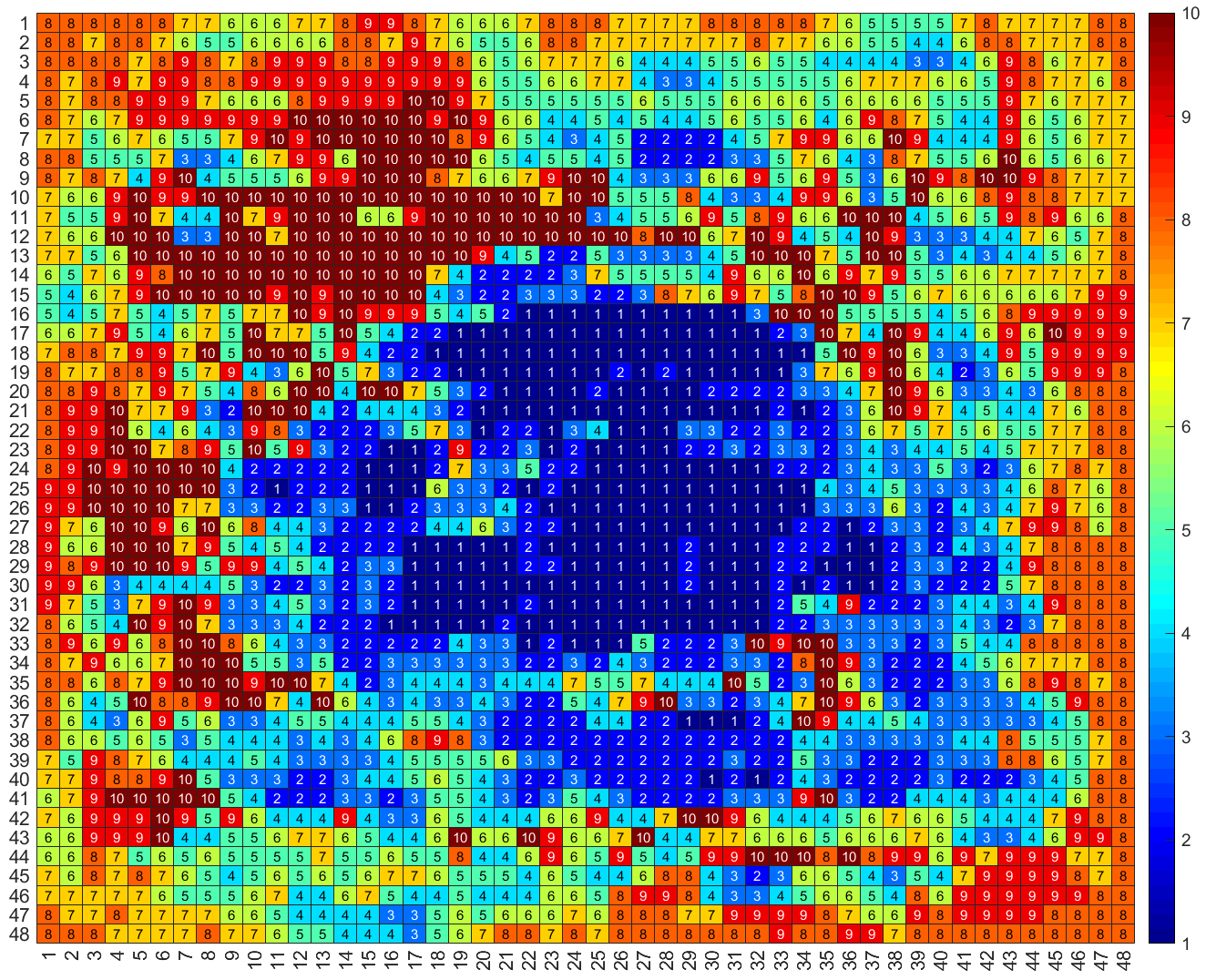}
        \caption{GTSRB ``speed limit 70'' image.}
    	\label{fig:converge_comp_feature_gtsrb}
 	 \end{subfigure}
     \caption{Illustration of the 10 features using the $\mathsf{grey}$-box feature extraction procedure: cells with the same colour indicate the same feature, and the number in each cell represents the $\mathsf{featureID}$. That is, $\mathsf{Feature1}$ in deep blue has the most salient impact, whereas $\mathsf{Feature10}$ in deep red is the least influential. (a) Features of the CIFAR10 ``ship'' image ($32 \times 32$) in Figure~\ref{fig:converge_comp_cifar10}. (b) Features of the GTSRB ``speed limit 70'' image ($48 \times 48$) in Figure~\ref{fig:converge_comp_gtsrb}.} 
     \label{fig:fig:converge_comp_feature}
 \end{figure*}

\begin{figure}[t]
	\centering
	\includegraphics[width=1\linewidth]{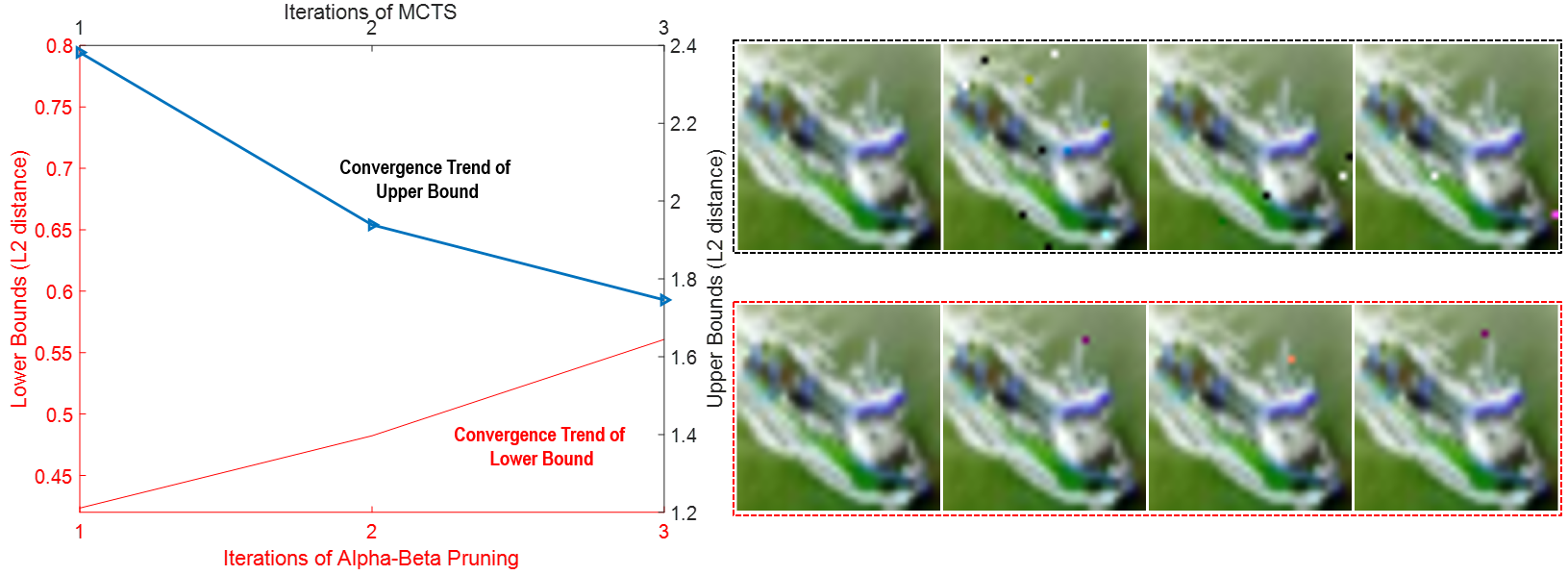}
	\caption{Convergence of \emph{feature robustness} in a \emph{competitive} game with $\mathsf{grey}$-box feature extraction of a CIFAR10 image originally classified as ``ship''. Left: The convergence trends of the upper bound from MCTS and the lower bound from Alpha-Beta Pruning for the $\featurerobustness_\setoffeatures$ problem. Right: The generated adversarial images while computing the upper bounds via MCTS, and lower bound images while computing  the lower bounds via Alpha-Beta Pruning.} 
	\label{fig:converge_comp_cifar10}
    \vspace{1em}
	\centering
	\includegraphics[width=1\linewidth]{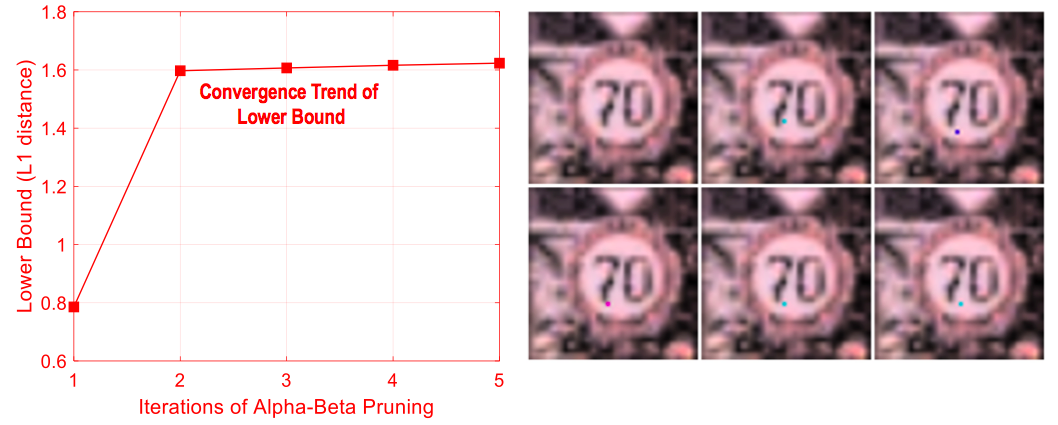}
	\caption{Convergence of \emph{feature robustness} in a \emph{competitive} game with the $\mathsf{grey}$-box feature extraction of a GTSRB image originally classified as ``speed limit 70''. Left: The convergence trends of the lower bound from Alpha-Beta Pruning for the $\featurerobustness_\setoffeatures$ problem. Right: The generated lower bound images while computing the lower bounds via Alpha-Beta Pruning.} 
	\label{fig:converge_comp_gtsrb}
\end{figure}

\paragraph{Convergence of $\featurerobustness_\setoffeatures$ in a Competitive Game}

Next we demonstrate the convergence of $\featurerobustness_\setoffeatures$ in a \emph{competitive} game on the CIFAR10 and GTSRB datasets. Each iteration of MCTS or Alpha-Beta Pruning updates their respective bound with respect to a certain feature. Note that, in each MCTS iteration, upper bounds $\upperbound_\maximumsaferadius$ of all the features are improved and therefore the maximum among them, i.e., $\upperbound_{\featurerobustness_\setoffeatures}$ of the image, is updated, whereas Alpha-Beta Pruning calculates $\lowerbound_\maximumsaferadius$ of a feature in each iteration, and then compares and updates $\lowerbound_{\featurerobustness_\setoffeatures}$ with the computation progressing until all the features are processed. 

For the CIFAR10 image in Figure~\ref{fig:converge_comp_cifar10}, the green line denotes the upper bound $\upperbound_{\featurerobustness_\setoffeatures}$ and the red line denotes the lower bound $\lowerbound_{\featurerobustness_\setoffeatures}$. The ``ship'' image is partitioned into $10$ features (see Figure~\ref{fig:converge_comp_feature_cifar10}) utilising the $\mathsf{grey}$-box extraction method. We observe that this saliency-guided image segmentation procedure captures the features well, as in Figure~\ref{fig:converge_comp_feature_cifar10} the most influential features (in blue) resemble the silhouette of the ``ship''. After 3 iterations, the algorithm indicates that, at $L_2$ distance of more than 1.75, all features are fragile, and if the $L_2$ distance is 0.48  there exists at least one robust feature. The right-hand side of Figure~\ref{fig:converge_comp_cifar10} shows several  intermediate images produced, along with the converging $\upperbound_{\featurerobustness_\setoffeatures}$ and $\lowerbound_{\featurerobustness_\setoffeatures}$. The top row exhibits the original image as well as the manipulated images with decreasing $\upperbound_\featurerobustness$. For instance, after the 1st iteration, MCTS finds an adversary perturbed in $\mathsf{Feature4}$ with $L_2$ distance 2.38, which means by far the most robust feature of this ``ship'' image is $\mathsf{Feature4}$. ($\mathsf{FeatureID}$ is retrieved from the number in each cell of the image segmentation in Figure~\ref{fig:converge_comp_feature_cifar10}.) When the computation proceeds, the 2nd iteration updates $\upperbound_{\featurerobustness_\setoffeatures}$ from 2.38 to 1.94, and explores the current most robust $\mathsf{Feature8}$, which is again replaced by $\mathsf{Feature9}$ after the 3rd iteration with lower distance 1.75. The bottom row displays the original image together with perturbations in each feature while $\lowerbound_{\featurerobustness_\setoffeatures}$ is increasing. It can be seen that $\mathsf{Feature1}$, $\mathsf{Feature2}$, and $\mathsf{Feature3}$ need only one dimension change to cause image misclassification, and the lower bound $\mathsf{Feature4}$ increases from 0.42 to 0.56 after three iterations.

For the \emph{feature robustness} ($\featurerobustness_\setoffeatures$) problem, i.e., when Player~$\playerOne$ and Player~$\playerTwo$ are competing against each other, apart from the previous CIFAR10 case where Player~$\playerTwo$ wins the game by generating an adversarial example with atomic manipulations in each feature, there is a chance that Player~$\playerOne$ wins, i.e., at least one robust feature exists. Figure~\ref{fig:converge_comp_gtsrb} illustrates this scenario on the GTSRB dataset. Here Player~$\playerOne$ defeats Player~$\playerTwo$ through finding at least one robust feature by MCTS, and thus the convergence trend of the upper bound $\upperbound_{\featurerobustness_\setoffeatures}$ is not shown. As for the lower bound $\lowerbound_{\featurerobustness_\setoffeatures}$, Alpha-Beta Pruning enables Player~$\playerTwo$ to manipulate a single pixel in $\mathsf{Feature1}$ - $\mathsf{Feature5}$ (see Figure~\ref{fig:converge_comp_feature_gtsrb}) so that adversarial examples are found. For instance, with $L_1$ distance above 0.79, $\mathsf{Feature1}$ turns out to be fragile.

Here, each iteration of MCTS or Alpha-Beta Pruning is dependent on the size of feature partitions -- for smaller partitions it takes seconds to minutes, whilst for larger partitions it can take hours. The running times are also dependent on the norm ball radius $d$. If the radius $d$ is small, the computation can always terminate in minutes.

\response{

\paragraph{Scalability wrt Number of Input Dimensions}

\begin{figure}[t]
    \centering
    \includegraphics[width=0.6\linewidth]{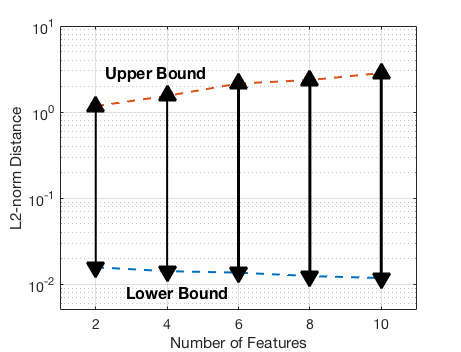}
    \caption{\response{Analysis of convergence of upper and lower bounds of the \emph{maximum safe radius} as the number of dimensions increases for the MNIST image in Figure~\ref{fig:converge_coop}, based on the $L_2$ norm. The increase in the number of features (2 to 10) corresponds to an increase in the number the input dimensions.}}
    \label{fig:scalability}
\end{figure}

%After the demonstration of the converging upper and lower bounds on the \emph{maximum safe radius} and the \emph{feature robustness} problems in the cooperative and competitive games respectively, now we present the illustration of the scalability on the bounds.
We now investigate how the increase in the number of dimensions affects the convergence of the lower and upper bounds.
From the complexity analysis of the problems in Section~\ref{sec:complexity}, we know that the theoretical complexity is in PTIME with respect to the size of the game model, which is exponential with respect to the number of input dimensions. %Hence, we evaluate how the converging bounds evolve when the input dimension increases.

%For better consistency, 
We utilise the example in Figure~\ref{fig:converge_coop}, where the convergence of the upper bound $\upperbound_\maximumsaferadius$ and the lower bound $\lowerbound_\maximumsaferadius$ in a cooperative game is exhibited on \emph{all} the dimensions (pixels) of the MNIST image (index 67). We partition the image into $10$ disjoint features using the $\mathsf{grey}$-box extraction method, and gradually manipulate features, starting from those with fewer dimensions, %instead of manipulating all the features we 
%, we start atomic manipulations from small number of features, i.e., fewer input dimensions, to large number of features, i.e., greater input dimension, 
to observe how the corresponding bound values $\upperbound_\maximumsaferadius$, $\lowerbound_\maximumsaferadius$ are affected if we fix a time budget.
To ensure fair comparison, we run the same number of expansions of the game tree, i.e., $10$ iterations of MCTS, and $1000$ iterations of Admissible A*, and plot the bound values $\upperbound_\maximumsaferadius$, $\lowerbound_\maximumsaferadius$ thus obtained. 
Figure~\ref{fig:scalability} shows the widening upper and lower bounds based on the $L_2$ norm with respect to $2$ % \sim 
to $10$ features of the image. %We observe that the interval between the bounds widens: as the input dimension increases, the upper bound $\upperbound_\maximumsaferadius$ increases (e.g., from $L_2$ distance $1.17$ on $2$ features to $2.83$ on $10$ features), while the lower bound $\lowerbound_\maximumsaferadius$ decreases (e.g., from $L_2$ distance $0.0157$ on $2$ features to $0.0118$ on $10$ features). %, i.e., the range between $\upperbound_\maximumsaferadius$ and $\lowerbound_\maximumsaferadius$ is increasing. That is, the upper and lower bounds are more difficult to converge when the size of the input dimensions increases. 
%Intuitively, if we consider more features of an image, or an image has greater dimensions, it takes more time for the bound values to converge to the maximum safe radius. 
It is straightforward to see that the conclusion also holds for the \emph{feature robustness} problem.

}

\subsection{Comparison with Existing Approaches for Generating Adversarial Examples}
\label{subsec:L0Comparison}

\begin{figure}[t]
	\centering
	\includegraphics[width=0.8\linewidth]{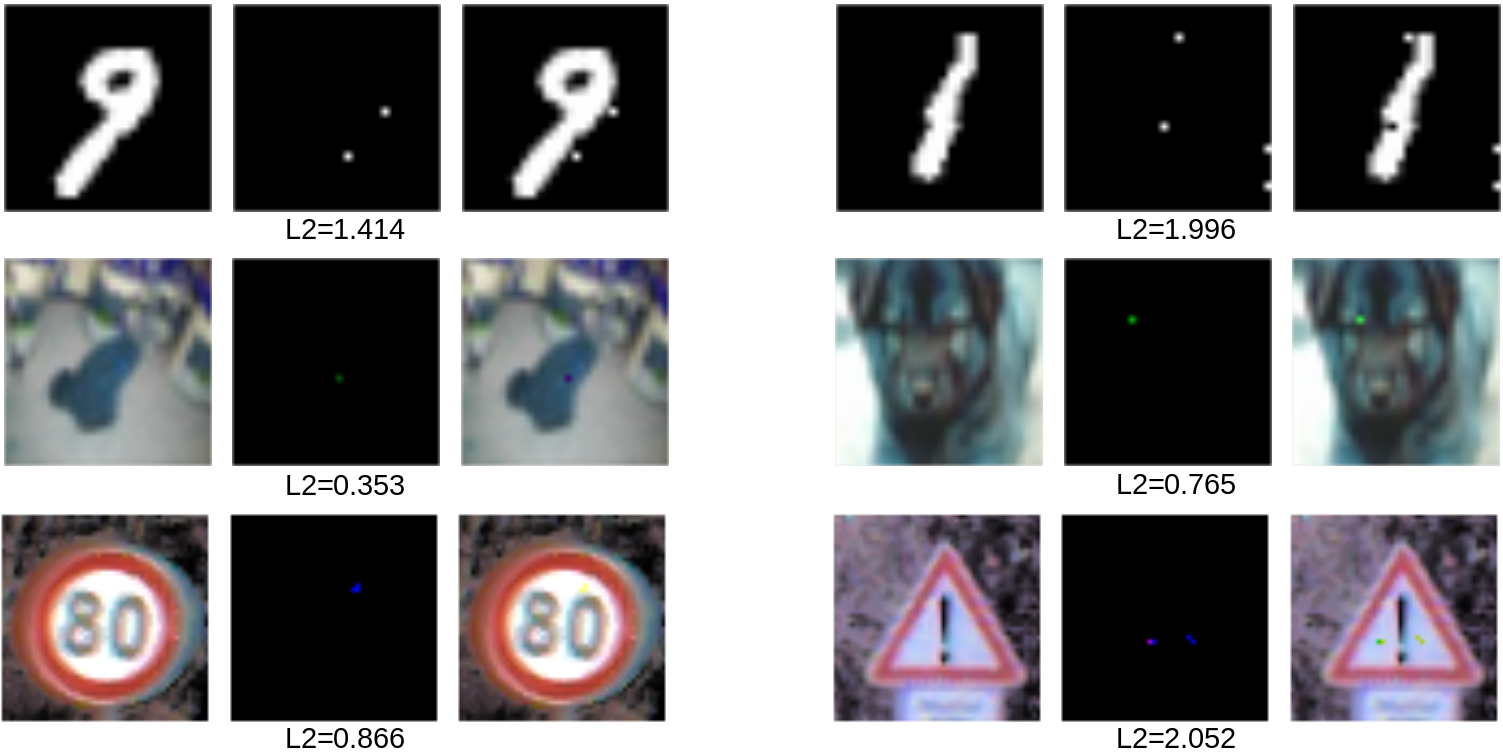}
	\caption{Examples of adversarial MNIST, CIFAR10 and GTSRB images with slight perturbations based on the $L_2$-norm. Top: ``9'' misclassified into ``8''; ``1'' misclassified into ``3''. Middle: ``frog'' misclassified into ``dog''; ``dog'' misclassified into ``cat''. Bottom: ``speed limit 80'' misclassified into ``speed limit 60''; ``danger'' misclassified into ``pedestrian crossing''.} 
	\label{fig:adv_mnist+cifar10+gtsrb}
\end{figure}

When the game is cooperative, i.e., for the \emph{maximum safe radius} problem, we can have adversarial examples as by-products. In this regard, both MCTS and A* algorithm can be applied to generate adversarial examples. Note that, for the latter, we can take Inadmissible A* (i.e., the heuristic function can be inadmissible), as the goal is not to ensure the lower bound but to find adversarial examples. By proportionally enlarging the heuristic distance $heuristic(\inputImage')$ with a constant, we ask the algorithm to explore those tree nodes where an adversarial example is more likely to be found. Figure~\ref{fig:adv_mnist+cifar10+gtsrb} displays some adversarial MNIST, CIFAR10 and GTSRB images generated by $\mathsf{DeepGame}$ after  manipulating a few pixels. More examples can be found in Figures~\ref{fig:adversary+mnist}, \ref{fig:adversary+cifar10}, and \ref{fig:adversary+gtsrb} in the Appendix.

We compare our tool $\DeepGame$ with several state-of-the-art approaches to search for adversarial examples: CW~\cite{CW-Attacks}, L0-TRE~\cite{RWSHKK2018}, DLV~\cite{DLV}, SafeCV~\cite{WHK2017}, and JSMA~\cite{JSMA}. More specifically, we train neural networks on two benchmark datasets, MNIST and CIFAR10, and calculate the distance between the adversarial image and the original image based on the $L_0$-norm. The original images, preprocessed to be within the bound $[0, 1]$, are the first 1000 images of each testing set. Apart from a ten-minute time constraint, we evaluate on correctly classified images and their corresponding adversarial examples. This is because some tools regard misclassified images as adversarial examples and record zero-value distance while other tools do not, which would result in unfair comparison. The hardware environment is a Linux server with NVIDIA GeForce GTX TITAN Black GPUs, and the operating system is Ubuntu 14.04.3 LTS.

Table~\ref{tbl:comparison} demonstrates the statistics.   %We observe that $\DeepGame$ is competitive with the optimisation-based approach CW. As for L0-TRE, by taking the same saliency map guided ($\mathsf{grey}$-box) feature extraction method, $\DeepGame$ outperforms it on the MNIST dataset. Moreover, $\DeepGame$ exceeds DLV, SafeCV, and JSMA on both datasets, though not as fast as JSMA.  
Figure~\ref{fig:adversary+mnist} and Figure~\ref{fig:adversary+cifar10} in the Appendix include adversarial examples found by these tools. Model architectures, descriptions of the datasets and baseline methods, together with the parameter settings for these tools, can be found in \ref{app:comparison}. 

\begin{table}[t]
    \caption{Comparison between our tool $\DeepGame$ and several other tools on search for adversarial examples performed on the MNIST and CIFAR10 datasets, based on the L0-norm. Here $\DeepGame$ deploys the $\mathsf{grey}$-box feature extraction method, and Inadmissible A* algorithm. We set a ten-minute time constraint and evaluate on correctly classified images and the produced adversarial examples.}
    \label{tbl:comparison}
    \center
    \def\arraystretch{1.2}
    \begin{tabular}{c|c c|c c|c c|c c}
    \toprule
        \multirow{3}{*}{$L_0$} & \multicolumn{4}{c|}{MNIST} & \multicolumn{4}{c}{CIFAR10\footnotemark} \\ \cline{2-9}
        & \multicolumn{2}{c|}{Distance} & \multicolumn{2}{c|}{Time(s)} & \multicolumn{2}{c|}{Distance} & \multicolumn{2}{c}{Time(s)} \\ \cline{2-9}
        & mean & std & mean & std & mean & std & mean & std \\ \hline
        $\mathsf{DeepGame}$ & \textbf{6.11} & \textbf{2.48} & \textbf{4.06} & \textbf{1.62} & \textbf{2.86} & \textbf{1.97} & \textbf{5.12} & \textbf{3.62} \\
        CW & 7.07 & 4.91 & 17.06 & 1.80 & 3.52 & 2.67 & 15.61 & 5.84 \\
        L0-TRE & 10.85 & 6.15 & 0.17 & 0.06 & 2.62 & 2.55 & 0.25 & 0.05 \\
        DLV & 13.02 & 5.34 & 180.79 & 64.01 & 3.52 & 2.23 & 157.72 & 21.09 \\
        SafeCV & 27.96 & 17.77 & 12.37 & 7.71 & 9.19 & 9.42 & 26.31 & 78.38 \\
        JSMA & 33.86 & 22.07 & 3.16 & 2.62 & 19.61 & 20.94 & 0.79 & 1.15 \\
    \bottomrule
\end{tabular}
\end{table}
\footnotetext{Whilst $\DeepGame$ works on channel-level dimension of an image, in order to align with some tools that attack at pixel level the statistics are all based on the number of different pixels.}

\subsection{Evaluating Safety-Critical Networks}
%\footnote{\label{fn:TACAS}This part of the results first appeared in \cite{WHK2017}.}}

\begin{figure}[t]
    \centering
    \includegraphics[width=\textwidth]{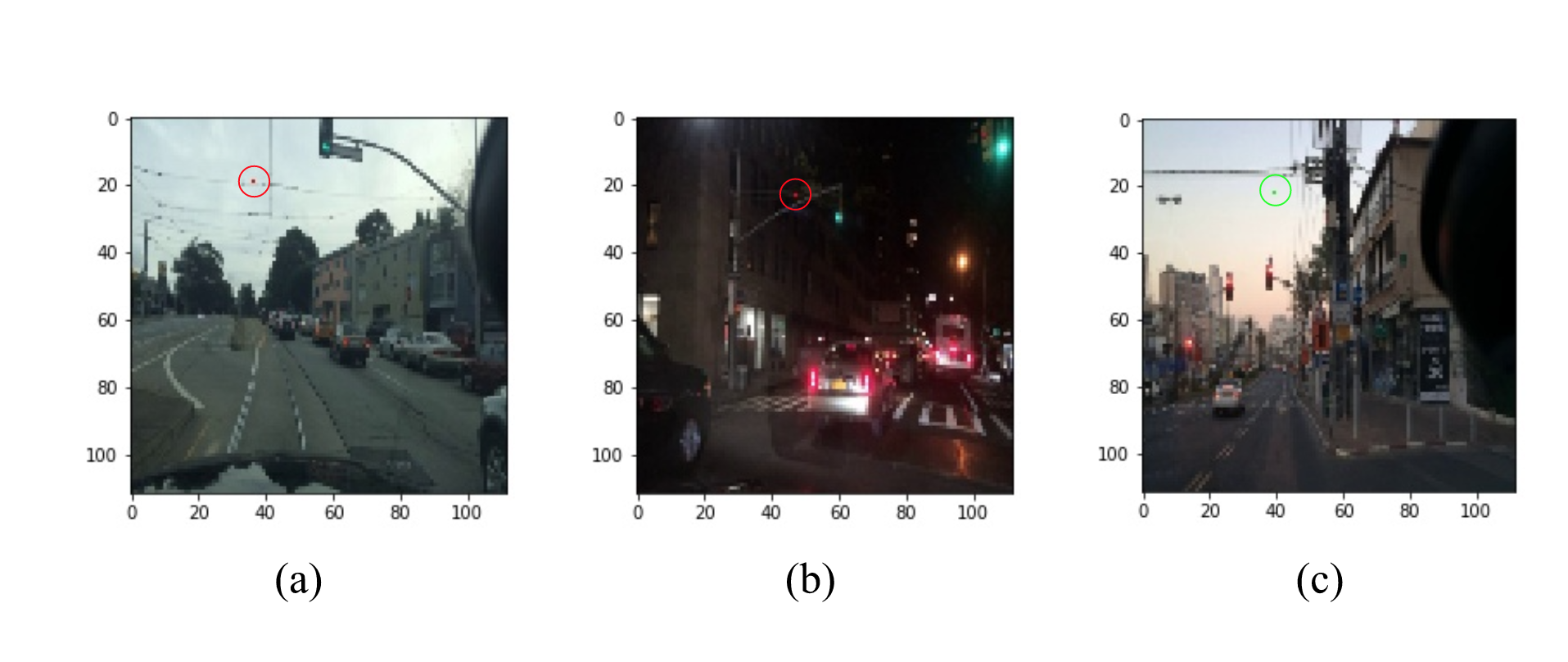}
    \caption{Adversarial examples generated on Nexar data demonstrate a lack of robustness. (a) Green light classified as red with confidence 56\% after one pixel change. (b) Green light classified as red with confidence 76\% after one pixel change. (c) Red light classified as green with 90\% confidence after one pixel change.}
    \label{fig:NexarFig1}
    
\end{figure}

We explore the possibility of applying our game-based approach  to support real-time decision making and testing,  
for which the algorithm needs to be highly efficient, requiring only seconds to execute a task. 

We apply our method to a network used for classifying traffic light images collected from dashboard cameras. The Nexar traffic light challenge~\cite{NexarData} made over eighteen thousand dashboard camera images publicly available. Each image is labelled either green, if the traffic light appearing in the image is green, or red, if the traffic light appearing in the image is red, or null if there is no traffic light appearing in the image. We test the winner of the challenge which scored an accuracy above 90\%~\cite{NexarEntry}. 
Despite each input being 37632-dimensional ($112 \times 112 \times 3$), our algorithm reports that the manipulation of an average of 4.85 dimensions changes the network classification. 
% Each image was processed by the algorithm in 0.303 seconds (which includes time to read and write images), i.e., 304 seconds are taken to test all 1000 images.
We illustrate the results of our analysis of the network in Figure~\ref{fig:NexarFig1}. Although the images are easy for humans to classify, only one pixel change causes the network to make potentially disastrous decisions, particularly for the case of red light misclassified as green. 
To explore this particular situation in greater depth, we use a targeted safety MCTS procedure on the same 1000 images, aiming to manipulate images into green. We do not consider images which are already classified as green. Of the remaining 500 images, our algorithm is able to change all image classifications to green with worryingly low distances,  namely an average $L_0$ of 3.23.

% On average, this targeted procedure returns an adversarial example in 0.21 second per image. 
% Appendix~\ref{sec:convergence} provides some other examples.

\section{Related Work}\label{sec:related}
In this section we review works related to safety and robustness verification for neural networks, Lipschitz constant estimation and feature extraction.

%We review works concerning the safety (and robustness) of deep neural networks. Instead of trying to be complete, we aim to only cover those directly related.  

\subsection{White-box Heuristic Approaches}
In \cite{propertiesOfNeuralNetworks}, Szegedy et. al. find a targeted adversarial example by running the L-BFGS algorithm, which minimises the  $L_2$ distance between the images while maintaining the misclassification.
% A refinement of the L-BFGS algorithm was given by Goodfellow et al. \cite{FGSM} and  named Fast Gradient Sign Method (FGSM). 
Fast Gradient Sign Method (FGSM) \cite{FGSM}, a refinement of L-BFGS, 
 takes as inputs the parameters $\theta$ of the model, the input $\inputImage$ to the model, and the target label $y$, and computes a linearized version of the cost function with respect to $\theta$ to obtain a manipulation direction. After the manipulation direction is fixed, a small constant value $\tau$ is taken as the magnitude of the manipulation. 
Carlini and Wagner \cite{CW-Attacks} adapt the optimisation problem proposed in \cite{propertiesOfNeuralNetworks} to obtain a set of optimisation problems for $L_0$, $L_2$, and $L_\infty$ attacks. They claim  better performance than FGSM and Jacobian-based Saliency Map Attack (JSMA) with their $L_2$ attack, in which for every pixel $x_i$ a new real-valued variable $w_i$ is introduced and then the optimisation is conducted by letting $x_i$ move along the gradient direction of $\tanh(w_i)$. 
Instead of optimisation, JSMA \cite{JSMA} uses 
a loss function to create a ``saliency map" of the image, %. This saliency map 
which indicates the importance of each %individual 
pixel on the network's decision. 
A greedy algorithm is used to gradually modify the most important pixels.
In \cite{FGSM-universal}, an iterative application of an optimisation approach (such as \cite{propertiesOfNeuralNetworks})  is conducted on a set of images one by one to get an accumulated manipulation, which is expected to make a number of inputs misclassified.  
\cite{DBLP:journals/corr/abs-1708-06939} replaces the softmax layer in a deep network with a multiclass SVM and then finds adversarial examples by performing a gradient computation. 

\subsection{White-box Verification Approaches}
Compared to heuristic search for adversarial examples, verification approaches aim to provide guarantees on the safety of DNNs. 
An early verification approach~\cite{PT2010}
 encodes the entire network as a set of constraints. The constraints can then be solved with a SAT solver. 
\cite{KBDJK2017} improves on \cite{PT2010} by handling the ReLU activation functions. The Simplex method for linear programming is extended  
to work with the piecewise linear ReLU functions that cannot be expressed using linear programming. 
The approach can scale up to networks with 300 ReLU nodes. 
In recent work~\cite{GKPB2017} the input vector space is partitioned using clustering and then the method of \cite{KBDJK2017} is used to check the individual partitions. 
DLV~\cite{DLV} uses multi-path search and layer-by-layer refinement to exhaustively explore a finite region of  the vector spaces associated with the input layer or the hidden layers, and scales to work with state-of-the-art networks such as VGG16.  
\cite{RHK2018} shows that most known layers of DNNs are Lipschitz continuous and presents a verification approach based on global optimisation. 
\response{
DiffAI~\cite{pmlr-v80-mirman18b} is a method for training robust neural networks based on abstract interpretation, but is unable to calculate the maximum safe radius $\maximumsaferadius$.}

\response{
\subsection{Lipschitz Continuity}

The idea of using Lipschitz continuity to provide guarantees on output behaviour of neural networks has been known for some time. Early work \cite{old-lip-1,old-lip-2}    %While both $\DeepGame$ and [1,2] are to verify DNNs based on the Lipschitz continuity of the DNN, they have different focuses. 
focused on small neural networks (few neurons and layers) that only contain differentiable activation functions such as the sigmoid. These works are mainly concerned with the computation of a Lipschitz constant based on strong assumptions that the network has a number of non-zero derivatives and a finite order Taylor series expansion can be found at each vertex. %, and the network is  small (with a few number of neurons). 
In contrast, our approach assumes knowledge of a (not necessarily tight) Lipschtz constant and focuses
on developing verification algorithms for realistically-sized modern networks that use ReLU activation functions, which are non-differentiable.  % based on the availability of a (not necessarily optimal) Lipschitz constant. 
}

\response{
\subsection{Lipschitz Constant Estimation}

%We remark that, due to the high non-linearity and high-dimensionality of  modern DNNs, it is non-trivial to conduct verification based on a given Lipschitz constant. 
There has been a resurgence of interest in Lipschitz constant estimation for neural networks.
The approaches of %FastLip/FastLin/Crown 
FastLin/FastLip \cite{Weng2018TowardsFC} and Crown \cite{Zhang2018EfficientNN} 
aim to estimate %MSR or 
the Lipschitz constant by considering the analytical form of the DNN layers. They are able to compute the bounds, but, in contrast to our approach, which gradually improves the bounds, are not able to improve them. Moreover, their algorithms %are ``white-box'', that is, 
require access to complete information (e.g., architecture, parameters, etc) about the DNN, while our approach is mainly ``black-box'' with a (not necessarily tight) Lipschitz constant. We remark that a loose Lipschitz constant can be computed quite easily, noting that a tighter constant can improve computational performance. %}

Although estimation of the Lipschitz constant has not been the focus of this paper, knowledge of the Lipschitz constant is important in safety verification of DNNs (i.e., estimation of $\maximumsaferadius$). %, especially the estimation of the best Lipschitz constant $K_{best}$, which is absolutely worthy of a careful treatment. 
The recent tool DeepGO \cite{RHK2018} develops a dynamic Lipschitz constant estimation method for DNNs by taking advantage of advances in Lipschitzian optimisation, through which we can construct both lower and upper bounds for $\maximumsaferadius$ with the guarantee of anytime convergence.
}

\response{
\subsection{Maximum Safe Radius Computation}
Recent approaches to verification for neural networks can also be used to compute bounds on the maximum safety radius directly for, say, $L_\infty$, by gradually enlarging the region. These works can be classified into two categories. The first concentrates on estimation of the lower bound of $\maximumsaferadius$ using various techniques. For example, FastLin/FastLip \cite{Weng2018TowardsFC} and Crown \cite{Zhang2018EfficientNN} employ layer-by-layer analysis to obtain a tight lower bound by linearly bounding the ReLU (i.e., FastLin/FastLip) or non-linear activation functions (i.e., Crown).  Kolter\&Wong \cite{Kolter2018ProvableDA,NIPS2018_8060}, on the other hand, calculates the lower bound of $\maximumsaferadius$ by taking advantage of robust optimisation. The second category aims to adapt abstract interpretation techniques to prove safety. For example, DeepZ and DeepPoly \cite{Singh:2019:ADC:3302515.3290354} (also including $AI^2$) adapt abstract interpretation to perform layer-by-layer analysis to over-approximate the outputs for a set of inputs, so that some safety properties can be verified, but are unable to prove absence of safety.
A fundamental advantage of $\DeepGame$ compared to those works is that it can perform anytime estimation of $\maximumsaferadius$ by improving both lower and upper bounds monotonically, even with a loose Lipschitz constant. Moreover, $\DeepGame$ provides a theoretical guarantee that it can reach the exact value of $\maximumsaferadius$. % as long as sufficient computational time is given.

Maximal radius computation for DNNs has been addressed directly in \cite{Carlini2017ProvablyMA,Tjeng2017EvaluatingRO}, where the entire DNN is encoded as a set of constraints, which are then solved by searching for valid solutions to the corresponding satisfiability or optimality problem. The approach of \cite{Carlini2017ProvablyMA} searches for a bound on the maximal safety radius
by utilising Reluplex and performing binary search, and \cite{Tjeng2017EvaluatingRO} instead considers an  MILP-based approach. In contrast, our approach utilises a Lipschitz constant to perform search over the input space. Further, our approach only needs to know the Lipschitz constant, whereas \cite{Carlini2017ProvablyMA,Tjeng2017EvaluatingRO} need access to the DNN architecture and the trained parameters.
}

\subsection{Black-box Algorithms}
The methods in \cite{papernot2017practical} evaluate a network  by generating a synthetic data set, training a surrogate model, and then applying white box detection techniques on the model. 
\cite{BlindSearchPaper} randomly searches the vector space around the input image for changes which will cause a misclassification. 
It shows that in some instances this method is 
efficient and able to indicate where salient areas of the image exist. \cite{RWSHKK2018} and this paper are black-box, except that grey-box feature extraction techniques are also considered in this paper to partition the input dimensions. L0-TRE~\cite{RWSHKK2018} quantifies the global robustness of a DNN, where global robustness is the expectation of the maximum safe radius over a testing dataset, through iteratively generating lower and upper bounds on the network's robustness.

\subsection{Feature Extraction Techniques}

Feature extraction is an active area of research in machine learning, where the training data  are usually sampled from real world problems and high dimensional. Feature extraction techniques reduce the dimensionality of the training data by using a set of features to represent an input sample. 
%This leads to manageable training data for leaning algorithms. 
In this paper, feature extraction is used not for reducing dimensionality, but rather to partition the input dimensions into a small set of features. Feature extraction methods can be classified into those that are specific to the problem, such as the SIFT~\cite{SIFT}, SURF~\cite{SURF} and superpixels~\cite{wei2015superpixels}, which are specific to the object detection, and general methods, such as the techniques for computing for every input dimension its significance to the output \cite{LL2017}. The significance values can be visualised as a saliency map, as done in e.g., \cite{JSMA,RWSHKK2018}, but can also be utilised as in this paper to  partition the input dimensions.

\section{Conclusion}\label{sec:concl}

In this work, we present a two-player turn-based game framework for the verification of deep neural networks with provable guarantees. We tackle two problems, \emph{maximum safe radius} and \emph{feature robustness}, which essentially correspond to the absolute (pixel-level) and relative (feature-level) safety of a network against adversarial manipulations. Our framework can deploy various feature extraction or image segmentation approaches, including the saliency-guided $\mathsf{grey}$-box mechanism, and the feature-guided $\mathsf{black}$-box procedure. We develop a software tool $\DeepGame$, and demonstrate its applicability on state-of-the-art networks and dataset benchmarks. Our experiments exhibit converging upper and lower bounds, % thus support provable guarantees, as well as advantage 
and are competitive compared to existing approaches to search for adversarial examples. 
Moreover, our framework can be utilised to evaluate robustness of networks in safety-critical applications such as traffic sign recognition in self-driving cars. %~\cite{WHK2017}.

\paragraph{Acknowledgements}
Kwiatkowska and Ruan are supported by the EPSRC Mobile Autonomy Programme Grant (EP/M019918/1). Huang gratefully acknowledges NVIDIA Corporation for its support with the donation of GPU, and is partially supported by NSFC (NO. 61772232). Wu is supported by the CSC-PAG Oxford Scholarship.

\bibliography{Bibliography}

\newpage
\appendix

\section{Experimental Setting for Comparison between $\DeepGame$ with Existing Works}
\label{app:comparison}

\subsection{Model Architectures}
\label{app:architectures}

\begin{table}[h!]
\caption{Architectures of the MNIST, CIFAR-10, and GTSRB models.}
\label{tbl:MNIST+CIFAR10}
\centering
\begin{tabular}{ l | l | l }
\toprule
Layer Type & MNIST & CIFAR-10/GTSRB \\
\hline
Convolution + ReLU & 3 $\times$ 3 $\times$ 32 & 3 $\times$ 3 $\times$ 64 \\ 
Convolution + ReLU & 3 $\times$ 3 $\times$ 32 & 3 $\times$ 3 $\times$ 64 \\ 
Max Pooling & 2 $\times$ 2 & 2 $\times$ 2 \\ 
Convolution + ReLU & 3 $\times$ 3 $\times$ 64 & 3 $\times$ 3 $\times$ 128 \\ 
Convolution + ReLU & 3 $\times$ 3 $\times$ 64 & 3 $\times$ 3 $\times$ 128 \\ 
Max Pooling & 2 $\times$ 2 & 2 $\times$ 2 \\ 
Flatten &  &  \\ 
Fully Connected + ReLU & 200 & 256 \\ 
Dropout & 0.5 & 0.5 \\ 
Fully Connected + ReLU & 200 & 256 \\ 
Fully Connected + Softmax & 10 & 10 \\ 
\bottomrule    
\end{tabular}
\end{table}

\response{
\begin{itemize}
    \setlength\itemsep{0em}
	\item Training Accuracy: 
	\begin{itemize}
	    \setlength\itemsep{0em}
		\item MNIST (99.99\% on 60,000 images)
		\item CIFAR-10 (99.83\% on 50,000 images)
	\end{itemize}
	\item Testing Accuracy:
	\begin{itemize}
	    \setlength\itemsep{0em}
		\item MNIST (99.36\% on 10,000 images)
		\item CIFAR-10 (78.30\% on 10,000 images)
	\end{itemize}
\end{itemize}
}

\subsection{Datasets}

We perform the comparison on two datasets: MNIST and CIFAR-10. They are standard benchmark datasets for adversarial attack of DNNs, and are widely adopted by all these baseline methods.

\begin{itemize}
    \item MNIST dataset\footnote{\url{http://yann.lecun.com/exdb/mnist/}}: an image dataset of handwritten digits, which contains a training set of 60,000 examples and a test set of 10,000 examples. The digits have been size-normalised and centred in a fixed-size image.
    \item CIFAR-10 dataset\footnote{\url{https://www.cs.toronto.edu/~kriz/cifar.html}}: an image dataset of 10 mutually exclusive classes, i.e., `airplane', `automobile', `bird', `cat', `deer', `dog', `frog', `horse', `ship', `truck'. It consists of 60,000 $32\times32$ colour images, with 50,000 for training, and 10,000 for testing. 
\end{itemize}

\subsection{Baseline Methods}
We choose a few well-established baseline methods that can perform state-of-the-art $L_0$ adversarial attacks. Their codes are all available on GitHub.

\begin{itemize}
    \item CW\footnote{\url{https://github.com/carlini/nn_robust_attacks}}: a state-of-the-art adversarial attacking method, which models the attacking problem as an unconstrained optimization problem that is solvable by Adam optimizer in TensorFlow.
    \item L0-TRE\footnote{\url{https://github.com/TrustAI/L0-TRE}}: a tensor-based robustness evaluation tool for the L0-norm, and its competitive L0 attack function is compared in this work.
    \item DLV\footnote{\url{https://github.com/TrustAI/DLV}}: an untargeted DNN verification method based on exhaustive search and MCTS.
    \item SafeCV\footnote{\url{https://github.com/matthewwicker/SafeCV}}: a feature-guided safety verification work based on SIFT features, game theory, and MCTS.
    \item JSMA\footnote{\url{https://github.com/bethgelab/foolbox}}: a targeted attack based on $L_0$-norm, so we perform this attack in a sense that the adversarial examples are misclassified into all classes except the correct one.
\end{itemize}

\subsection{Parameter Setting}

MNIST and CIFAR-10 use the same settings, unless separately specified.
\begin{itemize}
    \setlength\itemsep{0em}
    
    \item $\DeepGame$
    \begin{itemize}
    \setlength\itemsep{0em}
        \item gameType = `cooperative'
        \item bound = `ub'
        \item algorithm = `A*'
        \item eta = (`L0', 30)
        \item tau = 1
    \end{itemize}
    
    \item CW: 
    \begin{itemize}
    \setlength\itemsep{0em}
        \item targeted = False
        \item learning\_rate = 0.1
        \item max\_iteration = 100
    \end{itemize}
    
    \item L0-TRE: 
    \begin{itemize}
    \setlength\itemsep{0em}
        \item EPSILON = 0.5
        \item L0\_UPPER\_BOUND = 100
    \end{itemize}
    
    \item DLV: 
    \begin{itemize}
    \setlength\itemsep{0em}
        \item mcts\_mode = ``sift\_twoPlayer"
        \item startLayer, maxLayer = -1
        \item numOfFeatures = 150
        \item featureDims = 1
        \item MCTS\_level\_maximal\_time = 30
        \item MCTS\_all\_maximal\_time = 120
        \item MCTS\_multi\_samples = 5 (MNIST), 3 (CIFAR-10)
    \end{itemize}

    \item SafeCV:
    \begin{itemize}
    \setlength\itemsep{0em}
        \item MANIP = max\_manip (MNIST), white\_manipulation (CIFAR-10)
        \item VISIT\_CONSTANT = 1
        \item backtracking\_constant = 1
        \item simulation\_cutoff = 75 (MNIST), 100 (CIFAR10)
        \item small\_image = True
    \end{itemize}
    
    \item JSMA: 
    \begin{itemize}
    \setlength\itemsep{0em}
        \item bounds = (0, 1)
        \item predicts = `logits'
    \end{itemize}
\end{itemize}

\subsection{Platforms}

\begin{itemize}
 \item Hardware Platform:
    \begin{itemize}
        \setlength\itemsep{0em}
        \item NVIDIA GeForce GTX TITAN Black
        \item Intel(R) Core(TM) i5-4690S CPU @ 3.20GHz $ \times $ 4
    \end{itemize}
 
 \item Software Platform: 
    \begin{itemize}
        \setlength\itemsep{0em}
        \item Ubuntu 14.04.3 LTS
        \item Fedora 26 (64-bit)
        \item Anaconda, PyCharm
    \end{itemize}
\end{itemize}

\subsection{Adversarial Images}

\begin{figure}[t]
	\centering
	\includegraphics[width=0.8\linewidth]{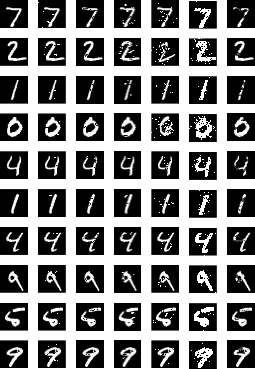}
	\caption{Comparison of the generated adversarial MNIST images (see Section~\ref{subsec:L0Comparison}). From left to right: original image, $\DeepGame$ (this paper), CW, L0-TRE, DLV, SafeCV, and JSMA.}
	\label{fig:adversary+mnist}
\end{figure}

\begin{figure}[t]
	\centering
	\includegraphics[width=0.9\linewidth]{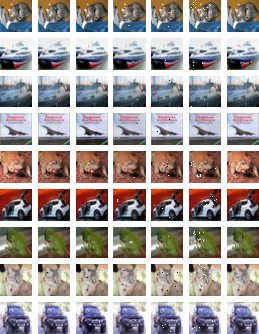}
	\caption{Comparison of the generated adversarial CIFAR10 images (see Section~\ref{subsec:L0Comparison}. From left to right: original image, $\DeepGame$ (this paper), CW, L0-TRE, DLV, SafeCV, and JSMA.}
	\label{fig:adversary+cifar10}
\end{figure}

\begin{figure}[t]
	\centering
	\includegraphics[width=1\linewidth]{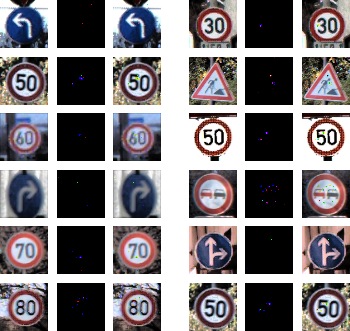}
	\caption{Adversarial GTSRB images generated by our tool $\mathsf{DeepGame}$.}
	\label{fig:adversary+gtsrb}
\end{figure}

\end{document}